\theoremstyle{definition}
\newtheorem{definition}{Definition}[section]
\newtheorem{theorem}{Theorem}[section]
\newtheorem{lemma}[theorem]{Lemma}
\newtheorem{conjecture}[theorem]{Conjecture}
\newtheorem{corollary}[theorem]{Corollary}
\newtheorem{proposition}[theorem]{Proposition}
\newtheorem{assumption}[theorem]{Assumption}
\newcommand{\HRule}{\rule{\linewidth}{0.2mm}}
\begin{document}

\begin{titlepage}
  \begin{sffamily}
  \begin{center}
    \textsc{\LARGE Université de Nantes}\\[1cm]
    \HRule \\[0.4cm]
    { \huge \bfseries On Computability, Learnability and Extractability of Finite State Machines from Recurrent Neural Networks \\[0.4cm] }
    \HRule \\[1cm]
\textbf{\huge{Reda Marzouk}} \\
 \vspace{2cm} 
 \Large{A disseration} \\
 submitted to the Department of Computer Science \\
 Nantes University \\
 In Partial Fulfilment of the requirements \\ 
 for the MSc degree \\ 
 \emph{Learning and Natural Language Processing}\\
 \vspace{1cm}
 \emph{Advisor:} Colin de la Higuera \\ \emph{Tutor:} Solen Quiniou
    \\[2cm]
    \vfill
    {\large \nth{3} July 2020}
  \end{center}
  \end{sffamily}
\end{titlepage}

\newpage

\begin{abstract}
This work aims at shedding some light on connections between finite state machines (FSMs) and Recurrent Neural Networks(RNNs). Examined connections in this master's thesis are threefold: The extractability of finite state machines from Recurrent Neural Networks, learnability aspects, and computationnal links. With respect to the former, the longstanding \emph{clustering hypothesis} of RNN hidden state space when trained to recognize regular languages  was explored, with new insights into this hypothesis through the lens of the generalization theory of Deep Learning are provided. As for learnability, an extension of the active learning framework better suited to the problem of approximating RNNs with FSMs is proposed, with the aim of better formalizing the problem of RNN approximation by FSMs. Theoretical analysis of two possible scenarios in this framework were performed. With regard to computability, new computational results on the distance and the equivalence problem between RNN trained as language models and different types of weighted finite state machines were given. \\
\end{abstract}
\vspace{1cm}
\renewcommand{\abstractname}{Résumé}
\begin{abstract}
Ce travail a pour but de mettre la lumiére sur les connexions existantes entre les machines à états finits et les réseaux de neurones reccurents. Les connexions examinées sont de trois ordres: L'extractabilité des machines à état fini à partir des réseaux de neurones recurrents, les aspects liés à l'apprenabilité, et les liens computationnelles. Concernant l'extractabilité, l'hypothése de \emph{partitionnement} de l'espace des états des RNNs lorsqu'ils sont entrainés à reconnaitre un langage régulier a été exploré, à la lumiére de la théorie de generalisation des modéles d'apprentissage profond. A propos de l'apprenabilité, une extension du cadre d'apprentissage actif plus convenable au probléme d'extraction des machines à états finis est proposée. L'analyse théorique de deux des scénarios possibls dans ce cadre d'apprentissage a été effectuée. Concernant la calculabilité, de nouveaux résultats autour de la distance et le probléme d'équivalence entre les RNNs entrainés comme modéle de langage et différents types de machines à états finis pondérés y sont présentés. 
\end{abstract}

\newpage 
\renewcommand{\abstractname}{Acknowledgments}
\begin{abstract}
\vspace{0.4cm}
$~~~$After five months of internship in LS2N within the TALN team in Nantes, three of which conducted remotely due to the COVID-19 epidemy, I couldn't begin this section without expressing my deepest gratitude to Colin de La Higuera for his support and his generosity during this difficult period. Besides his academic guidance, his fruitful recommendations about readings and paths to explore during our online discussions, I discovered a person with high human qualities. Surely, his personality has been inspiring to me.  \\
  
  $~~~$I would like to thank my dear friend Timothée for his help during this period, and his sense of dedication for others. Rarely have I met a person filled with such a strong inner obligation of solidarity toward others. \\ 
  $~~~$My thanks go also to Walid Ben Romdhane and Victor Connes for their supervision, and insighful advices during the "AI for the common good" Hackathon held in Paris. Without their encouragement from day one of the Hackathon till its end, I doubt we could have made our achievement by winning the WoW award of the Hackhathon. \\
  
  $~~~$Finally, although the quarantine due to the epidemy didn't give me a chance to familiarize with many members of the TALN team, I've still been capable to touch their kindness. I wish to see you soon in good health. 
\end{abstract}

\newpage

\tableofcontents

\section*{Introduction}
\label{chap:introduction}
   $~~~$Deep Learning (DL) represents one of the major breakthroughs Machine Learning Community has made during the last decades. Its baffling capacity to generate models fitting complex phenomena appears to be unlimited. Yet, putting aside the pragmatic argument of performance, there is one non-negligeable issue that DL still suffers from: the highly abstract nature of the way it represents knowledge. The incapacity of explaining decisions made by a predictive model raises concerns at different levels (ethical, societal, economic etc.). As long as this severe limitation is not resolved for DL, promising domains of application can't be reached by the DL technology. Although theoretically-oriented, the work presented in this thesis is fundamentally motivated by these aforementioned issues. \\ 
   $~~~$ This severe limitation of DL recently raised the attention of the DL community to develop tools and techniques to make DL models more transparent and their decisions more explainable. Those methods can be framed into two paradigms: the first targets the architecture of DL networks themselves where the ultimate hope is to design DL machines that boast both the outstanding predictive power of traditional architectures, and a high level of understandability \cite{Wang19}\cite{Zhang18}. The second tackled the problem in an \emph{ad-hoc} manner where the objective is to provide algorithmic and visualization tools that aims at shedding light on the type of knowledge encoded in already trained DL models designed to perform a specific task \cite{Karpathy16}\cite{Zeiler14}. The work developed in this thesis borrows its context from the latter paradigm. The ultimate goal of this work is to explore the perspective of designing \emph{compilers} capable of converting the \emph{"DL programs"} encoded in models' weight parameters into \emph{symbolic programs} encoded into strings of symbols easier to analyze, \emph{"debug"}, and process with classical tools of model checking used in software engineering. \\ 
    $~~~$ Toward this objective, our main focus in this thesis will be on Recurrent Neural Networks (RNNs), and their connections with Finite State Machines (FSMs). The angle of study will be threefold: (1) Computability properties of RNNs when projected to different types of Finite State Machines, (2) Exploring techniques to extract symbolic representations from Recurrent Neural Networks, (3) The last point, which is directly related to the former concerns the problem of learning finite state machines from trained Recurrent Neural models. If those three major properties are well understood, then a huge step toward the aforementionned utlimate goal would be already taken. \\
    $~~~$ This master's thesis is organized in four parts. Part I will provide the reader with general definitions and notions she/he needs to know to follow up the rest of this thesis. A general definition of a \emph{sequential machine} which encompasses all types of machines we'll see in this report will be given. Types of machines of interest in this work will be defined including weighted/binary finite state machines, and RNNs trained as recognizers of languages, or as language models. In part II, we will focus our attention on \emph{computationnal} properties of RNNs, and will derive results of computability and complexity of important problems connecting Weighted Finite State Machines (WFSAs) to RNNs trained as language models (RNN-LMs). In part III, we'll discuss methods explored in the literature examining perspectives of extracting finite state machines from RNNs. We'll analyze carefully the experimental setup designed to evaluate and compare the performance of algorithms designed to perform such task. In light of our experiments and new insights from the theory of generalization in Deep Learning theory, we shall give explanations of the longstanding \emph{"clustering effect hypothesis"} of RNNs trained to recognize regular languages. In the last part, we'll address the problem of learnability of Recurrent Neural Networks by Deterministic Finite State Automata. We'll propose an extension of the active learning framework that is best suited to our task. Two scenarios under this novel framework will be explored with derived sample complexity bounds.

\section{General Definitions and Notations}
\label{chap:example} 
In this part, we'll present general definitions and notions that will serve as a prerequisite for the rest of this master's thesis. A general definition of sequential machines, that encompasses all types of machines encountered in this work, will be given and precise definitions of machines that will carefully examined in the rest of this thesis will be provided, including Deterministic Finite Automata (DFA), Different types of weighted automata (WA) and different classes of RNNs. \\ 
In all this work, we'll borrow our terminology from formal language theory. However, we note that concepts and treatments presented here are general, and doesn't exclude, for instance, Natural Language Processing whose community often prefers to use a terminology closer to the field of linguistics. With that said, we present our general notation: \\
$\Sigma$ will denote any any-non finite finite alphabet. The set of all finite strings is denoted by $\Sigma^{*}$. The set of all strings whose size is equal (resp. greater than or equal) to $n$ is denoted by $\Sigma^{n}$(resp. $\Sigma^{\geq n}$). For any string $w \in \Sigma^{*}$, the size of $w$ is denoted by $|w|$, and its $n$-th symbol by $w_{n}$. The prefix of length $n$ for any string $w \in \Sigma^{\geq n}$ will be referred to as $w_{:n}$. The symbol $\$$ denotes a special marker. The symbol $\Sigma_{\$}$ will refer to the set $\Sigma \bigcup \{\$\}$.
\subsection{Definition of a general sequential machine}
 Let's begin this section by the following definition:
\begin{definition}
 A sequential machine M is defined by the triplet $M = <\Sigma,Q,q_{0},f,g,\mathbb{H}>$, where: 
 \begin{itemize}
     \item $\Sigma$ is the alphabet,
     \item $Q, \mathbb{H}$ are two arbitrary sets called respectively the state space and the output space,
     \item $q_{0} \in Q$ is called the initial state,
     \item $f: Q \times \Sigma \rightarrow Q$ is called the transition function,
     \item $g: Q \rightarrow \mathbb{H}$ is called the output function,
 \end{itemize}
 When $M$ runs on a string $w=w_{1}..w_{n}$, It produces the output: $$M(w) = g(f(w_{n},f(w_{n-1}..,f(w_{1},h))$$ 
 \end{definition}

This general definition covers many interesting classes of \emph{computational devices} employed to process languages, such as deterministic and non-deterministic finite state machines, pushdown autamata \footnote{The state space of a pushdown automaton is the cartesian product of a finite state space with the set of all strings that represents its stack content}, different variants of weighted automata and RNNs. It is worth mentioning that, for the sake of its generalizability, the definition given above doesn't give any aspects related to the computability of functions. Those issues are left to be adressed when a particular type of a suquential machine is defined. \\ 
In this work, we'll mainly encounter three types of sequential machines: RNNs, Deterministic finite automata (DFAs) and different variants of weighted automata(WA) including probabilistic deterministic finite state machines (PDFA), probabilistic finite automata (PFA) and classical weighted automata\footnote{Weighted automata are more rigorously defined over semi-rings \cite{mohri01}. In this work, whenever weighted automata are mentioned, they refer implictly to those defined on the traditional semi-ring.}. We will give in the rest of this section formal definitions of all these structures. But before going any further, It is important to mention that a sequential machine can be run on a word in a different manner than the way It's specified in the previous definition. For instance, alternative ways will consist at augmenting the machine with a one-way linear-bounded tape where the state reached by each prefix during the run is stored, then decodes the history of states to produce the output. This is a similar mechanism of how RNN Encoder-Decoder works. Another alternative that is of great interest in our work, since it mimicks the behovior of RNNs trained as language models which will be carefully examined in this work, is when the machine is augmented with a linear-bounded tape where the output produced on each prefix of the running word is stored, then the machine produces the final output by applying a function on the content of the tape. We note that sequential machines equipped with state history tape are at least as powerful as those equipped with an output history tape since they can easily simulate this latter by simply applying the output function on each element of the tape.
We give a definition of a one-way linear bounded state machine augmented with an output history tape:
\begin{definition}
 A sequential machine $M$ augmented with a one-way linear bounded tape called the \emph{output history tape} is a sequential machine $M = <\Sigma_{\$},Q,q_{0},f,g,\mathbb{H}>$ equipped with a linear-bounded tape $T$ that stores a sequence of elements of $\mathbb{H}$, and a function $g: \cup_{n=1}^{\infty} \mathbb{H}^{n} \leftarrow \mathbb{H}$. \\
 $M$ runs on input $w = \$ w_{1}..w_{n}$ as follows:\\
 $~~~~$ - At each iteration i,
     process $f(w_{i},f(w_{1,i-1}))$ and stores the result in the tape $T$, \\
 $~~~~$ - The final output of the function is given by $g(T)$:  
\end{definition}

\subsection{Finite State Machines}
\subsubsection{Deterministic Finite State Automata}
 Deterministic Finite State Automata (DFA) is a type of sequential machines that recognizes languages generated by regular grammars \cite{Hopcroft06}. It represents one of the simplest types of symbolic machines, as its expressive power is restricted to regular languages, the lowest class of languages in the Chomsky hierarchy. However, its efficient computational properties in terms of parsing strings and checking its properties in polynomial time (such as, the equivalence property between two DFA. More of this will be discussed in the next chapter) makes it widely used in model checking problems. Deterministic Finite State automata are defined formally as follows:  
\begin{definition}
 A deterministic finite state automaton, denoted $\mathcal{A}$, is a sequential machine $<\Sigma, Q,q_{0}, f,g,\mathbb{H}>$ where $Q$ is any finite set and $\mathbb{H} = \{0,1\}$. \\ 
 The size of the set $Q$ is called the size of the automaton, denoted also as $|\mathcal{A}|$.
\end{definition}
Throughout this thesis, the class of all deterministic finite automata will be denoted as $\mathcal{DFA}(\Sigma)$. The family of DFAs whose size is equal to $n$ will be denoted as $\mathcal{DFA}_{n}(\Sigma)$.
\subsubsection{The minimal DFA and the Nerode equivalence}
Before wrapping up this part about finite state automata, we need to discuss one of the most important characterizations of regular languages in terms of deterministic finite automata that we will encounter in many parts of this thesis: the notion of Nerode equivalence class  of a language.
\begin{definition}
   Let $L$ be a language defined over a finite alphabet $\Sigma$. The Nerode relation associated to $L$ is defined as:
   $$w \equiv v ~~ \iff~~ (\forall z \in \Sigma^{*}:~~ wz \in L \iff vz \in L) $$
\end{definition}
It's easy to show that this relation defines an equivalence relation. Myhill and Nerode gave a characterization of the class of regular languages in terms of the Nerode relation. We present the Myhill-Nerode theorem: 
\begin{theorem}{(Myhill-Nerode Theorem)}
 $L$ is a regular language if and only if the number of equivalence classes of its Nerode relation is finite. 
\end{theorem}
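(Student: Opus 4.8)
The plan is to prove the two implications separately, leaning on the DFA characterization of regular languages introduced above. For convenience I would first extend the single-step transition $f : Q \times \Sigma \rightarrow Q$ to a function $f^{*}: Q \times \Sigma^{*} \rightarrow Q$ reading an entire string, defined inductively by $f^{*}(q,\varepsilon) = q$ and $f^{*}(q, wa) = f(f^{*}(q,w), a)$; this is just the state reached by the machine in the run described in the general definition.

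For the forward direction ($\Rightarrow$), I would assume $L$ is regular and fix a DFA $\mathcal{A} = \langle \Sigma, Q, q_{0}, f, g, \{0,1\}\rangle$ recognizing it, with $Q$ finite. The idea is to compare the Nerode relation $\equiv$ with the coarser \emph{same-state} relation $\sim_{\mathcal{A}}$ given by $w \sim_{\mathcal{A}} v \iff f^{*}(q_{0}, w) = f^{*}(q_{0}, v)$. The key step is to show $w \sim_{\mathcal{A}} v \Rightarrow w \equiv v$: if two strings reach the same state, then appending any $z \in \Sigma^{*}$ sends them to a common state, i.e.\ $f^{*}(q_{0}, wz) = f^{*}(q_{0}, vz)$ by determinism of $f$, hence $g(f^{*}(q_{0},wz)) = g(f^{*}(q_{0},vz))$ and so $wz \in L \iff vz \in L$. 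Consequently every Nerode class is a union of $\sim_{\mathcal{A}}$-classes; since $\sim_{\mathcal{A}}$ has at most $|Q|$ classes, the number of Nerode classes is finite.

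For the converse ($\Leftarrow$), I would assume $\equiv$ has finitely many classes and build a DFA directly from them. Writing $[w]$ for the class of $w$, I set $Q := \{[w] : w \in \Sigma^{*}\}$, $q_{0} := [\varepsilon]$, the transition $f([w], a) := [wa]$, and the output $g([w]) := 1$ if $w \in L$ and $0$ otherwise. The delicate point, and the part I expect to require the most care, is \emph{well-definedness} of $f$: I must verify that $w \equiv v$ implies $wa \equiv va$, that is, that $\equiv$ is stable under right concatenation (a right congruence). This follows by applying the defining condition of $\equiv$ to continuations of the form $az$. Well-definedness of $g$ is the special case $z = \varepsilon$.

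Finally I would close with a short induction on $|w|$ establishing $f^{*}(q_{0}, w) = [w]$ for this automaton, whence $\mathcal{A}(w) = g([w]) = 1 \iff w \in L$; as $Q$ is finite by hypothesis, $\mathcal{A}$ is a genuine DFA and $L$ is regular. The main obstacle is conceptual rather than computational: isolating the right-congruence property of the Nerode relation and recognizing that it is exactly what licenses a consistent state-transition structure. Everything else reduces to routine verification.
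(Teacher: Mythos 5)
Your proof is correct and complete: the forward direction via the same-state relation $\sim_{\mathcal{A}}$ refining the Nerode relation, and the converse via the quotient construction, with the right-congruence property ($w \equiv v \Rightarrow wa \equiv va$, proved by applying the definition to continuations $az$) correctly identified as the crux of well-definedness. Note that the paper itself states this classical theorem without proof, citing Myhill and Nerode, so there is no internal argument to compare against; yours is the standard canonical proof and would serve as a valid justification of the statement as used in the thesis.
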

This theorem has an important implication on both automata theory and automata identification. With respect to this latter, classical rule-based algorithms to identify an automaton rely heavily on the notion of Nerode relation \cite{higu10}. As for the former, a direct corollary of the theorem is the existence of a canonical (unique) minimal automaton that recognizes a given regular language $L$. Each state of this canonical automaton maps directly to a Nerode equivalence class. In the rest of this thesis, we'll often use the term \emph{Nerode equivalence} of two prefixes, and It should be understood as two strings encoding a path that leads to the same state from the initial state of the canonical DFA.

\subsection{Weighted Finite Automata}
In this section, we'll give a brief description of weighted versions of finite automata. A general weighted finite automaton is defined as follows: 
\begin{definition}
 A weighted automaton of size $n \in \mathbb{N}$ is a parametrized sequential machine where: $Q= \mathbb{R}^{n}$, and $\forall \sigma \in \Sigma:~ f(q,\sigma) = A_{\sigma}q$ where $A_{\sigma} \in \mathbb{R}^{n \times n}$, and $g(q) = \beta^{T} Q$, where $\beta \in \mathbb{R}^{n}$.
\end{definition}
The class of weighted automata is parametrized by $\{q_{0},\{A_{\sigma}\}_{\sigma \in \Sigma},\beta\}$. To have a homogeneous notation with the litterature of weighted finite automata, we'll use the symbol $\alpha$, instead of $q_{0}$ when treating weighte automata in the remaining of this thesis. \\ 
  Besides this algebraic characterization, an intuitive way to think about WFAs is through its graphical representation: WFAs can be graphically represented as weighted versions of nondeterministic finite automata, where transitions between states, denoted $\delta(q,\sigma,q')$ where $q,q' \in Q$ represents states of the WFA are labeled with a rational weight $T(q,\sigma,q')$, and each of its nodes $q \in Q$ is labeled by a pair of rational numbers $(I(q),P(q))$ that represents respectively the initial-state and final-state weight of $q$. WFAs model  weighted languages where the weight of a string $w$ is equal to the sum of the weights of all paths whose transitions encode the string $w$. The weight of a path $p$ is calculated as the product of the weight labels of all its transitions, multiplied by the initial-state weight of its staring node and the final-state weight of its ending node. 
\subsubsection{Probabilistic Finite Automata}
     A probabilistic finite automaton(PFA) is a sub-class of WFA designed in a way to encode stochastic languages. For this, the initial state vector is constrained to represent a probability distribution (the initial probability distribution). And, each entry in the transition matrix $A_{\sigma}(i,j)$ represents the probability to emit a symbol $\sigma$ from a state $i$, then transition to state j. Interestingly, PFAs are proven to be equivalent to Hidden Markov Models (HMMs), and the construction of equivalent HMMs from PFAs and vice versa can be done in polynomial time \cite{Vidal05}. The formal definition of a PFA is given as follows: 
\begin{definition}
  A probabilistic finite state automaton of size $n$ is a weighted automaton with the following constraints:  
  \begin{itemize}
      \item $\sum_{i=0}^{n} q_{0}[i] = 1 $, and $\forall i \in [n]:~q_{0}[i] \geq 0$,
      \item Each transition matrix is stochastic, that is $\forall{i} \in [n]:~\sum_{j=1}^{n} A[i,j] = 1$, and $A[i][j] \geq 0$
  \end{itemize}
\end{definition}
Again, to gain intuitions about this definition, we return to the graphical representation of a PFA:  A PFA is a WFA with two additional constraints: First, the sum of initial-state weights of all states is a valid probability distribution over the state space. Second, for each state, the sum of weights of its outcoming edges added to its finite-state weight is equal to $1$. This additional constraint restricts the power of PFAs to encode probabilistic languages \cite{Thollard05}, which makes it useful for representing language models. Interestingly, PFAs are proven to be equivalent to Hidden Markov Models (HMMs), and the construction of equivalent HMMs from PFAs and vice versa can be done in polynomial time \cite{Vidal05}. The deterministic version of PFAs, a.k.a \textbf{Deterministic Probabilistic Finite Automata (DPFA)}, enforces the additional constraint that for any state $q$, and for any symbol $\sigma$ there is at most one outgoing transition labeled by $\sigma$ from $q$. \\

\subsection{Recurrent Neural Networks}
Recurrent  Neural  Networks  and  their  different  variants represent an important family of Deep Learning models suitable to learning   tasks with sequential data. Generally, in language processing, there are three ways to train and use Recurrent Neural Networks in linguistic tasks: 
\begin{itemize}
    \item \textbf{RNNs as language recognizers:} In this type of tasks, RNNs are trained as a classifier of strings. Examples of using RNNs as language recognizers include Sentiment Analysis \cite{bakhta17}, text classification \cite{liu16} etc. 
    \item \textbf{RNNs as language Model (RNN-LM):} RNNs trained as language models assigns a weight that accounts for the importance of a word in the language. Their counterpart in the finite state automata world are Weighted Finite Automata and different variants,
    \item \textbf{RNNs as Seq2Seq models:} RNNs functioning as Seq2Seq models take as an input a sequence of symbols/words and output another sequence of symbols/words. Their analog counterpart in the world of symbolic machines are transducers. They find their applications in many language tasks such as Machine Translation\cite{mahata18}\cite{barone17}, speech processing \cite{graves13}    
\end{itemize}
In this work, we shall mainly focus on the first two families. We'll leave the Seq2Seq case for furture research. \\ 
The difference between those types of RNN machines lies fundamentally in their architecture and how they process data, and not in the type of the used transition function (also called RNN cells in the litterature of RNNs). Following our previous definitions of sequential machines, RNNs trained as language recognizers belong to the family of simple sequential machines given in definition 1.1.1. RNNs trained as language models are those augmented with a one-way linear bounded output tape (definition 1.1.2). \\
We shall present in the rest of this section different types of traditionally used RNN cells. We categorize them into two families: simple RNN cells, and gated RNN celles. 
\subsubsection{Simple first and second-order RNN cells}
Simple RNN cells are the oldest type of cells employed for Recurrent Neural Networks. In the family of simple RNN cells, there are generally two main sub-families of cells: First-order and Second-order. However, we draw the reader's attention that some works proposed higher-order types of RNNs \cite{soltani16}. But the first and second order remains the most widely chosen in practice. Due to their simplicity and better \emph{"trainability"} capacities, simple first-order RNN cells are more privileged in practice than second-order ones. \\ 

$\bullet$ \textbf{First-order RNN cells:} The transition function of a simple first-order RNN cell takes generally the following form: 
\begin{equation*}
       q_{t+1} = \phi(Wq_{t}+ b_{\sigma} + c),~~~ q_{t},~b_{\sigma},~c \in \mathbb{R}^{d},~W \in \mathbb{R}^{d \times d}
\end{equation*}
where $\phi$ is the activation function that determines the type of the cell, and $W,b$ are the parameters of the cell. $W$ is called the transition matrix, and $b_{\sigma}$ is the embedding vector of the input symbol $\sigma$ being injected to the cell, and $c$ is a bias vector. The hyper-parameter $d$ represents the dimension of the hidden state space, and determines the architecture of the network. 

$\bullet$ \textbf{Second-order RNN cells:} The main difference between first-order and second-order RNN cells resides in the type of interaction between neurons of the hidden state vector, and neurons of the input symbol. In a first-order RNN, the interaction of the input symbol with the hidden state vector is additive, while in the second-order, It's multiplicative. The general form of a second-order RNN cell is given as: 
\begin{equation*}
    q_{t+1}[i] = \phi(\sum\limits_{j,k} W_{ijk} q_{t}[j]b_{\sigma}[k]),~~~W \in \mathbb{R}^{d \times d \times L},~q_{t} \in \mathbb{R}^{d},~b_{\sigma} \in \mathbb{R}^{L}
\end{equation*}
where $\phi$ is the activation activation, $L$ is the size of the embedding space of symbols and $W$ is a tensor of order 3 that captures the interaction between each neuron of the hidden state vector $q_{t}$, and each neuron of the injected input symbol $\sigma$.

\textbf{Note on activation functions.} \emph{Historical works about RNN machines were dedicated to the analysis of RNNs using simple threshold functions, or linear-saturated functions \cite{Siegelmann95}. However, in practice, this type of activation function suffers from severe non-differentiability issues which makes it harder(or, even impossible in the case of the hard threshold function) to train with a gradient-descent type of algorithm. Nowadays, the most widely used activation functions are: The sigmoid function,~the hyperbolic tangent function and the rectified linear unit function (ReLu) function. For the sake of completeness, we provide the general expression for each of these activation functions: 
\begin{itemize}
    \item The sigmoid function: $\phi(x) = \frac{1}{1 + \exp(-x)}$,
    \item The hyperbolic tangent function: $\phi(x) = \frac{\exp(x) - \exp(-x)}{\exp(x) + \exp(-x)}$,
    \item The Rectified Linear Unit function: $\phi(x) = \max(0, x)$
\end{itemize} 
}
\subsubsection{RNN cells with gated memory}
A severe issue from which suffer RNNs equipped with simple cells is the difficulty of training with long strings, where It fails to capture long-range dependencies \cite{bengio93}. Authors in \cite{hochreiter97} proposed a solution to this problem by equipping RNN cells with memory gates capable of capturing long-range dependencies. The two most widely known types of RNN cells that follow this paradigm of RNN architecture design are LSTMs \cite{hochreiter97}, and GRUs \cite{cho14}. \\
$\bullet$ \textbf{LSTM cells.} An LSTM cell is equipped with three types of gates: the input gate, the forget gate and the output gate. The intuition behind using those cells is that each of which is responsible of controlling the amount of information that flows in the network as it runs on a particular string. The \textit{input gate} is responsible of deciding on the amount of information the cell will capture about the input. The \textit{forget gate} controls the amount of information the network will capture about the past. And the \textit{output gate} controls the information that will be transmitted to the cell output. The general equations of an LSTM cell running on a symbol $\sigma$ are given as follows: 
\begin{equation*}
    \begin{aligned}
     i_{t+1} &= \phi(W_{i}q_{t} + U_{i}b_{\sigma}),&~~(The~~input~~ gate), \\
     f_{t+1} &= \phi(W_{f}q_{t} + U_{f}b_{\sigma}),&~~(The~~forget~~ gate), \\
     o_{t+1} &= \phi(W_{o}q_{t} + U_{o}b_{\sigma}),&~~(The~~output~~ gate), \\
    \end{aligned}
\end{equation*}
where $\phi$ is the sigmoid activation function. \\ 
After the update of the memory gates, their values are aggregated by a memory cell $c_{t}$, and the hidden state vector flowing out of the LSTM cell is constructed using the following expressions:
\begin{equation*}
    \begin{aligned}
      c_{t+1} &= c_{t} \otimes f_{t} + tanh(W.b_{\sigma} + U.q_{t}) \otimes i_{t} \\
      h_{t+1} &= tanh(c_{t} \otimes o_{t})
    \end{aligned}
\end{equation*}

$\bullet$ \textbf{GRU cells.} The architecture of the LSTM is highly complex, which makes it difficult to train efficiently. In an attempt to simplify the architecture, while still keeping the property of capturing long-range dependencies, authors in \cite{cho14} proposed a \emph{simplified} version architecture of cells with memory gates, called GRU. This architecture uses only two gates, the \textit{reset gate} denoted $r_{t}$, and the \textit{update gate} denoted $z_{t}$. The general equations of a GRU cell is given as:
\begin{equation*}
    \begin{aligned}
    z_{t+1} &= \phi(W_{z}q_{t} + U_{z}b_{\sigma}) \\
    r_{t+1} &= \phi(W_{r}q_{t} + U_{r}b_{\sigma}) \\
    q_{t+1} &= (1-z_{t+1})(\tanh(W (q_{t} \otimes r_{t+1}) + b_{\sigma})  + z_{t+1}()
    \end{aligned} 
\end{equation*}

\subsection{Conclusion}
 The objective of this section was to make the reader familiar with notions treated in the rest of this thesis. Definitions and explanations of machines, either finite state machines or RNN machines, were provided. In the following, those objects will receive examination from three angles: the computational aspect, the extractability of one from the other (in our case, FSMs from RNNs), and learnability. In the next section, we shall focus on the first aspect, the computational one. We'll address the question of computing the distance and the equivalence between particular families of finite state machines and Recurrent Neural Networks.

\section{Computational problems between Weighted Finite State Machines and Recurrent Neural Networks trained as Language Models}
 $~~~$Whenever a new class of computational devices that recognize languages is proposed in the literature, it is usually analyzed from two aspects: its properties from the angle of formal language theory, and its computational properties. The former refers to its expressiveness power projected to known classes of formal languages, its closure properties etc.. The latter focuses more on its practical usefulness. A perfect computational device would be one that is both highly expressive and efficiently \emph{"evaluatable"} when simulated on a computer. Unfortunately, it's usually the case that some trade-off needs to be made. To illustrate this point, we take the example of Deterministic Finite State Automata: Finite State Automata are known to have weak expressive power when projected to the classical Chomsky hierarchy of formal languages. Yet, it's still one of the most widely privileged \emph{computational devices} (or, one of its variants) for model checking in software engineering. The fact is that in this particular practical case, efficient evaluation is the most important requirement. \\
 $~~~$One of the most important computational problems that computer scientists address about any kind computational device is the equivalence problem. For instance, It is known that checking equivalence between deterministic finite automata can be done in polynomial time, while the non-deterministic counterpart is PSPACE-Complete \cite{Hopcroft06}. When it comes to different classes of Weighted Automata, due to the quantitative nature of language they recognize, other questions arise such as the complexity of measuring distance between quantitative languages they process, the optimization problem related to find the highest weighted string.\\ 
 $~~~$ As we already stated in the introduction, the ultimate goal of this work is to design \emph{compilers} of recurrent neural networks to finite state machines. If such is the goal, then a question arises about the computational complexity of evaluating equivalence and/or distance between an RNN and an FSM. In this section, we'll provide some theoretical insights on this issue. Results presented in this section are mainly taken from our paper that can be found in \cite{marzouk20}. \\
 Our main focus in this section will be on quantitative languages. More precisely, we will restrict our analysis on the class of first-order RNNs with $ReLu$ cells trained as language models (RNN-LMs) and different classes of weighted automata (PDFAs/PFAs/WFAs). The choice of $ReLu$ is not arbitrary. In fact, due to its nice piecewise-linear property and its wide use in practice, the $ReLu(.)$ function is a first choice to analyze theoretical properties of RNN architectures. Analyzing the case of RNNs with highly non-linear activation functions (e.g. the sigmoid, the hyperbolic tangent etc.) is left for future research. \\ 
  In the first part of this section, we'll present a formal definition of a RNN machine trained as a language model with $ReLu$ as an activation function. Since the treatment of RNNs in this part of the thesis is from the computational point of view, this definition will differ from the classical algebric description of a Recurrent Neural Network  generally adopted in the literature. In the next part, we'll give insights into the construction that will serve us to prove many results in the following section. Afterwards, we shall give computational results of three problems: equivalence between RNN-LMs and PDFAs/PFAs/WFAs, computing the distance between them, and deciding if a PFA approximates well a first-order RNN-LM with $ReLu$ used as an activation function.
  
  \subsection{Definition of a first-order RNN-LM as a computational model}
  A formal definition of a first-order RNN-LM from a computational angle is given as follows:
  \begin{definition}\cite{Chen18}
 A First-order weighted RNN Language model is a weighted language $f: \Sigma^{*} \rightarrow \mathbb{R}$ and is defined by the tuple $<\Sigma, N, h^{(0)}, \sigma, W, (W')_{\Sigma_{\$}}, E, E'>$ such that:
 \begin{itemize}
     \item $\Sigma$ is the input alphabet,
     \item $N$ the number of hidden neurons,
     \item $\sigma: \mathbb{Q} \rightarrow \mathbb{Q}$ is a computable activation function, 
     \item $W \in \mathbb{Q}^{N \times N}$ is the state transition matrix,
     \item $\{W'_{\sigma}\}_{\sigma \in \Sigma_{\$}}$, where each $W'_{\sigma} \in \mathbb{Q}^{N}$ is the embedding vector of the symbol $\sigma \in \Sigma_{\$}$,
     \item $O \in \mathbb{Q}^{\Sigma_{\$} \times N}$ is the output matrix,
     \item $O' \in \mathbb{Q}^{\Sigma_{\$}}$ the output bias vector.
 \end{itemize}
 The computation of the weight of a given string $w$ (where $\$$ is the end marker) by $R$ is given as follows. \\
 (a) Recurrence equations:
 $$ h^{(t+1)} = \sigma(W.h^{(t)} + W'_{w_{t}})$$
 $$ E_{t+1} = O h^{(t+1)} + O'$$
 $$ E'_{t+1} = softmax_{2}(E_{t+1})$$
 (b) The resulting weight:
 $$ R(w) = \prod\limits_{i=0}^{|w|+1} E'_{i}$$
 where $w_{0} = w_{|w|+1} = \$$
\end{definition}
Notice that, in order to avoid technical issues, we used softmax base 2 defined as: $softmax_{2}(x)_{i} = \frac{2^{x_{i}}}{\sum\limits_{j=1}^{n} 2^{x_{j}}}$ for any $x \in \mathbb{R}^{d}$ instead of the standard softmax in the previous definition.  
In the following, hidden units of the network will be designated by lowercase letters $n_{1},n_{2},..$, and their activations at time $t$ by $h_{n}^{t}$. Also, we denote by $\mathcal{R}_{\sigma}$ the class of RNN-LMs when $\sigma$ is the activation function. For example, an important class of RNN-LMs that will be used extensively in the following development of this section is $\mathcal{R}_{ReLu}$. In terms of complexity, we assume that parameters of a network $R$ is finite-precision\footnote{In all RNN constructions presented in the rest of this chapter, we use only finite-precision RNNs, which justifies our assumption.}, in which case the size of a network $R$ will be equal to $O(N
^{2})$, where $N$ is the number of hidden neurons. 
 \subsection{On Turing Completeness of Recurrent Neural Networks}
 The main tool for proving results in this section relies on the Turing Completeness of the class of Recurrent Neural Networks with $ReLu$ as an activation function proved by Siegelemann et al. in \cite{Siegelmann95}. We dedicate this section to give details about the construction, and we will end this section by giving a characterization of the halting problem\footnote{The halting problem is defined as follows: Given a Turing Machine M, and a string $w$, decide whether the machine $M$ halts on $w$ is undecidable.} that connects it to the class of RNN-LMs. This result will be used in the next section to prove results presented there. \\ 
 The main intuition of Siegelmann \textit{et al.}'s work is that, with an appropriate encoding of binary strings, a first-order RNN with a saturated linear function can readily simulate a stack data structure by making use of a single hidden unit. For this, they used 4-base encoding scheme that represents a binary string $w$ as a rational number: $Enc(w) = \sum\limits_{i=1}^{|w|} \frac{w_{i}}{4^{i}}$. Backed by this result, they proved than any two-stack machine can be simulated by a first-order RNN with linear saturated function, where the configuration of a running two-stack machine (i.e. the content of the stacks and the state of the control unit) is stored in the hidden units of the constructed RNN.  Finally, given that any Turing Machine can be converted into an equivalent two-stack machine (the set of two-stack machines is Turing-complete \cite{Hopcroft06}), they concluded their result. \\ 
 In the context of our work, two additional remarks need to be made about Siegelmann's construction: - First, although the class of first-order RNNs examined in their work uses the saturated linear function as an activation function, as raised in \cite{Chen18}, their result is generalizable to the ReLu activation function (or, more generally, any computable function that is linear in the support [0,1])? - Second, although not mentioned in their work, the construction of the RNN from a Turing Machine is polynomial in time. In fact, on one hand, the number of hidden units of the constructed RNN is linear in the size of the Turing Machine, and the construction of the transition matrices of the network is also linear in time. On the other hand, notice that the 4-base encoding map $Enc(.)$ is also computable in linear time. \\ 
 In light of these remarks, we are now ready to present the following theorem:
 \begin{theorem}({Theorem 2, \cite{Siegelmann95})}
   Let $\phi: \{0,1\}^{*} \rightarrow \{0,1\}^{*}$ be any computable function, and $M$ be a Turing Machine that implements it. We have, for any binary string $w$, there exists $N = O(poly(|M|)),~h^{(0)} = [Enc(w)~~0..0] \in \mathbb{Q}^{N},~W \in \mathbb{Q}^{N \times N}$, such that for any finite alphabet $\Sigma$, $~\forall \sigma \in \Sigma_{\$}:~W'_{\sigma} \in \mathbb{Q}^{N}, O \in \mathbb{Q}^{|\Sigma_{\$}| \times N},~ O' \in \mathbb{Q}^{|\Sigma_{\$}|}$, $R = <\Sigma,N,ReLu,W,W',O,O'> \in \mathcal{R}_{ReLu}$
    verifies:
   \begin{itemize}
       \item if $\phi(w)$ is defined, then there exists  $T \in \mathbb{N}$ such that the first element of the hidden vector $h_{T}$ is equal to $Enc(\phi(w))$, and the second element is equal to $1$,
       \item if $\phi(w)$ is undefined (i.e. $M$ never halts on $w$), then for all $t \in \mathbb{N}$, the second element of the hidden vector $h_{t}$ is always equal to zero.  
   \end{itemize}
   Moreover, the construction of $h_{0}$ and $W$ is polynomial in $|M|$ and $|w|$. 
 \end{theorem}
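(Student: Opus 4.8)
The plan is to reconstruct the Siegelmann--Sontag argument inside the RNN-LM formalism of Definition 2.1.1, carrying out three reductions followed by a bookkeeping step. First I would invoke the classical fact that every Turing Machine $M$ admits an equivalent two-stack machine $M_2$ of description size polynomial in $|M|$ \cite{Hopcroft06}; this lets me replace the unbounded tape of $M$ by two pushdown stores whose entire contents can each be compressed into a single rational number. Then the whole task becomes: build a network in $\mathcal{R}_{ReLu}$ whose autonomous dynamics faithfully step through the configurations of $M_2$.

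The core step is the encoding. I would store a configuration of $M_2$ in the hidden vector $h^{(t)}$ as follows: each stack content $u$ is held as $Enc(u) = \sum_i u_i/4^i \in [0,1]$ in one dedicated neuron, using the nonzero digit alphabet $\{1,3\}$ so that distinct top symbols are separated by a constant margin; the finite control state of $M_2$ occupies a one-hot block of neurons, one per state. The transition function of $M_2$ is then realized by a single affine-plus-activation update $h^{(t+1)} = \sigma(W h^{(t)} + b)$, where the constant bias $b = W'_{\sigma_0}$ is obtained by feeding a fixed padding symbol $\sigma_0$ at every step so that the $W'$ term never carries information. The stack primitives are affine on the encoded values: a push scales the encoding by $1/4$ and adds a constant for the pushed digit, a pop scales by $4$ and subtracts the leading digit, and reading the top symbol amounts to extracting that leading digit. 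The only genuinely nonlinear ingredient is digit extraction, and this is where I expect the main obstacle to lie.

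The difficulty is that reading the top of a stack is a thresholding operation, discontinuous in principle, yet it must be implemented by a piecewise-linear map while every intermediate value stays inside the regime where the activation behaves linearly. Siegelmann and Sontag resolve this with the saturated-linear function $\mathrm{sat}(x) = \max(0, \min(1, x))$; I would port their gadgets to $ReLu$ via the identity $\mathrm{sat}(x) = ReLu(x) - ReLu(x-1)$, which expresses each saturated unit by two $ReLu$ units plus a fixed linear readout---exactly the generalization noted in the excerpt from \cite{Chen18}. The delicate verification is the invariant that the $\{1,3\}$-digit base-$4$ encoding separates the two possible leading digits by a gap bounded below by a constant, so that suitably scaled affine combinations followed by $ReLu$ isolate the top symbol without any value leaving the linear range; maintaining this invariant across every simulated step is the crux of correctness.

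Finally I would handle halting and output. I designate one neuron as a halting flag, wired so that it stays $0$ under the transitions of all non-halting states of $M_2$ and jumps to $1$ exactly when $M_2$ enters its halting state; by correctness of the simulation this happens at some finite $T$ if and only if $\phi(w)$ is defined, and never otherwise, matching the two bullet points of the claim. I arrange the halting routine of $M_2$ so that on termination the answer $\phi(w)$ sits alone on the first stack, whence the corresponding neuron holds $Enc(\phi(w))$. The complexity bound then follows by counting: $N$ is linear in $|M_2|$ and hence $O(\mathrm{poly}(|M|))$; the entries of $W$ (and of $W'$, $O$, $O'$, which serve only to supply the fixed bias) are fixed rationals computable in polynomial time; and $h^{(0)} = [Enc(w)\ 0 \cdots 0]$ is computable in time linear in $|w|$ since $Enc$ is a Horner-type evaluation.
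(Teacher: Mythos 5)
Your proposal follows essentially the same route as the paper's account of this result: the chain Turing Machine $\rightarrow$ two-stack machine $\rightarrow$ first-order RNN, with each stack compressed into a single neuron via the base-$4$ encoding $Enc(\cdot)$, the saturated-linear gadgets of Siegelmann--Sontag ported to $ReLu$ (the paper, following \cite{Chen18}, notes this generalization without spelling it out, whereas you make it explicit via $\mathrm{sat}(x) = ReLu(x) - ReLu(x-1)$), a dedicated halting neuron matching the two bullet points, and the same polynomial-time bookkeeping for $N$, $W$, and $h^{(0)}$. If anything, your reconstruction is more careful than the paper's sketch --- in particular your use of the digit alphabet $\{1,3\}$ to guarantee a constant separation margin for top-of-stack extraction is the correct classical choice, where the paper's formula $Enc(w)=\sum_i w_i/4^i$ with digits $\{0,1\}$ is stated a bit loosely.
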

 In the following, we'll denote by $\mathcal{R}_{ReLu}^{M,w}$ the set of RNNs in $\mathcal{R}_{ReLu}$ that simulate the TM $M$ on $w$. It is important to note that the construction of a RNN that simulates a TM on a given string in the previous theorem is both input and output independent. The only constraints that are enforced by the construction are placed on a block of the transition matrix of the network, and the initial state. In fact, the input string is \textit{placed} in the first stack of the two-stack machine before running the computation (i.e. in the initial state $h^{(0)}$). Under this construction, the first stack of the machine is encoded in the first hidden unit of the network. Afterwards, the \textit{RNN Machine} runs on the empty string, and halts (If It ever halts) when the halting state of the machine is reached. In Theorem 1.1, the halting state of the machine is represented by the second neuron of the network. In the rest of this section, we'll refer to the neuron associated to the halting state by the name \textit{halting neuron}, denoted $n_{halt}$. \\
  We present the following corollary that gives a characterization of the halting machine problem\footnote{The Halting Machine problem is defined as follows: Given a TM M and a string w, does M halt on w? This problem is undecidable.} that relates it to the class $\mathcal{R}_{ReLu}$:
 \begin{corollary}
   Let $M$ be any Turing Machine, and $w$ be a binary string, $M$ halts on $w$ if and only if for any $R \in \mathcal{R}_{Relu}^{M,w}$ , there exists $T \in \mathbb{N}$, such that $\forall t < T: h_{n_{halt}}^{(t)}=0$, and $h_{n_{halt}}^{(T)}=1$.
 \end{corollary}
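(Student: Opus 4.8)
The plan is to derive this corollary directly from the preceding theorem (Theorem 2 of \cite{Siegelmann95}) by reading off what its two conclusions say about the halting neuron $n_{halt}$, i.e. the second hidden unit, and by treating the two directions of the biconditional separately. The starting observation is that membership of $R$ in $\mathcal{R}_{ReLu}^{M,w}$ means exactly that $R$ simulates $M$ on $w$ in the sense of the theorem, so \emph{every} $R \in \mathcal{R}_{ReLu}^{M,w}$ inherits both bullet points of the theorem's conclusion; this is precisely what licenses the universal quantifier ``for any $R$'' in the statement, and it reduces the corollary to an argument about a single generic such $R$.

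For the forward direction, I would assume $M$ halts on $w$, so $\phi(w)$ is defined and the first bullet of the theorem supplies a time at which the second hidden unit equals $1$. I would then take $T$ to be the \emph{least} such time, so that $h_{n_{halt}}^{(T)} = 1$ while $h_{n_{halt}}^{(t)} \neq 1$ for every $t < T$. To upgrade ``$\neq 1$'' to ``$= 0$'', I would invoke the fact, built into the Siegelmann construction, that $n_{halt}$ is a $\{0,1\}$-valued indicator which flips to $1$ precisely when the control unit of the simulated two-stack machine enters its halting state and is $0$ at every earlier step; hence $h_{n_{halt}}^{(t)} = 0$ for all $t < T$, as required.

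For the converse I would argue by contraposition: if $M$ does not halt on $w$, then $\phi(w)$ is undefined and the second bullet of the theorem forces $h_{n_{halt}}^{(t)} = 0$ for all $t \in \mathbb{N}$. Consequently no time $T$ with $h_{n_{halt}}^{(T)} = 1$ can exist, so the right-hand side of the equivalence fails. Contrapositively, whenever such a $T$ exists the machine $M$ must halt on $w$, which closes the biconditional.

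The only genuinely delicate point is the step in the forward direction that turns ``$\neq 1$'' into ``$= 0$'': the bare statement of the theorem asserts only the existence of a halting time and the permanent nullity of the halting neuron in the non-halting case, but it does not explicitly say that the halting neuron remains exactly zero \emph{before} halting in the halting case. Establishing this requires appealing to the binary, state-indicator nature of $n_{halt}$ in the underlying construction rather than to the theorem's conclusion as literally stated; once that property is granted, the corollary follows with no further computation.
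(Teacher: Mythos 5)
Your proof is correct and matches the paper's intended justification: the paper states this corollary without a separate proof, treating it as an immediate consequence of Theorem 2.1 together with the surrounding discussion identifying the second neuron $n_{halt}$ with the halting state of the simulated two-stack machine. Your explicit handling of the one delicate point---that $h_{n_{halt}}^{(t)}$ is exactly $0$ (not merely $\neq 1$) before the least halting time, which follows from the neuron's role as a binary state indicator in the Siegelmann construction rather than from the theorem's literal statement---is precisely the reasoning the paper relies on implicitly.
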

 
 \subsection{The equivalence and distance problem between WFSM/RNN-ReLu in the general case}
 In this section, we will be interested in the problem of deciding equivalence and computing distances between general first-order RNN-LMs with ReLu used as an activation function and PDFA/PDFA/WFA. 
 The equivalence problem between a DPFA and a general RNN-LMs is formulated as follows: \\
 \noindent \textbf{Problem.} Equivalence Problem between a DPFA and a general RNN \\
 \textit{Given a general RNN-LM $R \in \mathcal{R}_{ReLu}$ and a DPFA $\mathcal{A}$. Are they equivalent?}
 \begin{theorem}
 The equivalence problem between a DPFA and a general RNN is undecidable
 \end{theorem}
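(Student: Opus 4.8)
The plan is to reduce the (complement of the) halting problem to the equivalence problem, exploiting the simulation established in Theorem 2.1 together with its halting characterisation in Corollary 2.2. Given an arbitrary Turing Machine $M$ and binary input $w$, I would compute in polynomial time an RNN-LM $R_{M,w} \in \mathcal{R}_{ReLu}$ and fix one trivially small DPFA $\mathcal{A}$, arranged so that $R_{M,w}$ is equivalent to $\mathcal{A}$ if and only if $M$ never halts on $w$. Since non-halting is undecidable and the map $(M,w) \mapsto (R_{M,w},\mathcal{A})$ is computable, undecidability of the equivalence problem follows at once.

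For the construction I would start from the network of Theorem 2.1 that simulates $M$ on $w$, whose hidden layer contains the halting neuron $n_{halt}$, and apply two modifications. First, I set every symbol embedding $W'_{\sigma}$ to vanish on the block of neurons carrying out the simulation, so that the internal computation advances exactly one step of $M$ per consumed input symbol, independently of which symbols are read; after processing a prefix of length $t$ the network has simulated $t$ steps of $M$. Second, I design the output matrices $O,O'$ so that the pre-softmax logits are identically zero while $h^{(t)}_{n_{halt}} = 0$, and are perturbed by some $\lambda \neq 0$ in a single coordinate $\sigma^{\ast}$ once $h^{(t)}_{n_{halt}} = 1$; concretely $E_{t+1}[\sigma] = \lambda\, h^{(t+1)}_{n_{halt}}\, \mathbbm{1}[\sigma = \sigma^{\ast}]$. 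I would also latch the halting neuron with a $ReLu$ self-loop of weight one, so that it stays equal to $1$ at every step after $M$ halts. With all logits equal, $softmax_{2}$ outputs the uniform distribution $1/|\Sigma_{\$}|$ over next symbols, so I take $\mathcal{A}$ to be the single-state DPFA emitting exactly this uniform memoryless distribution; once the perturbed coordinate is active the emitted distribution is instead some $q \neq \mathrm{Unif}(\Sigma_{\$})$.

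Correctness of the reduction then follows from Corollary 2.2. If $M$ does not halt on $w$, then $h^{(t)}_{n_{halt}} = 0$ for all $t$ and on every input, so $R_{M,w}$ emits the uniform distribution at each step, whence $R_{M,w}(u) = \mathcal{A}(u)$ for all $u \in \Sigma^{\ast}$, i.e. $R_{M,w} \equiv \mathcal{A}$. If instead $M$ halts at some step $T$, then for any input long enough to drive the network to step $T$ the emitted next-symbol distribution at that step equals $q \neq \mathrm{Unif}(\Sigma_{\$})$; since every earlier factor coincides with that of $\mathcal{A}$ and $R_{M,w}(u) = \prod_{i} E'_{i}$ is a product of these positive factors, there exists a string $u$ with $R_{M,w}(u) \neq \mathcal{A}(u)$, so $R_{M,w} \not\equiv \mathcal{A}$. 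This yields the desired equivalence $R_{M,w} \equiv \mathcal{A} \iff M$ does not halt on $w$.

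The main obstacle I anticipate is not the logical skeleton but ensuring that the pre-halting behaviour matches a \emph{genuine} DPFA exactly. The cleanest route is the uniform memoryless source above, because $softmax_{2}$ of equal rational logits yields precisely $1/|\Sigma_{\$}|$, which a one-state DPFA reproduces on the nose; trying to match a more general DPFA would force me to realise prescribed rational probabilities as $softmax_{2}$ of rational logits, confining the achievable targets to dyadic values. Secondary points needing care are the end-marker bookkeeping hidden in $R(w) = \prod_{i} E'_{i}$ (where $w_{0} = w_{|w|+1} = \$$), decoupling the embeddings $W'_{\sigma}$ from the simulation block without disturbing the recurrence of Theorem 2.1, and checking that latching keeps the network inside $\mathcal{R}_{ReLu}$; all of these are routine once the halting characterisation of Corollary 2.2 is in hand.
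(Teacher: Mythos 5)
Your proof is correct and follows essentially the same route as the paper's: a reduction from (non-)halting built on the simulation of Theorem 2.1 and Corollary 2.2, where the halting neuron perturbs otherwise-zero softmax logits and the comparison target is a trivial one-state DPFA emitting the uniform memoryless distribution (the paper instantiates this over the unary alphabet $\Sigma=\{a\}$, so that $\mathcal{A}(a^{n})=\frac{1}{2^{n+1}}$). Your explicit latching of the halting neuron via a weight-one $ReLu$ self-loop is a detail the paper handles only implicitly (through its auxiliary neuron $n'$), but it changes nothing of substance in the argument.
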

 \begin{proof}
 We reduce the halting Turing Machine problem to the Equivalence problem.
   Let $\Sigma=\{a\}$. We first define the trivial DPFA $\mathcal{A}$ with one single state $q_{0}$, and $T(\delta_{q_{0},a,q_{0}})=P(q_{0})=\frac{1}{2},~I(q_{0})=1$. This DPFA implements the weighted language $f(a^{n})=\frac{1}{2^{n+1}}$. \\
   Let $M$ be a Turing Machine and $w \in \Sigma^{*}$. We construct $R \in \mathcal{R}_{ReLu}^{M,w}$ such that $O[n_{halt},a]=1$ ,0 everywhere and $O'$ is equal to zero everywhere. We build another RNN $R'$ from $R$ by adding one neuron in its hidden layer, denoted $n'$ such that: $h_{n'}^{(0)} = 0,~\forall t \geq 0:~h_{n'}^{(t+1)}= ReLu(h_{n'}^{(t)}),~ O[n',\$]=1$. \\
   Notice that, by Corollary 4.2, the TM $M$ never halts on $w$ if and only if $\forall T: (h_{n_{halt}}^{(T)},h_{n'}^{(T)})=(0, 0)$, i.e. $R'(a^{n}) = \frac{1}{2^{n+1}}$. That is, the TM $M$ doesn't halt on $w$ if and only if the DPFA $\mathcal{A}$ is equivalent to $R'$, which completes the proof.
 \end{proof}
 
  A direct consequence of the above theorem is that the equivalence problem between PFAs/WFAs and general RNN-LMs in $\mathcal{R}_{ReLu}$ is also undecidable, since the DPFA problem case is immediately reduced to the general case of PFAs (or WFAs). Another important consequence is that no distance metric can be computed between  DPFA/PFA/WFA and $\mathcal{R}_{ReLu}$:
 
 \begin{corollary}
   Let $\Sigma = \{a\}$. For any distance metric $d$ of $\Sigma^{*}$, the total function that takes as input a description of a PDFA $\mathcal{A}$ and a general RNN-LM $\mathcal{R}_{ReLu}$ and outputs $d(\mathcal{A}, R) $ is not recursive. \\
   This fact is also true for PFAs and WFAs.
 \end{corollary}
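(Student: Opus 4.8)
The plan is to reduce the equivalence problem of the preceding theorem directly to the computation of $d$, exploiting the single axiom that separates a metric from an arbitrary symmetric function: the identity of indiscernibles. Writing $f_{\mathcal{A}}$ and $f_{R}$ for the weighted languages computed by $\mathcal{A}$ and $R$, any metric $d$ on the space of weighted languages over $\Sigma^{*}$ satisfies
$$ d(\mathcal{A}, R) = 0 \iff f_{\mathcal{A}} = f_{R} \iff \mathcal{A} \equiv R . $$
Hence an algorithm that computes $d(\mathcal{A}, R)$ for arbitrary inputs immediately yields a decision procedure for the predicate ``$d(\mathcal{A}, R) = 0$'', which is exactly the equivalence predicate shown undecidable above. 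This is the whole skeleton; the rest is plumbing.

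Concretely, I would reuse verbatim the construction from the proof of the preceding theorem. Given an instance $(M, w)$ of the halting problem, I build the trivial DPFA $\mathcal{A}$ implementing $f_{\mathcal{A}}(a^{n}) = \tfrac{1}{2^{n+1}}$ together with the RNN-LM $R' \in \mathcal{R}_{ReLu}^{M,w}$ augmented with the neuron $n'$, exactly as before. The halting characterization then supplies the dichotomy: if $M$ never halts on $w$, the halting neuron and $n'$ stay at zero for all time, so $R'(a^{n}) = \tfrac{1}{2^{n+1}} = f_{\mathcal{A}}(a^{n})$ and therefore $d(\mathcal{A}, R') = 0$; if $M$ halts on $w$, then $R'$ and $\mathcal{A}$ disagree on at least one string, so $f_{\mathcal{A}} \neq f_{R'}$ and the metric axiom forces $d(\mathcal{A}, R') > 0$. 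Thus $M$ halts on $w$ if and only if $d(\mathcal{A}, R') > 0$, and a recursive $d$ would decide the halting problem, a contradiction.

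The step I expect to be the genuine obstacle — rather than routine plumbing — is making precise in what sense $d$ is ``recursive'' so that the test $d(\mathcal{A}, R') \stackrel{?}{=} 0$ is legitimately effective. For a real-valued function, equality-to-zero of a computed value need not be decidable in general, so I would pin the statement to the reading that $d$ outputs its value in a representation on which a zero can be detected (for instance an exact rational, which is the situation for the natural metrics one writes down between languages defined by rational-valued machines). Under that reading the reduction is immediate; phrased contrapositively, no total recursive $d$ can exist, since its existence would decide equivalence. Finally, the extension to PFAs and WFAs requires no new work: the witness $\mathcal{A}$ is already a DPFA, hence a fortiori a PFA and a WFA, so the identical reduction applies and the distance to $R'$ remains uncomputable in each case.
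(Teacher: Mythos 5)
Your proposal is correct and takes essentially the same route as the paper: the paper's proof consists precisely of invoking the metric axiom $d(\mathcal{A},R)=0$ if and only if $\mathcal{A}$ and $R$ are equivalent, and concluding from the preceding undecidability theorem for equivalence that $d$ cannot be recursive. Your explicit re-instantiation of the halting reduction, your pinning down of the output representation of $d$ (exact rational rather than an approximable real, without which the zero test would only be semi-decidable), and your remark that the DPFA witness is a fortiori a PFA and a WFA are refinements the paper leaves implicit, not a different argument.
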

\begin{proof}
  The proof relies on the properties of distance metrics.
  Let $d$ be any distance metric on $\Sigma^{*}$. By definition of a distance, we have $d(\mathcal{A}, R) = 0$ if and only if $\mathcal{A}$ and $R$ are equivalent. Since the equivalence problem is undecidable, $d(.)$ can't be computed.
\end{proof}

\subsubsection{Intersection of the cut language of a general RNN-LM with a DFA}
 In this subsection, we are interested in the following problem: \\
\noindent\textbf{Problem.} Intersection of a DFA and the cut-point language of a general RNN-LM  \\
 \textit{ Given a general RNN-LM $R \in \mathcal{R}_{ReLu}$, $c \in \mathbb{Q}$, and a DFA $\mathcal{A}$, is $\mathcal{L}_{R,c} \bigcap \mathcal{L}_{\mathcal{A}} = \emptyset$?}
 Before proving that this problem is undecidable, we shall recall first a result proved in \cite{Chen18}:
 \begin{theorem}{(Theorem 9, \cite{Chen18})}
 Define the highest-weighted string problem as follows: Given a RNN-LM $R \in \mathcal{R}_{ReLu}$, and $c \in (0,1)$: Does there exist a string $w$ such that $R(w)>c$? \\
 The highest-weighted  string problem is undecidable.
 \end{theorem}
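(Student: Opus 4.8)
The plan is to reduce the halting problem to the highest-weighted string problem, mirroring the reduction used in the proof of the equivalence theorem above. Fix the unary alphabet $\Sigma = \{a\}$. Given an arbitrary Turing Machine $M$ and input $w$, I would invoke the Siegelmann simulation theorem to obtain an RNN-LM $R \in \mathcal{R}_{ReLu}^{M,w}$ whose hidden dynamics simulate the run of $M$ on $w$ while the network reads the stream $a^n$, and whose halting neuron $n_{halt}$ obeys the characterization of the preceding corollary: $h_{n_{halt}}^{(T)} = 1$ exactly at the step $T$ where $M$ halts (if it ever does), and $h_{n_{halt}}^{(t)} = 0$ for every $t$ otherwise. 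The threshold $c \in (0,1)$ is part of the instance I construct, so I may fix a convenient value such as $c = 1/2$.

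The reduction then lives entirely in the output layer $O, O'$, which I design so that the per-step distribution $softmax_2(E_t)$ over $\Sigma_{\$} = \{a, \$\}$ runs in two regimes governed by $n_{halt}$. While $h_{n_{halt}}^{(t)} = 0$ the model should almost surely continue, placing mass $1 - \delta_t$ on $a$ and $\delta_t$ on $\$$; when $h_{n_{halt}}^{(t)} = 1$ it should instead concentrate its mass on $\$$. The crucial point is that the continuation defects $\delta_t$ must be summable with $\prod_t (1 - \delta_t) > c$, so that the weight of a prefix $a^t$ accumulated purely from continuation steps stays bounded below by a constant exceeding $c$, uniformly in the unknown halting time. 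I would realize this by routing a dedicated hidden unit whose activation grows (for instance a unit implementing $h^{(t+1)} = 2\,h^{(t)}$, available because the Siegelmann construction computes exact rational activations even though the weights stay finite precision) into the logit for $a$, so that $\delta_t$ decays doubly-exponentially while $O, O'$ remain fixed rationals.

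With this gadget in place the two cases separate cleanly. If $M$ halts on $w$ at step $T$, the string $a^T$ is generated by continuation steps followed by a halting step emitting $\$$, so $R(a^T) \geq \big(\prod_{t < T}(1 - \delta_t)\big)\cdot P_T(\$) > c$, yielding a YES instance. Conversely, if $M$ never halts the halting neuron is always $0$, so every finite string $a^n$ must terminate by a premature $\$$ taken with probability $\delta_n$; its weight is at most $\delta_n \prod_{t < n}(1 - \delta_t) \le \max_t \delta_t < c$, so no string exceeds $c$ and the instance is NO. Hence there exists a string of weight greater than $c$ if and only if $M$ halts on $w$, and since the halting problem is undecidable and the construction of $R$ is effective (polynomial, by the Siegelmann theorem), the highest-weighted string problem is undecidable.

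The step I expect to be the main obstacle is the softmax bookkeeping: because $softmax_2$ never returns exactly $0$ or $1$, I cannot make continuation deterministic, and a fixed continuation probability $p < 1$ would let the prefix weight $p^T$ decay below $c$ for large halting times, collapsing the forward direction. Forcing the defects $\delta_t$ to be summable, checking that this is implementable with finite-precision parameters but unbounded-precision activations, and verifying that the resulting constant $\prod_t(1 - \delta_t)$ can be pushed above any prescribed $c$, is the delicate part; the undecidability then follows immediately from the reduction.
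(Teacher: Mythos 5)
You should first note what the paper itself does with this statement: it is \emph{not} proved in the thesis at all, but recalled verbatim as Theorem~9 of \cite{Chen18} and used as a black box to derive the corollary on the intersection problem. So there is no in-paper proof to match; your attempt has to stand on its own, and it follows the same general route as the cited source (a halting-problem reduction on top of the Siegelmann simulation, exploiting the base-2 softmax). On that standard, it has a genuine gap, and it sits exactly at the spot you flagged as delicate. To make the continuation defects $\delta_t$ summable with \emph{fixed} rational $O,O'$, you feed a doubling unit $d^{(t)}=2^{t}$ into the logit of $a$, so the logit gap between $a$ and $\$$ grows like $2^{t}$ before halting. But the output layer is a single fixed linear map of the hidden state: at the halting step $T$, the halting neuron can contribute only a bounded rational amount $B$ to the $\$$ logit, because the quantity you would actually need there, $h_{n_{halt}}^{(T)}\cdot d^{(T)}$, is a \emph{product} of two hidden units and is not expressible by one linear layer. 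Consequently $P_T(\$)\leq 2^{\,B-2^{T}}\rightarrow 0$, and your key inequality $R(a^{T})\geq \bigl(\prod_{t<T}(1-\delta_t)\bigr)\,P_T(\$)>c$ fails: there is no lower bound on $P_T(\$)$ uniform in the unknown, unbounded halting time $T$, so no fixed $c$ can witness the YES instances. You correctly secured uniformity of $\prod_t(1-\delta_t)$, but the same gadget that makes continuation almost sure suppresses the halting emission, and uniformity of the final $\$$ factor is equally necessary. (The obvious patches fail for the same reason: a large negative weight from $n_{halt}$ into the $a$ logit is still dwarfed by $2^{T}$, and feeding $d^{(t)}$ into both logits flattens the gap to a constant, destroying summability of the $\delta_t$.)

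The reduction is repairable, but it needs an additional dynamical idea rather than bookkeeping. For instance, upon halting let a fresh unit start growing strictly faster than the doubling unit, say $p^{(t+1)}=ReLu(4p^{(t)}+h_{n_{halt}}^{(t)})$, and route $p$ into the $\$$ logit: since $p^{(T+k)}\approx 4^{k}$ while $d^{(T+k)}=2^{T+k}$, the $\$$ logit overtakes after roughly $T$ further steps, only $O(1)$ steps near the crossover contribute non-negligible defects, and the witness becomes a string of length $\Theta(T)$ rather than $a^{T}$, with weight bounded below by a fixed constant independent of $T$. A second, minor quantitative slip: with $\delta_t=1/(1+2^{2^{t}})$ starting at $t=0$, the product $\prod_t(1-\delta_t)$ comes out just below $1/2$, so your choice $c=1/2$ fails marginally; initializing the doubling unit at a larger value (which you gesture at) fixes this. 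Your backward (non-halting) direction is sound as written; the missing piece is precisely the uniform lower bound on the halting-step emission probability.
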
 
 \begin{corollary}
 The intersection problem of a DFA and the cut-point language of a general RNN-LM is undecidable.
 \end{corollary}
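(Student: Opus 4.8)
The plan is to reduce the highest-weighted string problem, established in Theorem 9 of \cite{Chen18}, to the intersection problem. The key observation is that the intersection problem degenerates to precisely the (non-)emptiness of the cut-point language once the DFA component is chosen to accept every string. So I would instantiate the DFA in the intersection instance with the trivial universal automaton and leave the RNN-LM and the cut-point untouched.

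Concretely, given an instance $(R, c)$ of the highest-weighted string problem with $R \in \mathcal{R}_{ReLu}$ and $c \in (0,1)$, I construct the DFA $\mathcal{A}_{\mathrm{all}}$ having a single state $q_{0}$ that is simultaneously initial and accepting, with a self-loop $f(q_{0}, \sigma) = q_{0}$ for every $\sigma \in \Sigma$. Then $\mathcal{L}_{\mathcal{A}_{\mathrm{all}}} = \Sigma^{*}$, so that $\mathcal{L}_{R,c} \cap \mathcal{L}_{\mathcal{A}_{\mathrm{all}}} = \mathcal{L}_{R,c}$. This mapping is trivially computable: it forwards $(R,c)$ verbatim and appends a constant-size automaton.

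Correctness is then immediate from the definition of the cut-point language: $\mathcal{L}_{R,c} \cap \mathcal{L}_{\mathcal{A}_{\mathrm{all}}} = \emptyset$ holds if and only if there is no string $w$ with $R(w) > c$. Hence any algorithm deciding emptiness of the intersection would decide the complement of the highest-weighted string problem merely by negating its output. Since the class of decidable problems is closed under complementation and the highest-weighted string problem is undecidable, its complement is undecidable as well, and therefore so is the intersection problem.

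I expect no serious obstacle here, since the argument is essentially a one-line specialization. The only points deserving care are bookkeeping matters: first, checking that the convention adopted for $\mathcal{L}_{R,c}$ uses the same strict inequality $R(w) > c$ as in the highest-weighted string problem, since otherwise the boundary case $R(w) = c$ would have to be treated separately; and second, observing that restricting the cut-point to the open interval $(0,1)$ is harmless, because the rationals of $(0,1)$ form a subset of the $c \in \mathbb{Q}$ permitted in the intersection problem, so every instance the reduction produces is a legitimate intersection instance.
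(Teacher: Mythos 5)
Your proposal is correct and matches the paper's own proof: both instantiate the DFA with the trivial universal automaton recognizing $\Sigma^{*}$ so that the intersection collapses to the cut-point language $\mathcal{L}_{R,c}$, whose emptiness is equivalent to the non-existence of a string with $R(w) > c$, i.e.\ to Theorem 9 of \cite{Chen18}. Your additional remarks on complementation and the strictness of the inequality are sound bookkeeping that the paper leaves implicit.
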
 
 \begin{proof}
 We shall reduce the highest-weighted string problem from the intersection problem. Let $R \in \mathcal{R}_{ReLu}$ a general weighted RNN-LM, and $c \in (0,1)$. Construct the automaton $\mathcal{A}$ that recognizes $\Sigma^{*}$. We have that $\mathcal{L}_{\mathcal{A}} \bigcap \mathcal{L}_{R} = \mathcal{L}_{R} = \emptyset$ if and only if there exist no string $w$ such that $R(w)>c$, which completes the proof.
 \end{proof}
 
 \subsubsection{The equivalence problem over finite support}
 Given that the equivalence problem between a general RNN-LM and different classes of finite state automata is undecidable, a less ambitious goal is to decide whether a RNN-LM agrees with a finite state automaton over a finite support. We formalize this problem as follows: \\
 \textbf{Problem.} The EQ-Finite problem between PDFA and general RNN-LMs \\
 \textit{ Given a general RNN-LM $R \in \mathcal{R}_{ReLu}$, $m \in \mathbb{N}$ and a PDFA $\mathcal{A}$. Is $R$ equivalent to $\mathcal{A}$ over $\Sigma^{\leq m}$? }
 
 \begin{theorem}
  The EQ-Finite problem is EXP-Hard.
 \end{theorem}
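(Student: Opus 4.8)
The plan is to establish hardness by a polynomial-time reduction from a \emph{succinct}, EXP-complete halting problem, exploiting the fact that the bound $m$ is part of the input and (as is standard) encoded in binary, so that $m$ may be exponentially large in the size of the instance while the instance itself stays polynomial. Concretely, I would reduce from the bounded acceptance problem
$$K_{\exp} = \{\langle M, w, t\rangle : M \text{ accepts } w \text{ within } t \text{ steps}\},$$
with $t$ written in binary, which is EXP-complete: for any $L \in \mathrm{EXP}$ decided by a machine $M_L$ running in time $2^{n^k}$, the map $x \mapsto \langle M_L, x, 2^{|x|^k}\rangle$ is a polynomial-time reduction, since the bound $2^{|x|^k}$ needs only $|x|^k$ bits. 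Because $\mathrm{EXP} = \mathrm{co\text{-}EXP}$, it is equally legitimate to reduce from the complement $\overline{K_{\exp}}$, which is exactly what the equivalence (``agreement'') formulation will naturally produce.

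Given an instance $\langle M, w, t\rangle$, I would first replace $M$ by a machine $M'$ that fires its designated halting state precisely when $M$ accepts (rejection is turned into non-halting); this costs only a constant-factor change in the step count. Then I reuse essentially verbatim the gadget from the proof of Theorem 2.3: let $\Sigma = \{a\}$, take the one-state DPFA $\mathcal{A}$ computing $f(a^n) = 1/2^{n+1}$, and build $R' \in \mathcal{R}_{ReLu}^{M',w}$ together with the extra neuron $n'$ so that $R'(a^n) = 1/2^{n+1}$ as long as the halting neuron $n_{halt}$ has not fired during the first $n$ simulation steps, and $R'(a^n) \neq 1/2^{n+1}$ once it does. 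Finally I set $m = c\,t$, where $c$ is the fixed number of RNN updates used to simulate one step of $M'$ in the Siegelmann construction (Theorem 2.1). With this choice, chasing the step-to-length correspondence gives
$$R' \equiv \mathcal{A} \text{ over } \Sigma^{\leq m} \iff n_{halt} \text{ never fires within } m \text{ updates} \iff M \text{ does not accept } w \text{ within } t \text{ steps},$$
so the output instance lies in EQ-Finite precisely when $\langle M, w, t\rangle \in \overline{K_{\exp}}$, establishing EXP-hardness.

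It remains to verify that the map is polynomial time, which is immediate: by Theorem 2.1 the network $R'$ has $O(\mathrm{poly}(|M'|))$ neurons and is built in time polynomial in $|M'|$ and $|w|$, the DPFA $\mathcal{A}$ has constant size, and $m = c\,t$ is obtained by scaling the binary bound $t$, costing $O(|t|)$ bits. I expect the main obstacle to be bookkeeping rather than conceptual: one must pin down the exact constant $c$ relating the length $n$ of the input $a^n$, the number of RNN hidden-state updates, and the number of simulated TM steps (including the treatment of the leading and trailing markers $\$$ in Definition 2.1), so that ``firing within $m$ updates'' corresponds faithfully to ``accepting within $t$ steps.'' A secondary point worth emphasizing in the write-up is \emph{why} the hardness is genuinely exponential: the agreement condition implicitly ranges over all $\sum_{n\le m}|\Sigma|^{n}$ strings of length at most $m$, an exponentially large set relative to the $O(\log m)$-bit encoding of $m$, and it is exactly this succinctness of $m$ that injects the full power of EXP into the problem.
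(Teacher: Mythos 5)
Your proof is correct and follows essentially the same route as the paper: a reduction from the binary-encoded bounded halting problem, reusing the one-state DPFA $f(a^n)=1/2^{n+1}$ and the augmented RNN $R'$ from the undecidability proof, with agreement over $\Sigma^{\leq m}$ holding precisely when the halting neuron does not fire within the bound. In fact you are more careful than the paper on two points it glosses over --- explicitly invoking $\mathrm{EXP}=\mathrm{co\text{-}EXP}$ to justify reducing from the complement, and pinning down the constant $c$ relating simulated TM steps to RNN updates via $m=c\,t$ --- so your write-up tightens the same argument rather than departing from it.
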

 \begin{proof}
 We reduce the bounded halting problem \footnote{The bounded halting problem is defined as follows: Given a TM M, a string $x$ and an integer $m$, encoded in binary form. Decide if M halts on $x$ in at most $n$ steps? This problem is EXP-Complete.} to the EQ-Finite problem. \\
 The proof is similar to the used for Theorem 4.3.  We are given a TM $M$, a string $w$ and $m \in \mathbb{N}$. Let $\Sigma = \{a\}$. We construct a general RNN-LM $R'$ by augmenting $R \in \mathcal{R}_{ReLu}^{M,w}$ with a neuron $n'$ as in Theorem 4.3. By Theorem 4.1, this reduction runs in polynomial time. On the other hand, let $\mathcal{A}$ be the trivial PDFA with one single state $q_{0}$, and $T(\delta_{q_{0},a,q_{0}})=P(q_{0})=\frac{1}{2},~I(q_{0})=1$. Note that $R'$ doesn't halt in $m$ steps if and only if $\forall T \leq m:~(n_{halt}^{(T)},n'^{(T)})=(0, 0)$, i.e. $R'(a^{n}) = \frac{1}{2^{n+1}}$ for the first $m$ running steps on $R'$, in which case the language modelled by $R'$ is equal to $f$ in $\Sigma^{\leq m}$. Hence, $\mathcal{A}$ is equivalent to $R$ in $\Sigma^{\leq m}$ if and only if $M$ doesn't halt on the string $w$ in less or equal than $m$ steps.
 \end{proof}
\subsection{Approximate distance between PFA and RNN-LMs}
In the previous section, we have seen that problems related to equivalence and distance in the general case turned out to be either undecidable, or intractable when restricted to finite support. In this section, we examine the case where trained RNN-LMs are guaranteed to be consistent\footnote{It was proven only recently that RNN-LMs with ReLu activation function are not necessarly consistent, and deciding consistency is undecidable \cite{Chen18}. Characterizing consistency of different classes of RNN-LMs is still an open problem.}, and we raise the question of approximate equivalence between PFAs and first-order consistent RNN-LMs with \textit{general} computable activation functions.  It's worth noting that all results holding here for PFAs remain valid for Hidden Markov Models (HMM), since HMMs and PFAs are proved to be equivalent and there exists polynomial time algorithms to convert one to another and vice versa (See Propositon 4-5, \cite{Vidal05}). For any computable activation function $\sigma$, we formalise this question in the following two decision problems:  \\
\noindent\textbf{Problem.} \textit{Approximating the Tchebychev distance between RNN-LM and PFA} \\
\textbf{Instance:} A consistent RNN-LM $R \in \mathcal{R}_{\sigma}$, a PFA $\mathcal{A}$, $c>0$ \\
 \textbf{Question:} Does there exist $|w| \in \Sigma^{*}$ such that $|R(w) - \mathcal{A}(w)| > c$ \\

\noindent\textbf{Problem.} \textit{Approximating the Tchebychev distance between consistent RNN-LM and PFA over finite support} \\
\textbf{Instance:} A consistent RNN $R \in \mathcal{R}_{\sigma}$, a PFA $\mathcal{A}$, $c>0$ and $N \in \mathbb{N}_{+}$, \\
 \textbf{Question:} Does there exist $w \in \Sigma^{\leq N}$ such that $|R(w) - \mathcal{A}(w)| > c$ \\
 Note that there is no constraint on the activation function used for consistent RNN-LMs in these defined problems, provided it is computable. The first fact is easy to prove:
 \begin{theorem}
 Approximating the Tcheybechev distance between RNN-LM and PFA is decidable.
 \end{theorem}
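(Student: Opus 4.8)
The plan is to turn the infinite search over $\Sigma^{*}$ into a finite, effectively verifiable one, exploiting the fact that both $R$ and $\mathcal{A}$ assign only a bounded amount of total mass. By hypothesis $R$ is consistent, so $\sum_{w \in \Sigma^{*}} R(w) = 1$, and a PFA always assigns total finite-string mass at most $1$. The governing observation is that when the total mass is finite, the mass carried by long strings must vanish; hence on sufficiently long strings each of $R(w)$ and $\mathcal{A}(w)$ is individually tiny, so the gap $|R(w) - \mathcal{A}(w)|$ cannot exceed the threshold $c$, and only short strings can possibly witness a \emph{yes} answer.

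Concretely, I would set $S_R(N) = \sum_{|w| \leq N} R(w)$ and $S_{\mathcal{A}}(N) = \sum_{|w| \leq N} \mathcal{A}(w)$. Each is a sum of the finitely many weights of strings in $\Sigma^{\leq N}$, and every individual weight $R(w)$ is a computable real, obtained from the rational parameters of the network by finitely many applications of the activation function and of $\mathrm{softmax}_{2}$ and hence approximable to any prescribed precision; the weights $\mathcal{A}(w)$ are rational. For any single string $w$ with $|w| > N$ we have $R(w) \leq \sum_{|v| > N} R(v)$ and $\mathcal{A}(w) \leq \sum_{|v| > N} \mathcal{A}(v)$, since each is one nonnegative term of the corresponding tail, so that $|R(w) - \mathcal{A}(w)| \leq \max\bigl(R(w), \mathcal{A}(w)\bigr) \leq \max\bigl(\sum_{|v| > N} R(v),\, \sum_{|v| > N} \mathcal{A}(v)\bigr)$. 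Both tails tend to $0$ as $N \to \infty$, so I would increase $N$ and refine the approximations until the estimates certify that both tails are strictly below $c$; this search terminates precisely because the tails converge to $0$ while $c > 0$. Fixing the resulting cutoff $N_0$, the displayed inequality shows that no string of length exceeding $N_0$ can satisfy $|R(w) - \mathcal{A}(w)| > c$, so every candidate witness lies in the finite set $\Sigma^{\leq N_0}$.

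It then remains to inspect the finitely many strings of $\Sigma^{\leq N_0}$: for each I approximate $|R(w) - \mathcal{A}(w)|$ to increasing precision and compare with the rational $c$, answering \emph{yes} as soon as one string is certified to exceed $c$ and \emph{no} once all are certified to fall below it. The main obstacle is making the cutoff $N_0$ genuinely computable from the consistency hypothesis: it is exactly the consistency of $R$ (together with the boundedness of the PFA mass) that guarantees the tails vanish and that the search for $N_0$ halts, which is precisely what fails in the general undecidable case of Theorem~4.3. I would also flag the boundary situation $|R(w) - \mathcal{A}(w)| = c$ as the one delicate point, since an exact coincidence between the possibly irrational weight $R(w)$ and the value $\mathcal{A}(w) + c$ cannot be certified by finite-precision comparison; I expect to dispose of this either by arguing that such exact equalities may be excluded or by reading the decision problem as one of strict inequality, in which case the finite reduction above already establishes decidability.
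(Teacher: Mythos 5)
Your proof is correct and takes essentially the same route as the paper's: both exploit consistency to force the tail mass to vanish, conclude that beyond a computable cutoff each of $R(w)$ and $\mathcal{A}(w)$ is individually below $c$ so the gap cannot exceed $c$, and reduce the question to a finite check --- the paper merely interleaves this (enumerating strings while accumulating mass until both partial sums exceed $1-c$) instead of fixing a length cutoff $N_{0}$ first. Your added care about finite-precision certification and the boundary case $|R(w) - \mathcal{A}(w)| = c$ addresses a genuine subtlety the paper's proof silently glosses over, as does your parenthetical reliance on the PFA's mass: note that both your argument and the paper's actually need the PFA to be consistent (total mass exactly $1$), not merely bounded by $1$, for the tail certification to terminate.
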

 \begin{proof}
  Let $R$ be a consistent RNN-LM and $\mathcal{A}$ be a PFA. An algorithm that can decide this problem runs as follows: enumerate all strings $w_{1},..w_{t},..$ in $\Sigma^{*}$ until we reach a string that satisfies this property in which case the algorithm returns Yes. If there is no such string, by definition of consistency, there will be a finite time $T$\footnote{$T$ can be determined while running the algorithm through summing the probabilities of all reached strings in the enumeration} such that $\sum\limits_{t=1}^{T} R(w_{t}) \geq 1-c,~ \sum\limits_{t=1}^{T} \mathcal{A}(w_{t}) \geq 1-c$ in which case, we have: $\forall t >T:~ R(w_{t}) < c$ and $\mathcal{A}(w_{t}) < c$ which implies $\forall t > T: |R(w_{t} - \mathcal{A}(w_{t})| < c$. When $T$ is reached, the algorithm returns No.
 \end{proof}
 \subsubsection{Approximating the Tcheybetchev distance over a finite support}
 The rest of this section is dedicated to the proof of the NP-Hardness result. The proof will rely on a reduction from the 3-SAT problem \cite{goldreich10}. And, we'll derive below the construction of a PFA and a RNN from a given 3-SAT formula which will help us prove the result. \\ 
  A 3-SAT formula will be denoted by the symbol $F$. A formula is comprised of $n$ Boolean variables denoted $x_{1},..x_{n}$, and $k$ clauses $C_{1},..C_{k}$. For each clause, we'll use notation $l_{i1},~l_{i2},~l_{i3}$ to refer to its three composing literals. For a given string $w \in \{0,1\}^{n}$, the number of clauses satisfied by $w$ will be denoted by $N_{w}$. \\ 
  Without loss of generality, we assume in the sequel that literals $\{l_{i1},l_{i2},l_{i3}\}$ of a given clause $C_{i}$ are arranged in the order of atoms from which they derive, and we denote by $i_{C_{i}}^{*}$ the index of the atom of $l_{i1}$. Let $\epsilon \in (0,\frac{1}{2})$ whose value will be specified later. \\
$\bullet$ \textbf{Construction of a PFA $\mathcal{A}$: } the construction of our PFA is inspired from the work done in \cite{Casacuberta00}, and illustrated in Figure 1. Intuitively, each clause $i$ in $F$ is represented by two paths in the PFA, one that encodes a satisfiable assignment of the variables for this clause, and the other not. More formally, the PFA $\mathcal{A}$ is defined as:
\begin{itemize}
    \item $Q_{\mathcal{A}} = \{q_{0}\} \cup \{q_{ij}^{c}:~ i \in [1,k],~j \in [1,n],~ c \in \{T,F\}  \}$ is the set of states,
    \item Initial-state weights: $I_{\mathcal{A}}(q_{0})=1,~0$
    otherwise,
    \item Final-state weights: 
    \begin{itemize}
         \item For each clause $i$: $P_{\mathcal{A}}(q_{in}^{N}) = 1 - 2\epsilon$
         \item All the other states in $\mathcal{A}$ has a final-state probability equal to $2 \epsilon$
     \end{itemize}
\end{itemize}
$\bullet$ Transitions: For each clause $C_{i}$,
\begin{itemize}
     \item $\forall S \in \{T,F\},~a \in \Sigma$: $(q_{0},a,q_{i,1}^{S}) = \frac{1}{2k} - \frac{\epsilon}{k}$
     \item If $i_{C_{i}}^{*} \neq 1$: 
     \begin{itemize}
         \item If $l_{i1} = x_{i_{C_{i}}^{*}}$: \\ 
         $\forall S \in \{T,F\}:~T_{\mathcal{A}}(q_{i_{C_{i}}^{*}-1}^{S},1,q_{i_{C_{i}}^{*}}^{T})= \frac{1}{2} - \epsilon$ and \\ $T_{\mathcal{A}}(q_{i_{C_{i}}^{*}-1}^{S},0,q_{i_{C_{i}}^{*}}^{F}) = \frac{1}{2}- \epsilon$
         \item else: $\forall S \in \{T,F\}:~T_{\mathcal{A}}(q_{i_{C_{i}}^{*}-1}^{S},0,q_{i_{C_{i}}^{*}}^{T})= \frac{1}{2} - \epsilon$ and  $T_{\mathcal{A}}(q_{i_{C_{i}}^{*}-1}^{S},1,q_{i_{C_{i}}^{*}}^{F}) = \frac{1}{2}- \epsilon$
         \item If $i_{C_{i}}^{*} > 2$, then: \\
         $\forall 1 \leq i < i_{C_{i}}^{*} - 1,~a \in \Sigma,~S \in \{T,F\}:$, we have: $T_{\mathcal{A}}(q_{i}^{S},a,q_{i+1}^{S}) = \frac{1}{2} - \epsilon $
     \end{itemize}
     \item Else:
     \begin{itemize}
         \item If $l_{i1} = x_{1}$:
         $\forall a \in \Sigma:~ T_{\mathcal{A}}(q_{i,1}^{T},a,q_{i,2}^{T}) = \frac{1}{2} - \epsilon$. And: 
         \begin{itemize}
             \item If $x_{2} = l_{i2}$: then $T_{\mathcal{A}}(q_{i1}^{F},1,q_{i2}^{T}) = \frac{1}{2} - \epsilon$, and $T_{\mathcal{A}}(q_{i1}^{F},0,q_{i2}^{F}) = \frac{1}{2} - \epsilon$,
             \item If $l_{i2}=\bar{x}_{2}$: then $T_{\mathcal{A}}(q_{i1}^{F},0,q_{i2}^{T}) = \frac{1}{2} - \epsilon$, and $T_{\mathcal{A}}(q_{i1}^{N},1,q_{i2}^{N}) = \frac{1}{2} - \epsilon$
             \item Otherwise $\forall a \in \Sigma: T_{\mathcal{A}}(q_{i1}^{F},a,q_{i2}^{F})= \frac{1}{2} - \epsilon$
         \end{itemize}
         \item else:
          $\forall a \in \Sigma:~(q_{i1}^{F},a,q_{i2}^{T}) \in \delta_{\mathcal{A}}$, and:
         \begin{itemize}
             \item If $x_{2} \in \{l_{i1},~l_{i2},~l_{i3}\}$, then $T_{\mathcal{A}}(q_{i1}^{T},1,q_{i2}^{T}) = \frac{1}{2} - \epsilon$, and $(q_{i1}^{T},0,q_{i2}^{F}) = \frac{1}{2} - \epsilon$,
             \item If $\bar{x}_{2} \in \{l_{i1},~l_{i2},~l_{i3}\}$, then $T_{\mathcal{A}}(q_{i1}^{T},0,q_{i2}^{T}) = \frac{1}{2} - \epsilon$, and $T_{\mathcal{A}}(q_{i1}^{T},1,q_{i2}^{F}) = \frac{1}{2} - \epsilon$ 
             \item Otherwise, $\forall a \in \Sigma:~ T_{\mathcal{A}}(q_{i1}^{T},a,q_{i2}^{F}) = \frac{1}{2} - \epsilon$
         \end{itemize}         
     \end{itemize}
     \item For $ i^{*}_{C_{i}} \leq i < n$:
     \begin{itemize}
         \item $\forall a \in \Sigma:~T_{\mathcal{A}}(q_{i}^{T},a,q_{i+1}^{T}) = \frac{1}{2} - \epsilon$,
         \item If $x_{i} \in \{ l_{i1},l_{i2},l_{i3}\}$: 
         \begin{itemize}
             \item $T_{\mathcal{A}}(q_{i}^{F},1,q_{i+1}^{T}) = \frac{1}{2} - \epsilon$,
             \item $T_{\mathcal{A}}(q_{i}^{F},0,q_{i+1}^{F}) = \frac{1}{2} - \epsilon$,
         \end{itemize}
         \item Else if $\bar{x}_{i} \in \{ l_{i1},l_{i2},l_{i3}\}$:
         \begin{itemize}
             \item $T_{\mathcal{A}}(q_{i}^{F},0,q_{i+1}^{T}) = \frac{1}{2} - \epsilon$,
             \item $T_{\mathcal{A}}(q_{i}^{F},1,q_{i+1}^{F}) = \frac{1}{2} - \epsilon$,
         \end{itemize}
         - Else: $\forall S \in \{T,F\},~a \in \Sigma:~T_{\mathcal{A}}(q_{i}^{S},a,q_{i+1}^{S}) = \frac{1}{2} - \epsilon$
     \end{itemize}
\end{itemize}
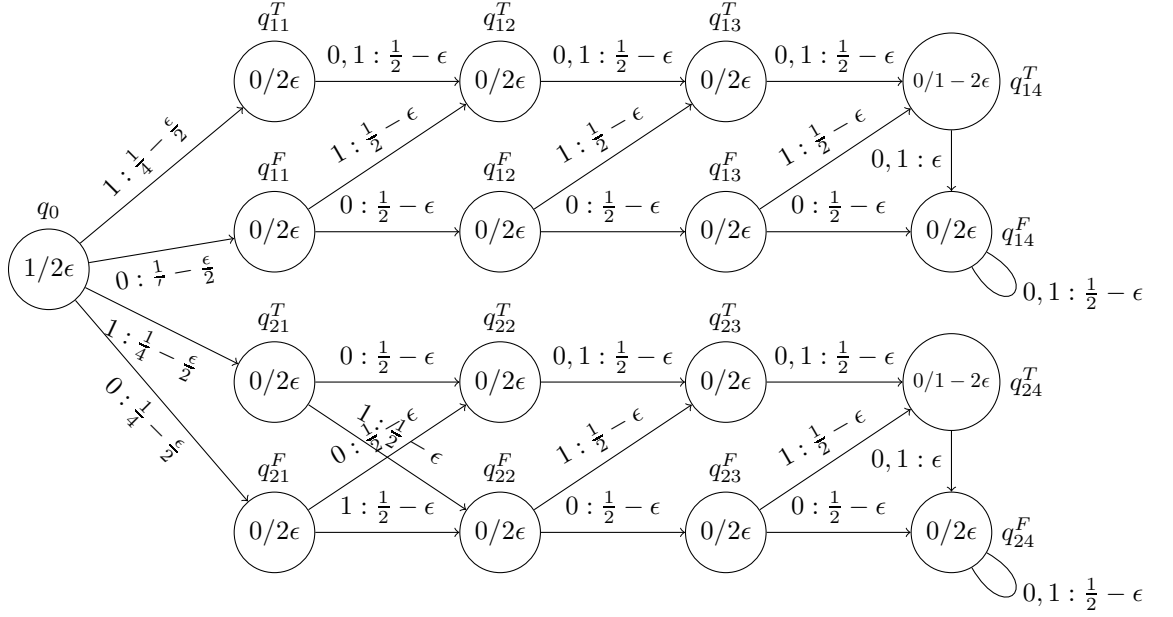
\begin{figure*}
\begin{center}
\begin{tikzpicture}
\tikzset{vertex/.style = {shape=circle,draw,minimum size=1.5em}}
\tikzset{edge/.style = {->,> = latex'}}
\node[vertex,label=above:$q_{0}$] (q0) at  (0,0) {$1/2 \epsilon$};
\node[vertex,label=above:$q_{11}^{T}$] (q11t) at  (3,2.5) {$0/2 \epsilon$};
\node[vertex,label=above:$q_{12}^{T}$] (q12t) at  (6,2.5) {$0/2 \epsilon$};
\node[vertex,label=above:$q_{13}^{T}$] (q13t) at  (9,2.5) {$0/2 \epsilon$};
\node[vertex,scale = 0.8,label=right:$q_{14}^{T}$] (q14t) at  (12,2.5) {$0/ 1-2 \epsilon$};
\node[vertex,label=above:$q_{11}^{F}$] (q11f) at  (3,0.5) {$0/2 \epsilon$};
\node[vertex,label=above:$q_{12}^{F}$] (q12f) at  (6,0.5) {$0/2 \epsilon$};
\node[vertex,label=above:$q_{13}^{F}$] (q13f) at  (9,0.5) {$0/2 \epsilon$};
\node[vertex,label=right:$q_{14}^{F}$] (q14f) at  (12,0.5) {$0/2 \epsilon$};
\node[vertex,label=above:$q_{21}^{T}$] (q21t) at  (3,-1.5) {$0/2 \epsilon$};
\node[vertex,label=above:$q_{22}^{T}$] (q22t) at  (6,-1.5) {$0/2 \epsilon$};
\node[vertex,label=above:$q_{23}^{T}$] (q23t) at  (9,-1.5) {$0/2 \epsilon$};
\node[vertex, scale = 0.8, label=right:$q_{24}^{T}$] (q24t) at  (12,-1.5) {$0/ 1-2 \epsilon$};
\node[vertex,label=above:$q_{21}^{F}$] (q21f) at (3,-3.5) {$0/2 \epsilon$};
\node[vertex,label=above:$q_{22}^{F}$] (q22f) at  (6,-3.5) {$0/2 \epsilon$};
\node[vertex,label=above:$q_{23}^{F}$] (q23f) at  (9,-3.5) {$0/2 \epsilon$};
\node[vertex,label=right:$q_{24}^{F}$] (q24f) at  (12,-3.5) {$0/ 2 \epsilon$};

\draw[->] (q0) -- (q11t) node[midway, above, rotate=45] {$1: \frac{1}{4} - \frac{\epsilon}{2}$};
\draw[->] (q0) -- (q11f)  node[midway, below,rotate=10] {$0: \frac{1}{'} - \frac{\epsilon}{2}$};
\draw[->] (q0) -- (q21t) node[midway, below, rotate=-25]{$1: \frac{1}{4} - \frac{\epsilon}{2}$} ;
\draw[->] (q0) -- (q21f) node[midway, below, rotate=-45]  {$0: \frac{1}{4} - \frac{\epsilon}{2}$};

\draw[->] (q11t) -- (q12t) node[midway,above] {$0,1: \frac{1}{2} - \epsilon$};
\draw[->] (q12t) -- (q13t) node[midway,above] {$0,1: \frac{1}{2} - \epsilon$};
\draw[->] (q13t) -- (q14t) node[midway,above] {$0,1: \frac{1}{2} - \epsilon$};
\draw[->] (q11f) -- (q12t) node[midway,above, rotate=30] {$1: \frac{1}{2} - \epsilon$};
\draw[->] (q11f) -- (q12f) node[midway,above] {$0: \frac{1}{2} - \epsilon$};
\draw[->] (q12f) -- (q13t) node[midway,above,rotate=30] {$1: \frac{1}{2} - \epsilon$};
\draw[->] (q12f) -- (q13f) node[midway,above] {$0: \frac{1}{2} - \epsilon$};
\draw[->] (q13f) -- (q14t) node[midway,above,rotate=30] {$1: \frac{1}{2} - \epsilon$};
\draw[->] (q13f) -- (q14f) node[midway,above] {$0: \frac{1}{2} - \epsilon$};
\draw[->] (q14t) -- (q14f) node[midway,left] {$0,1: \epsilon$};
\path (q14f) edge [out=330,in=300,looseness=8] node[right] {$0,1: \frac{1}{2} - \epsilon$} (q14f);

\draw[->] (q21t) -- (q22t) node[midway,above] {$0: \frac{1}{2} - \epsilon$};
\draw[->] (q21t) -- (q22f) node[midway,above,rotate=-30] {$1: \frac{1}{2} - \epsilon$};
\draw[->] (q22t) -- (q23t) node[midway,above] {$0,1: \frac{1}{2} - \epsilon$};
\draw[->] (q23t) -- (q24t) node[midway,above] {$0,1: \frac{1}{2} - \epsilon$};

\draw[->] (q21f) -- (q22f) node[midway,above] {$1: \frac{1}{2} - \epsilon$};
\draw[->] (q21f) -- (q22t)  node[midway,above,rotate=30] {$0: \frac{1}{2} - \epsilon$};
\draw[->] (q22f) -- (q23f) node[midway,above] {$0: \frac{1}{2} - \epsilon$};
\draw[->] (q22f) -- (q23t) node[midway,above,rotate=30] {$1: \frac{1}{2} - \epsilon$};
\draw[->] (q23f) -- (q24f) node[midway,above] {$0: \frac{1}{2} - \epsilon$};
\draw[->] (q23f) -- (q24t) node[midway,above,rotate=30]{$1: \frac{1}{2} - \epsilon$};
\draw[->] (q24t) -- (q24f) node[midway,left] {$0,1: \epsilon$};
\path (q24f) edge [out=330,in=300,looseness=8] node[right] {$0,1: \frac{1}{2} - \epsilon$} (q24f);
\end{tikzpicture}
\end{center}
\caption{A graphical representation of the PFA constructed from $F = (x_{1} \lor x_{2} \lor x_{3}) \wedge (\bar{x}_{2} \lor x_{3} \lor x_{4})$}
\end{figure*}
The construction above runs in $O(nk)$ time. \\
$\bullet$ \textbf{Construction of a RNN:} The RNN $R$ we'll construct is trivial, and it generates the quantitative language $R(w) = 2(\frac{1}{2} - \epsilon)^{|w|} \epsilon$. More formally, our RNN is defined as: 
\begin{itemize}
 \item N = 2 (2 hidden neurons),
 \item $\begin{pmatrix} h_{n_{1}}^{(0)} \\ h_{n_{2}}^{(0)} \end{pmatrix} = \begin{pmatrix} 0 \\ 0 \end{pmatrix}$ 
 \item Transition matrices: $W_{in} = \begin{pmatrix} 0 & 0 \\ 0 & 0 \end{pmatrix}$; $W_{0} = W_{1} = W_{\$} =  \begin{pmatrix} 0 \\ 0 \end{pmatrix}$
 \item Output matrices: $O = \begin{pmatrix} 0 & 0 \\ 0 & 0 \\ 0 & 0 \end{pmatrix}$, $O' = \begin{pmatrix} \log_{2}{\frac{1-2 \epsilon}{4 \epsilon}} \\ \log_{2}{\frac{1-2 \epsilon}{4 \epsilon}} \\ 0 \end{pmatrix}$ where $\log_{2}(.)$ is the logarithm to the base 2 
 \end{itemize} 
 What's left is to show that $R(w)$ = $2(\frac{1}{2}$ - $\epsilon)^{|w|}$ defines a consistent language model:
 \begin{proposition}
  For any $\epsilon < \frac{1}{2}$, the weighted language model defined as $f(w) = 2 (\frac{1}{2} - \epsilon)^{|w|} \epsilon$ is consistent.
 \end{proposition}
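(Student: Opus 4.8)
The plan is to unwind the definition of consistency for a weighted language model: such a model $f$ is consistent precisely when it defines a genuine probability distribution over $\Sigma^{*}$, i.e. when $f(w) \geq 0$ for every $w$ and $\sum_{w \in \Sigma^{*}} f(w) = 1$. Since $f(w) = 2(\tfrac{1}{2}-\epsilon)^{|w|}\epsilon$ depends on $w$ only through its length, the whole argument reduces to evaluating a single numerical series, so the proof is essentially a direct computation. Nonnegativity is immediate: for $\epsilon \in (0,\tfrac{1}{2})$ both $\tfrac{1}{2}-\epsilon$ and $\epsilon$ are strictly positive, hence $f(w) > 0$ for all $w$.

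The core step is to group the strings by length. With $\Sigma = \{0,1\}$ there are exactly $2^{n}$ strings of length $n$, each carrying the same weight $2(\tfrac{1}{2}-\epsilon)^{n}\epsilon$, so that
$$\sum_{w \in \Sigma^{*}} f(w) = \sum_{n=0}^{\infty} 2^{n}\cdot 2\Bigl(\tfrac{1}{2}-\epsilon\Bigr)^{n}\epsilon = 2\epsilon\sum_{n=0}^{\infty}\bigl(1-2\epsilon\bigr)^{n}.$$
Since $\epsilon \in (0,\tfrac{1}{2})$ yields $0 < 1-2\epsilon < 1$, the geometric series converges to $\frac{1}{1-(1-2\epsilon)} = \frac{1}{2\epsilon}$, and the total mass collapses to $2\epsilon\cdot\frac{1}{2\epsilon} = 1$, which is exactly consistency.

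The only point that requires even mild care is justifying the rearrangement and convergence. Because all terms are nonnegative, reordering the sum over $\Sigma^{*}$ into a sum over lengths is unconditionally valid, so no conditional-convergence subtleties arise; and the hypothesis $\epsilon < \tfrac{1}{2}$, combined with $\epsilon > 0$ from the construction, is precisely what forces the common ratio $1-2\epsilon$ into $(0,1)$ and thereby guarantees both summability and the clean closed form. I anticipate no genuine obstacle beyond this bookkeeping.
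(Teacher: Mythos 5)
Your proof is correct and follows essentially the same route as the paper: group the strings of $\Sigma^{*}$ by length, use $|\Sigma|=2$ to collapse the inner sum into the geometric series $2\epsilon\sum_{n\geq 0}(1-2\epsilon)^{n}$, and evaluate it to $1$. Your additional remarks on nonnegativity and the validity of the rearrangement are sound bookkeeping that the paper leaves implicit, but they do not change the argument.
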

 \begin{proof}
 We have:
 \begin{equation}
 \begin{aligned}
     \sum\limits_{w \in \Sigma^{*}} f(w) &= 2\epsilon \sum\limits_{n \in \mathbb{N}} \sum\limits_{w: |w| = n} (\frac{1}{2} - \epsilon)^{n} \notag \\
     &= 2 \epsilon \sum\limits_{n \in \mathbb{N}} (1-2 \epsilon)^{n} \notag \\
 \end{aligned}
  \end{equation}
 By applying the equality: $\sum\limits_{n \in \mathbb{N}} x^{n} = \frac{1}{1 - x}$ for any $|x| < 1$ on the sum present in the right-hand term of the equation above, we obtain the result.
 \end{proof}

\begin{proposition}
Let $F$ be an arbitrary 3-SAT formula with $n$ variables and $k$ clauses. Let $\mathcal{A}$ be the PFA constructed from $F$ by the procedure detailed above, the probabilistic language generated by $\mathcal{A}$ is given as: 
\begin{equation*}
    \mathcal{A}(w) =  \begin{cases} 
         2 (\frac{1}{2} - \epsilon)^{|w|} \epsilon & if ~ |w| < n\\
          2(\frac{1}{2} - \epsilon)^{|w|} \epsilon [\frac{N_{w}}{k}\frac{1-2\epsilon}{2\epsilon} + \frac{k-N_{w}}{k} ] & if ~ |w| = n \\ 
         2(\frac{1}{2} - \epsilon)^{|w|} \epsilon [\frac{N_{w_{:n}}}{k} \frac{2\epsilon}{1 - 2\epsilon}+ \frac{k-N_{w_{:n}}}{k}] & else\\
      \end{cases}
\end{equation*}
\end{proposition}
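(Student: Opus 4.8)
The plan is to exploit the block structure of $\mathcal{A}$: apart from the initial state $q_0$, the state space partitions into $k$ disjoint \emph{tracks}, where track $i$ consists of the states $\{q_{ij}^{S} : j \in [1,n],\, S \in \{T,F\}\}$ together with the tail edges leaving $q_{in}^{T}$ and $q_{in}^{F}$. Since no transition ever crosses from one track to another, every path reading a word $w$ stays inside a single track after its first step, so by the definition of the weight of a PFA we may write $\mathcal{A}(w) = \sum_{i=1}^{k} \mathcal{A}_i(w)$, where $\mathcal{A}_i(w)$ is the contribution of track $i$. First I would verify that each track is deterministic, i.e. for every state and every symbol there is exactly one outgoing edge within the track, so that $\mathcal{A}_i(w)$ is a single product of edge weights times one final weight rather than a sum over several paths.

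The heart of the argument is an invariant, proved by induction on the position $j \le n$: after reading the prefix $w_{:j}$, track $i$ sits in state $q_{ij}^{T}$ if the partial assignment $(w_1,\dots,w_j)$ already satisfies clause $C_i$ through one of its literals on the variables $x_1,\dots,x_j$, and in $q_{ij}^{F}$ otherwise. Establishing this requires walking through the construction's case analysis, namely the first-literal index $i_{C_i}^{*}$, positive versus negative literals, and the "variable not occurring in $C_i$" case where the rail is preserved, checking in each case that the transition moves onto the $T$-rail exactly when the freshly read bit satisfies the relevant literal. This casework, together with the initial split at $q_0$, is where I expect the main bookkeeping difficulty to lie; everything downstream is a mechanical weight computation.

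Once the invariant holds I would read off the weight of the unique path through track $i$. The first edge $q_0 \to q_{i1}^{S}$ carries weight $\frac{1}{2k}-\frac{\epsilon}{k} = \frac{1}{k}(\frac{1}{2}-\epsilon)$, each subsequent edge among positions $2,\dots,n$ and each tail self-loop carries weight $\frac{1}{2}-\epsilon$, while the single crossover edge $q_{in}^{T}\to q_{in}^{F}$ carries weight $\epsilon$. Combined with $I(q_0)=1$, this gives $\mathcal{A}_i(w) = \frac{1}{k}(\frac{1}{2}-\epsilon)^{|w|}\, P(\mathrm{end})$ whenever the path avoids the crossover edge, and a correspondingly modified product when it uses it.

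Finally I would split into the three length regimes. For $|w|<n$ every track ends at an intermediate state with final weight $2\epsilon$, and summing the $k$ equal contributions yields $2(\frac{1}{2}-\epsilon)^{|w|}\epsilon$. For $|w|=n$, track $i$ ends at $q_{in}^{T}$ (final weight $1-2\epsilon$) exactly when $C_i$ is satisfied by $w$ and at $q_{in}^{F}$ (final weight $2\epsilon$) otherwise; there are $N_w$ tracks of the first kind, and factoring out $2\epsilon(\frac{1}{2}-\epsilon)^{n}$ produces the middle case. For $|w|>n$, a satisfied track must traverse the crossover edge once, paying $\epsilon$ instead of $\frac{1}{2}-\epsilon$, before looping at $q_{in}^{F}$; after factoring out $2\epsilon(\frac{1}{2}-\epsilon)^{|w|}$ this contributes the factor $\frac{\epsilon}{\frac{1}{2}-\epsilon}=\frac{2\epsilon}{1-2\epsilon}$, whereas the unsatisfied tracks contribute $1$. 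Collecting these gives the third case and completes the proof; the only identity worth flagging is $\frac{2\epsilon}{1-2\epsilon}=\frac{\epsilon}{1/2-\epsilon}$, which reconciles the crossover weight $\epsilon$ with the stated formula.
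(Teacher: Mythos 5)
Your proof is correct, and it is worth noting that the paper states this proposition with no proof at all, so your argument supplies the missing verification rather than paralleling an existing one. The track decomposition $\mathcal{A}(w)=\sum_{i=1}^{k}\mathcal{A}_{i}(w)$, the per-track determinism, the rail invariant, and the three weight computations --- including the reconciliation $\frac{\epsilon}{\frac{1}{2}-\epsilon}=\frac{2\epsilon}{1-2\epsilon}$ for the crossover edge --- are exactly what the construction of Figure 1 is designed to deliver, and all three cases check out against the stated formula (the empty string is a trivial edge case, handled directly by $I(q_{0})P(q_{0})=2\epsilon$ rather than by the track sum).

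Two caveats on the details. First, your invariant is only valid for positions $j \geq i^{*}_{C_{i}}$: before the atom of the first literal is read, the clause cannot yet be satisfied, and the track may nevertheless sit on the $T$-rail (in Figure 1, reading $1$ from $q_{0}$ puts the track of clause $2$ in $q_{21}^{T}$ even though $x_{1}$ does not occur in $C_{2}$). This is harmless, because at positions below $i^{*}_{C_{i}}$ the two rails have identical outgoing transitions and the same final weight $2\epsilon$, but the invariant should be asserted only from $j=i^{*}_{C_{i}}$ onward --- and that suffices, since $i^{*}_{C_{i}} \leq n$, so the invariant holds where you actually use it, at $j=n$. Second, the determinism check you propose would literally fail against the paper's prose, which assigns transitions from $q_{0}$ to both $q_{i,1}^{T}$ and $q_{i,1}^{F}$ for every symbol; taken at face value this even violates stochasticity, since the outgoing mass from $q_{0}$ would then sum to $2-2\epsilon$ rather than $1-2\epsilon$ (leaving $2\epsilon$ for the final weight). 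Determinism holds for the intended construction as depicted in Figure 1, where each symbol selects a single rail per track; likewise the tail structure your third case relies on --- the crossover edges $q_{in}^{T} \to q_{in}^{F}$ of weight $\epsilon$ and the self-loops of weight $\frac{1}{2}-\epsilon$ at $q_{in}^{F}$ --- appears only in the figure, not in the written transition list. With those readings fixed, your computation is exactly right.
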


\begin{proposition}
 For any rational number $\epsilon < \frac{1}{4}$, there exists a rational number $c_{\epsilon}$ such that $F$ is satisfiable if and only if $d_{\infty}(R, \mathcal{A}) > c_{\epsilon}$
\end{proposition}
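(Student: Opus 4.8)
The plan is to compute the Tchebychev distance $d_{\infty}(R,\mathcal{A}) = \sup_{w\in\Sigma^{*}}|R(w)-\mathcal{A}(w)|$ in closed form by exploiting the explicit expressions for $R(w)=2(\tfrac12-\epsilon)^{|w|}\epsilon$ and for $\mathcal{A}(w)$ computed in the preceding proposition, and then to read off the satisfiability of $F$ from a threshold on this value. First I would split the supremum according to the three length regimes of that formula: $|w|<n$, $|w|=n$, and $|w|>n$. In the first regime the two languages coincide branch-for-branch, so the difference vanishes identically and contributes nothing to the supremum.

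The heart of the computation is the regime $|w|=n$. Here I would substitute $R(w)=2(\tfrac12-\epsilon)^{n}\epsilon$ into the middle branch of $\mathcal{A}(w)$ and simplify the bracket, which collapses to $\frac{N_w}{k}\bigl(\frac{1-2\epsilon}{2\epsilon}-1\bigr)=\frac{N_w}{k}\cdot\frac{1-4\epsilon}{2\epsilon}$, the factor $2\epsilon$ then cancelling. This yields
$$|R(w)-\mathcal{A}(w)| = \Bigl(\tfrac12-\epsilon\Bigr)^{n}\,\frac{N_w}{k}\,(1-4\epsilon),$$
and the hypothesis $\epsilon<\tfrac14$ is exactly what guarantees $1-4\epsilon>0$, so this quantity is nonnegative and strictly increasing in the number $N_w$ of clauses of $F$ satisfied by the assignment encoded by $w$. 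Maximising over the finitely many length-$n$ binary strings replaces $N_w$ by $N_{\max}:=\max_{w}N_{w}$.

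Next I would treat $|w|>n$ and show it never dominates. An analogous simplification of the third branch gives $|R(w)-\mathcal{A}(w)| = 2(\tfrac12-\epsilon)^{|w|}\epsilon\,\frac{N_{w_{:n}}}{k}\cdot\frac{1-4\epsilon}{1-2\epsilon}$, which decays geometrically in $|w|$ and is therefore maximised at $|w|=n+1$; using $(\tfrac12-\epsilon)^{n+1}=(\tfrac12-\epsilon)^{n}\frac{1-2\epsilon}{2}$ this maximum equals $(\tfrac12-\epsilon)^{n}\epsilon\,\frac{N_{\max}}{k}(1-4\epsilon)$, that is, the length-$n$ supremum scaled by the factor $\epsilon<1$. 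Hence the global supremum is attained at length exactly $n$ and
$$d_{\infty}(R,\mathcal{A}) = \Bigl(\tfrac12-\epsilon\Bigr)^{n}\,\frac{N_{\max}}{k}\,(1-4\epsilon).$$

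Finally I would exploit the integer gap in $N_{\max}$. If $F$ is satisfiable then $N_{\max}=k$ and $d_{\infty}=(\tfrac12-\epsilon)^{n}(1-4\epsilon)$; if $F$ is unsatisfiable then $N_{\max}\le k-1$ and $d_{\infty}\le(\tfrac12-\epsilon)^{n}\frac{k-1}{k}(1-4\epsilon)$. These two ranges are strictly separated, so choosing the rational midpoint $c_{\epsilon}=(\tfrac12-\epsilon)^{n}\frac{2k-1}{2k}(1-4\epsilon)$ makes $F$ satisfiable if and only if $d_{\infty}(R,\mathcal{A})>c_{\epsilon}$, and rationality of $c_{\epsilon}$ follows from $\epsilon\in\mathbb{Q}$ together with $n,k\in\mathbb{N}$. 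I expect the main obstacle to be the cross-regime bookkeeping: one must verify carefully that the $|w|>n$ contributions, including the whole decaying geometric tail, stay strictly below the length-$n$ value, so that the supremum is genuinely realised at length exactly $n$ rather than merely approached.
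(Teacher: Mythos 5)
Your proposal is correct and takes essentially the same route as the paper's proof: compute $|R(w)-\mathcal{A}(w)|$ in the three length regimes, show the supremum is attained exactly at $|w|=n$ (your explicit check that the $|w|=n+1$ value equals the length-$n$ value scaled by $\epsilon<1$ just makes the paper's one-line domination inequality $|R(w)-\mathcal{A}(w)|\leq |R(w_{:n})-\mathcal{A}(w_{:n})|$ precise), and separate $N_{\max}=k$ from $N_{\max}\leq k-1$ by a rational threshold. Your midpoint choice $c_{\epsilon}=(\tfrac{1}{2}-\epsilon)^{n}\tfrac{2k-1}{2k}(1-4\epsilon)$ is an instance of the paper's construction with $s=k-\tfrac{1}{2}\in[k-1,k)$.
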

\begin{proof}
 For any $w$ such that $|w| < n$, $|R(w) - \mathcal{A}(w)| = 0$ . \\
 For $|w| = n$, we have: 
 $$ |R(w) - \mathcal{A}(w)| = 2 \epsilon (\frac{1}{2} - \epsilon)^{n} \frac{N_{w}}{k} (\frac{1 - 4 \epsilon}{2 \epsilon}) $$
On the other hand, for $|w| > n$, we have: 
$$ |R(w) - \mathcal{A}(w)| =  2 \epsilon (\frac{1}{2} - \epsilon)^{|w|} \frac{N_{w}}{k} \frac{1 - 4 \epsilon}{1 - 2 \epsilon}$$
Note that we have for any $\epsilon < \frac{1}{4}$:
$$ \forall w \in \Sigma^{\geq n}:~~ |R(w) - \mathcal{A}(w)| \leq  |R(w_{:n}) - \mathcal{A}(w_{:n})| $$
 This means that, under this construction, the maximum is reached necessarily by a string whose length is exactly equal to $n$. Thus, we obtain:
$$d_{\infty}(R,\mathcal{A}) = 2\frac{\epsilon}{k} (\frac{1}{2} - \epsilon)^{n} \frac{1 - 4 \epsilon}{2 \epsilon} \max\limits_{w \in \Sigma^{n}} N_{w}$$

Note that $F$ is satisfiable if and only if $\max\limits_{w \in \Sigma^{n}} N_{w} = k$. As a result, pick any $s \in [k-1,k)$, and define $c_{\epsilon} =  2 \frac{\epsilon s}{k} (\frac{1}{2} - \epsilon)^{n} \frac{1 - 4 \epsilon}{2 \epsilon}$, the formula is satisfiable if and only if $d_{\infty}(R,\mathcal{A}) > c_{epsilon}$. 
\end{proof}

\begin{theorem}
 The Tchebychev distance approximation problem between consistent RNN-LMs and PFAs in finite support is NP-Hard.
\end{theorem}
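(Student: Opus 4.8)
The plan is to assemble the three preceding propositions into a polynomial-time many-one reduction from 3-SAT to the finite-support Tchebychev distance approximation problem. Given a 3-SAT formula $F$ with $n$ variables and $k$ clauses, I would first fix once and for all a rational $\epsilon < \frac{1}{4}$ chosen so that the output bias $\log_{2}\frac{1-2\epsilon}{4\epsilon}$ of the RNN is itself rational — for instance $\epsilon = \frac{1}{10}$, which makes $\frac{1-2\epsilon}{4\epsilon} = 2$ and hence the bias equal to $1$ — so that the machine $R$ genuinely lies in the class $\mathcal{R}_{\sigma}$ of rational-parameter RNN-LMs and is consistent by the consistency proposition above (since $\epsilon < \frac{1}{2}$). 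With this $\epsilon$ fixed, I would output the PFA $\mathcal{A}$ and the two-neuron RNN $R$ built by the constructions above, together with the bound $N = n$ and the threshold $c = c_{\epsilon}$ supplied by the third proposition.

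Correctness then follows directly. By the proposition giving $\mathcal{A}(w)$ in closed form together with the computation of $R$, the pointwise gap $|R(w) - \mathcal{A}(w)|$ vanishes on all $w$ with $|w| < n$ and, for $\epsilon < \frac{1}{4}$, satisfies $|R(w)-\mathcal{A}(w)| \leq |R(w_{:n}) - \mathcal{A}(w_{:n})|$ for every $w \in \Sigma^{\geq n}$. Consequently the supremum $d_{\infty}(R,\mathcal{A})$ is attained by some string of length exactly $n$, so that
$$d_{\infty}(R,\mathcal{A}) = \max_{w\in\Sigma^{\leq N}} |R(w)-\mathcal{A}(w)| = 2\frac{\epsilon}{k}\Big(\frac{1}{2}-\epsilon\Big)^{n}\frac{1-4\epsilon}{2\epsilon}\max_{w\in\Sigma^{n}} N_w .$$
Since $F$ is satisfiable exactly when $\max_{w\in\Sigma^{n}} N_w = k$, the threshold $c_{\epsilon}$ of the third proposition separates the two cases: there is a $w\in\Sigma^{\leq N}$ with $|R(w)-\mathcal{A}(w)| > c_{\epsilon}$ if and only if $F$ is satisfiable. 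Because the maximum defining $d_{\infty}$ is attained on $\Sigma^{\leq N}$, the produced tuple $(R,\mathcal{A},c_{\epsilon},N)$ is a yes-instance of the finite-support problem precisely when $F$ is satisfiable.

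It remains to check that the reduction runs in polynomial time, which I regard as the only genuinely delicate point. The PFA construction is $O(nk)$ and $R$ has constant size, so the structural part is immediate; the care is needed in the numerical data. The threshold $c_{\epsilon}$ contains the factor $(\frac{1}{2}-\epsilon)^{n}$, whose reduced fraction has a denominator of the form $d^{n}$ and therefore a bit-length that is linear — not exponential — in $n$, so $c_{\epsilon}$ admits a polynomial-size rational encoding; the same observation keeps every weight of $\mathcal{A}$ and the single nonzero entry of $R$ polynomially bounded. The main obstacle is thus not combinatorial but book-keeping: one must verify that the fixed choice of $\epsilon$ leaves all parameters rational and of polynomial description length while preserving the strict inequality $\epsilon < \frac{1}{4}$ that drives the length-$n$ maximiser argument, and that $c_{\epsilon}$ can be taken rational (any $c_{\epsilon} = 2\frac{\epsilon s}{k}(\frac{1}{2}-\epsilon)^{n}\frac{1-4\epsilon}{2\epsilon}$ with rational $s\in[k-1,k)$ works). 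Granting these routine verifications, the reduction is correct and polynomial, establishing NP-hardness.
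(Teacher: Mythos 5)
Your proof is correct and takes essentially the same route as the paper: a polynomial-time reduction from 3-SAT that instantiates the previously constructed PFA $\mathcal{A}$ and two-neuron RNN $R$, invokes the preceding proposition to obtain the rational threshold $c_{\epsilon}$, and uses the fact that the maximizing string has length exactly $n$ so that a bound of $N = n$ (the paper takes $N = n+1$; both are valid) captures $d_{\infty}$ within the finite support. Your additional care in fixing $\epsilon = \frac{1}{10}$ so that the output bias $\log_{2}\frac{1-2\epsilon}{4\epsilon} = 1$ is rational, and in checking that $(\frac{1}{2}-\epsilon)^{n}$ has polynomial bit-length, tightens details the paper's proof leaves implicit.
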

\begin{proof}
 We reduce the 3-SAT satisfiability problem to our problem. Let $F$ be an arbitrary 3-SAT formula. Construct a PFA $\mathcal{A}$ and a RNN $R$ as specified previously. Choose a rational number $\epsilon < \frac{1}{4}$. Let $c_{\epsilon}>0$ be any rational number as specified in the proof of Proposition 5.4, and $N = n+1$. By Proposition 5.4, $F$ is satisfiable if and only if $d_{\infty}(R,\mathcal{A}) > c_{\epsilon}$, which completes the proof.
\end{proof}

\subsection{Conclusion}
 In this section, we gave a thorough treatment of connections between first-order RNN-LMs with $ReLu$ cells and different types of weighted automata from a computationnal viewpoint. The summary of obtained results are given as follows: \\ 
 (a) For general weighted first-order RNN-LMs with ReLu activation function: 1. The equivalence problem of a PDFA/PFA/WFA and a weighted first-order RNN-LM is undecidable; 2- As a corollary, any distance metric between languages generated by PDFA/PFA/WFA and that of a weighted RNN-LM is also undecidable; -The intersection between a DFA and the cut language of a weighted RNN-LM is undecidable; - The equivalence of a PDFA/PFA/WFA and weighted RNN-LM in a finite support is EXP-Hard; (b) For consistent first-order RNN-LMs with any computable activation function: - The Tcheybetchev distance approximation is decidable; - The Tcheybetchev distance approximation in a finite support is NP-Hard. A future direction of this work will be to enlarge the picture to include more complex RNN class of architectures, such as LSTMs/GRUs. \\
\section{Extractability of Finite State Machines from RNNs }
\label{chap:methods}
  
    $~~~$The extraction of a Finite State Machine from an RNN is defined as the whole algorithmic procedure used to transform an RNN into a finite state machine that is supposed to approximate well its behavior. Naturally, this procedure must involve, either directly or indirectly, a discretization of the RNN state space into partitions, each of which represents a state in the output finite state automaton. The question, then, is on selecting the best strategy for partitioning the state space. And, what kind of information an algorithm needs to get from the original network to obtain a good final partitioning of the RNN state space? Also, at least as important as the previous question, other questions arise: How to evaluate the performance of a given algorithm? Under what experimental setup? We'll see that some of these questions are quite challenging, and any experimental setup used to assess or compare the quality of FSM extraction algorithm will be intrinsically biased by the RNN training phase. \\ 
    We propose to divide this section into three parts: In the first part, we will give a brief literature overview of algorithms proposed in the literature to extract Finite State Machines from RNNs. Afterwards, we will analyze the experimental setup used to evaluate how well an extraction algorithm succeeds in the task of extraction, and the used evaluation metrics. The last part will be dedicated to present our experimental results, and analyze them through the lens of the generalization theory of Deep Learning. In light of this theory, a conjecture of why clustering-based algorithms work well will be suggested and theoretically motivated.   

\subsection{Brief overview of FSM algorithm extraction}
  As said earlier in the introduction of this part, FSM extraction algorithms need, at some point during their execution, to perform a partitioning of their clustering space to obtain the states of the output automaton. This partitioning could be explicit by extracting hidden state vectors of different prefixes, then use a clustering technique to obtain the final state automaton. An alternative way is to treat the network as a whole black box, and rely solely on its input-output in a form of some kind of an active learning protocol with a membership query oracle to discover \emph{"the approximate automaton"} that represents the network. Also, one can think of a hybrid combination of both techniques. In this section, we present two of these paradigms: The first, and the most widely addressed in the literature, which consists on clustering the hidden state space. And the second one that will be presented here will concern the use of the RNN as an \emph{gray box} oracle. These two families of algorithms represent the highly influencing and the most usually addressed in the literature. However, we need to mention that less well-known methods based on different strategies were proposed in the literature \cite{dong19}, \cite{koul18}.
  
\subsubsection{Clustering-based approaches for extracting finite state automata}
A major paradigm of extracting finite state machines from RNNs is based on the \emph{"clustering hypothesis"}. The main idea behind this assumption is that an infinite memory machine, RNNs falling in this case, trained to recognize a regular language tends to cluster the hidden state space into well-formed partitions that map to the canonical minimal DFA of the target regular language. Backed by this assumption, many algorithms aiming at extracting finite state machines from Recurrent Neural Networks were based on the use of some clustering/quantization technique of the RNN hidden state space to recover a finite structure. From a conceptual viewpoint, these algorithms can be seen as a special instanciation of the following \emph{meta-algorithmic} pattern: 
\begin{enumerate}
    \item \textbf{The clustering phase:} In this phase, hidden vectors of strings generated according to a chosen sampling strategy (Breadth-first search/Random sampling) are collected. Then, a clustering/partionning technique is employed to obtain the states of the extracted DFA,
    \item \textbf{The transition construction phase:} During this step, transition tables of each symbol in the alphabet are constructed to form transitions and corresponding labels of the DFA. To perform this task, the algorithm needs to traverse the RNN up to a given depth to fill in transition tables. Uusually, a non-determinism conflict occurs during this phase, where a symbol leads to two different states when starting from the same state. A common way to resolve this conflict consists at choosing the most frequent transition to form arcs between states.
    \item \textbf{Automaton minimization:} The resulting automaton of the previous step might be non-minimal, i.e. contains indistinguishable states. This step compresses this latter into its minimal equivalent version. This task is known to be performed in polynomial time.   
\end{enumerate}
 Algorithms following this \emph{meta-algorithmic} pattern differ mainly on the instanciation of two \emph{hyperparameters} : 
 \begin{itemize}
  \item The clustering/quantization technique used to form the states of the extracted DFA. This is fundamentally the most distinguishing feature of algorithms following this pattern. The rest of this section will be dedicated to discuss in depth different strategies explored in the literature, 
  \item The traversal strategy used to construct transition tables: two different traversal strategies are used to construct transition arcs of the DFA, Breadth-first search or random sampling. The choice of either strategy depends on the trade-off to be made between the computational cost and the risk of \emph{non-determinism conflicts}. A non-determinism conflict arises where two hidden vectors corresponding to two prefixes that belong to the same cluster will lead to two different clusters when the same symbol of the alphabet is applied, in which case the resulting automaton becomes non-deterministic. It's worth noting that only the random sampling strategy is concerned by this issue since in the case of breadth-first search strategy, the search is immediately pruned as we reach an already visited state.  When the random sampling strategy is used, this conflict is resolved statistically by counting for each cluster-symbol pair the number of transitions that lead to different clusters, and select the one with the maximum number. 
 \end{itemize}
  
  Our main focus in the rest of this section will be on the first aforementioned point, that is clustering/quantization based strategies used in the literature to convert the RNN hidden state space into a finite one. \\
  
 $\bullet$ \textbf{Quantization-based approaches} \\ 
 Discretizing the RNN hidden state space by quantization constitutes the oldest approach for recovering a finite structure from the infinite RNN memory \cite{Giles91}. It consists simply on partitioning the state space into equal partitions with a given resolution, where each partition represents a \emph{candidate} state of the final DFA. Then, a breadth-first traversal strategy is performed to promote reachable candidate states into validated ones, and build transition arcs of the final automaton. The setting of the resolution parameter is crucial for the success of such strategy. In fact, this latter will determine the trade-off between the size of the resulting automaton -the size of the final automaton could scale exponentially with the dimension of the hidden state space of the RNN, a problem known as \emph{a state explosion phenomenon}- and the accuracy of the resulting automaton. Intuitively, an excessively coarse partitioning will tend to cluster non-equivalent prefixes(in the sense of Myhill-Neyrode) into one single state. In practice, there is no known method to specify the resolution parameter, and It is done empirically by running the algorithm on different value parameters. Despite its simplicity, this algorithm succeeded to achieve good performance on vanilla RNNs with small-sized automata \cite{Giles91}, \cite{Giles92}. However, recent studies showed that the state explosion phenomenon is unavoidable when this algorithm is used in the context of modern RNN architectures, such as LSTMs/GRUs \cite{wang17}. \\ 
 
 $\bullet$ \textbf{Clustering-based approaches.} \\ 
 A major drawback of the previous method is the high sensitivity of the quality of extracted automata to the resolution parameter that needs to be selected \emph{a priori} before the run of the algorithm. To resolve this issue, a better alternative will be to give the extraction algorithm the capacity to adapt the coarseness of the state space after obtaining enough information on the arrangement of hidden state vectors in the state space. So, a better strategy will consist in clustering when enough states are visited through a BFS strategy. Then, a classical clustering algorithm is run to identify groups of close prefixes representing a candidate to be a Nerode equivalence class. Many clustering algorithms were proposed in the litterature, including K-means \cite{Zeng93}, Hierarchical Clustering \cite{Alquezar94}, SOM \cite{tino95} etc. Although K-means requires setting the number of clusters before the run of the algorithm, It's still the most popular one due to its simplicity. A general rule of using K-means for this task consists at giving a quite great number of clusters to the algorithm as a parameter, then use an automaton minimization procedure \emph{a posteriori} after obtaining the resulting automaton from the hidden vector space.    
  
 \subsubsection{FSM extraction as an active learning problem}
 Since the problem of automata extracting from an RNN assumes implicitly a complete access to the RNN machine we wish to approximate, then this problem can be perceived under an active learning setting. This is an approach taken by Weis et al. in \cite{Weiss18b}. The proposed approach attempts to adapt the well-know $L*$ algorithm \cite{Angluin87} used to infer DFAs. The membership query doesn't pose a serious problem as it requires simply the run of a forward operation on the network with the queried string as an input to obtain the result. On the other hand, the membership query oracle is more challenging. In fact, we have no guarantee that the target RNN, even if it generalizes well when trained to recognize a regular language, is indeed regular. To tackle this issue, authors proposed to create another automaton, called the abstraction automaton, to be representative of the RNN. This abstraction automaton maintains a partitioning of the state space, by making use of some clustering technique just as presented in the previous section. The algorithm then involves three actors: - The automaton to output, the abstraction automaton and the RNN. The equivalence query is then performed by testing if the automaton we try to learn is equivalent to the abstraction automaton. When the equivalence is negative, then there are two cases: Either the abstraction automaton or the learnt one are not consistent with the target RNN\footnote{This is only in the case of binary alphabet. If the alphabet size is greater than 2, then both the abstraction automaton and the learnt automaton could be wrong. This represents one of the weaknesses of this algorithm}. The algorithm resolves this conflict by querying the RNN on the output of the conflicting word. If the classification error on this word was made by the automaton to learn, then it's given back to it as a counterexample to update its observation table. If the error is made by the abstraction automaton, then it updates its partitioning of the hidden state space to account for this new prefix. Interestingly, unlike classical clustering strategies discussed in the previous section, the partitioning strategy is done in an incremental fashion by an adaptive learning strategy. This helps control the coarseness of the partition, then the time cost of its run.   
 
\subsubsection{Discussion on the experimental setup and evaluation measures}
In the literature, there is still a lack of a unifying experimental framework and consensual evaluation measures to assess and compare different DFA extraction algorithms. We give two reasons for that: 
\begin{itemize}
    \item Objectives of why we aim at extracting finite state machines from RNNs differ from one work to another. Depending on the objective of the experiment, the assessment of the quality of the extraction will be naturally different. In Section 3.2.3, we'll analyze in depth different measures of accuracy employed in the literature to evaluate the quality of extraction.
    \item Second, due to the \emph{two-stage} nature of the experiment as will be discussed later, there are many exogenous factors that will influence the outcome of experiments and might bias our interpretation of results. To cite some, the way we generate the training dataset, the initialization of the network, the training algorithm used to train the network. Some works explored in the literature focused mainly on the sensitivity of the quality of extracted DFAs to the initialization point \cite{Wang18a}.  As we shall see in next sections, the classical experimental setup is more sophisticated to be reduced to this sole factor, and many other experimental biases might intervene during the evaluation process.
\end{itemize}
We will present next a broad picture of the experimental setup classically used to test FSM extraction algorithms, and evaluation measures to assess their performances.   
\subsubsection{Paronamic view of the experimental setup}
In this section, we'll first describe the experimental setup that is classically used to evaluate algorithms aiming at extracting finite state machines from RNNs. We'll then follow up by an informal discussion on this experimental setup through the lens of classical results of computational learning theory concerning the learnability of DFAs. \\ 
From a panoramic point of view, the experimental setup comprises two steps:
\begin{itemize}[leftmargin=0cm]
    \item \textbf{Learning a regular language by an RNN:} This phase proceeds by fixing a RNN architecture of interest and train an RNN until we ensure that it generalizes well on the target regular language. This phase is performed as it's classically done for learning neural networks by generating a training dataset from an underlying selected distribution and train the network with the stochastic gradient descent algorithm or one of its variants. In most experiments encountered in the litterature, an RNN is considered to generalize well on the language if it achieves an accuracy rate over $99\%$ on a test dataset generated uniformly at random from a probability distribution pf strings with a fixed length. Note that, unlike in practical applications where neural networks are used in which the underlying distribution is unknown and the experimenter has only access to a sample assumed to be drawn independently from it, in the context of this experimental setup, the probability distribution represents a parameter that the experimenter can control, and a natural question that arises is at what extent this parameter influences the learnability of regular languages by RNNs. Besides the underlying data generating distribution, other parameters during this phase that might be suspected to influence the outcome is the algorithm used to learn the network, and the chosen loss function. In fact, those parameters introduce an implicit regularization bias on the learning process \cite{lei18}, \cite{ali20}, \cite{neyshabur17}, and may favor certain hypothesis in the search space over others. A question then arises is whether there exists a bias in the training procedure that favors hypothesis representing regular languages over others.  
    \item \textbf{Extracting a DFA from the RNN:} The second phase consists at running an algorithm that extracts a deterministic finite automaton from the trained RNN. This task can also be framed as a learning problem by taking the RNN as a query oracle. 
\end{itemize}
 Besides our main objective behind this experimental setup, this latter is very interesting to analyze it from a theoretical viewpoint, and may give insight into connections in terms of learnability between formal languages and RNNs. Conceptually, this experimental setup can be seen as a sort of Encoder-Decoder architecture where the learner in the first phase attempts to encode the target regular language into an intermediary neural representation then decodes it again, by means of the extractor algorithm, into a DFA. Intuitively, randomness is introduced in the process through the training sample used to learn the RNN. Hence, information is inevitably lost about the target language during the process. We informally address the following question:  \\

    \noindent \textbf{Question(Informal):} \textit{Does there exist a class of RNNs such that for any regular language, for any probability distribution $\mathbb{P}$ used to generate a training sample, this experimental setup runs in reasonable time and outputs a DFA that approximates well the target language with high probability.} \\

 This question, informally stated here, can be formally stated and hints to the polynomial-time PAC-learnability of the class of deterministic finite automata. \emph{Reasonable time} can be rigorously defined as being a polynomial time with respect to the size of the target grammar, the desired confidence and approximation error. Under cryptographic assumptions, the answer of this question is negative, otherwise the described experimental setup will represent an algorithm capable of inverting the RSA encryption function \cite{kearns94}. Another interesting consequence of the hardness result of PAC-Learning DFAs is that there could exist no RNN achitecture that holds the following three properties: (1) polynomial-time PAC Learnability, (2) expressiveness power at least equivalent to the class of regular languages (3) polynomial evaluability. \\ 
 In the context of our experimental objectives, the previous statement suggests the importance of the data generating distribution. In fact, given the informal statement above, the experimental setup taken as a whole doesn't hold the  distribution-free property for ensuring generalizability for any chosen probability distribution, and, due to its theoretical limitation, the bias related to the probability distribution chosen by the experimenter is inevitable.
\subsubsection{The data generation problem}
  The classical way of generating a training a dataset consists at sampling uniformly at random from a finite support whose size is fixed by the experimenter, then label samples that belong to the language as positive, and others as negative. The choice of the bound on the finite support depends on the size of the minimal automaton recognizing the language, and it's chosen to be at least twice the size of this latter. A theoretical justification of this choice is that an upper bound of the length of the the string that distinguishes between two minimal DFAs of size, say $k$, is known to be equal to $2k-1$ \cite{trakh73}. Thus, in principle, all the information about a given regular grammar of size $k$ is contained within this support. Also, when the target grammar is highly imbalanced, a quota sampling strategy used to keep a fairly balanced dataset. A classically employed learning strategy consists at fixing a ratio between positive and negative used the length of strings as a blocking variable. Consequently,  interestingly, in \cite{wang17}, authors proposed a totally opposite strategy where they suggested to up-sample the minority class and they gave theoretical argument based on the entropy of grammars to show the relevance of such choice. \\
 Besides the complexity of the grammar and the RNN architecture, we wish to evaluate in our experiments the influence of the generating distribution on the quality of extracting automata from RNNs. So, in addition to sampling strategies described above, we propose to test another sampling strategy that stems its legitimacy from the notion of Myhill-Neyrode equivalence discussed in earlier sections. The basic intuition is that an RNN trained to recognize a regular language must be trained to preserve the equivalence relation between prefixes that belong to the same state in the canonical minimal DFA. An RNN that is capable to discriminate between prefixes that belong to different states will lead to good generalization capabilities. Our general procedure for forming a training dataset runs as follows: 
 \begin{itemize}
     \item Run a breadth-first search to generate $k$ prefixes for each state of the target canonical automaton, where $k$ will be set according to its size,
    \item  Sample uniformly at random strings of length proportional to the size of the automaton, 
    \item Concatenate prefixes obtained in the first step with random strings obtained in the second step and label the obtained dataset,
    \item Ensure that the ratio of difference of labels between every pair of equivalence prefix classes is approximately 1:1,
 \end{itemize}
 
\subsubsection{Evaluation measures}
The problem of evaluating the quality of extraction will naturally depend on the objective of the experiment. In the literature, the problem of extracting automata from recurrent neural networks is motivated by two different objectives: The problem of understanding how languages are internally represented within different classes of RNN architectures, and the problem of providing a good symbolic approximation of a target automaton. The former problem holds a theoretical interest and serves to understand how RNNs arrange states within their continuous state space when learned to recognize a regular language. The latter problem is more practically oriented, and finds mainly its applications in the problem of Model Checking and formal specification. \\ 
The choice of how to evaluate the quality of extraction will naturally depend on the objective of the experiment. We'll present in the following commonly used evaluation measures to assess the quality of a DFA extraction algorithm: 
\begin{itemize}
    \item The generalization capacity of the extracted automaton with respect to the trained RNN: This measure is used when the goal of the experiment is to evaluate the quality of how well the extracted DFA approximates the behavior of the target RNN. The generalization capacity is evaluated by using a test dataset generated uniformly at random from different sizes of finite support. An important property of a good algorithm is to evaluate its generalization capacity in a support larger than the one used for extraction. 
    \item The success rate of extracting the DFA: This measure is more conservative and It evaluates if the obtained automaton is equivalent to the target automaton. In order to evaluate robustly this measure, we need to isolate the noise injected during the training phase of the RNN. For this, the training and extraction protocol is run multiple times and the average number of equivalent extracted automata is measured. Intuitively, it's legitimate to believe that the size of the automaton is going to be proportional with performances in terms of success rates. We'll see in our experiments that this is not systematically the case, and that when dealing with learnability issues, other statistical factors are involved to account for the complexity of learning automata. 
    \item Computational efficiency: it is measured by the running time for the DFA extraction algorithm
    \item The size of the extracted DFA: if the generalization capacity in a finite support is the only factor that accounts for the performance of a DFA algorithm, then one can simply run a \emph{state-merging} algorithm, such as RPNI \cite{higu10}, under this support and a perfect matching of the support is obtained. For fairness, we need to account for the size of the extracted DFA to compare between DFA extraction algorithms. Note that this measure is also correlated with the computational efficiency. 
\end{itemize}
\subsection{Experiments and results}
\subsubsection{Details of the experiment}
Before presenting details of the experiment, let's first recall its main objectives. First, we aim at comparing different algorithms that are present in the literature in terms of their computational efficiency, the quality of extracted DFAs by these algorithms according to two aspects: The approximation quality of the extracted automaton, or even, more rigorously, the success rate of recovering the target automaton and the second aspect is the size of the extracted DFA. A second objective is to evaluate classes of RNN architectures from which extracting DFAs is more easier. Next, we'll present details of RNN classes and selected architectures used for the sake of our experiments, and the used training setup. Details about the benchmark will also be given. \\

$\bullet$\textbf{Benchmark and Data set generation.} \\ 
The used benchmark is the one traditionally used to examine DFA extraction algorithms, Tomita Grammars \cite{Tomita82}. The description of these tomita grammars are given in Table 3.1. The DFA with maximal size among Tomita grammars is equal to 5 (Grammars 3 and 7).  
\begin{table}
\begin{center}
 \begin{tabular}{|c | c | c| c |} 
 \hline
 Grammar & Description & Length & Train \\ [0.5ex] 
 \hline\hline
 1 & $1*$ & 22 & 800 \\ 
 \hline
 2 & $(10)*$ & 22 & 800 \\
 \hline
 3 & Strings with no $0^{2n+1}1^{2m+1}$ as substring & 25 & 1500  \\
 \hline
 4 & Strings not contain $000$ as substring & 22 & 950  \\
 \hline
 5 & strings with even number of 0/1's & 25 & 1200  \\ 
 \hline
 6 & Difference between number of 0/1 is $3n$ & 30 & 1200  \\ 
 \hline 
 7 & $0^{*}1^{*}0^{*}1^{*}$ & 35 & 1750  \\
 \hline 
\end{tabular}
\caption{Description of Tomita Grammars and the maximum support length for data set generation}
\end{center}
\end{table}

For each Tomita grammar using two training datasets, one based on a classical uniform sampling with up-sampling strategy for grammars that have imbalanced, and the other sampling strategy using \emph{prefix quota-based} strategy as explained in earlier section. Validation and test datasets are generated uniformly at random from a support of fixed length. Details of the length of the support and the size of the training dataset are given in Table 3.1. \\

$\bullet$\textbf{Selected architectures and the training setup.} \\ In our experiments, we choose to compare between four classes of RNN architectures: First-order RNNs and second-order RNNs with sigmoid and \emph{tanh} functions, LSTMs and GRUs. For each class of RNNs, we select three architectures with hidden state dimension: 50, 100, 150. It is worth noting that RNNs with Relu activation functions is exluded in our experiments because clustering-based methods are sensitive to the compactness of the support of the hidden state state space. RNN with Relu activation function have an infinite support, as its image space is the whole positive orthant.

 In order to control the bias introduced by the training phase to the extraction phase, we selected only networks that has achieved a rate $>99\%$ of accuracy on the training loss and $>85\%$ on the test dataset. We note that to achieve those rates, the training had to be performed multiple times for first-order RNNs with small state size dimension. LSTMs and GRUs had no difficulty to fit the desired accuracy rates.  Networks were initialized randomly using a gaussian distribution, and Adam was used as an optimizer. We justify our first choice by the fact that we want our experimental setup to run in conditions similar to the usual way RNNs are initialized. However, this factor represents another random parameter in the experimental setup that might influence the outcome, especially the success rate of extracted DFA. For this, we trained every architecture five times, and stored each of the obtained network to be used for extraction. 
\subsubsection{Experimental results}
We compare the result of three algorithms: Clustering-based algorithm using K-means as a clustering method with a BFS search strategy, the adaptation of the $L*$ algorithm as proposed by Weiss et al \cite{Weiss18b}, and a quantization-based algorithm \cite{Giles91} as a baseline. Results of the experiment are given in the appendix. We tested $K=5,10,15$ as a parameter for the number of clusters for the clustering based method. As for the $L*$ algorithm, parameters used are similar to those given by authors in their original article. We used the source code provided by authors\footnote{\url{https://github.com/tech-srl/lstar_extraction}} for implementation. Results of the experiment for grammars 1, 4 and 5 are given in the appendix. 

\subsubsection{Interpretation of results}
 The results obtained show that the $L*$ algorithm clearly outperforms all other algorithms in terms of time efficiency, exact extractability and the generalization accuracy. The quantization-based algorithm provides poor performance, and when applied to architectures with large hidden state space spends an excessively high running time to succeed. \\ 
 In the following, we'll discuss two observed phenomena that we observed in our experiments: The high average success rate of some algorithms, and the generalizability of the extracted automaton to uncovered support. \\
 
\noindent $\bullet$ \textbf{Connection between the learnability of a language and its entropy.} \\
 From an information-theoretic viewpoint, the fact that some methods enjoys high average success rates is rather surprising. It means that they were capable of correcting the noise injected to the language during the training phase without the assistance of any additional information that may compensate the loss  except the one given implicitly by restricting our description to belong to the class of regular grammars.  One may account this fact to the relatively small size of grammars used in the experiment. However, more surprisingly, grammar 7 whose DFA size is equal to 5 enjoys higher accuracy rates than some smaller sized grammars. Our experiments seem to confirm theoretical insights proposed by Zhang et al. \cite{Zhang18} in which they connect the successful learnability of regular languages by RNNs with the entropy of the language, i.e. lower entropy languages(or, alternatively, those that holds a small information content) are easier to learn than higher ones. In traditional grammatical inference field, the complexity of learning grammars is related to their size. However, what this observation seems to suggest is that another hierarchisation of formal grammars in terms of their information-theoretic properties would be more relevant when dealing with their learnability. This information-theoretic perspective of learning formal languages through RNNs (that serve the role, in this case, of a noisy channel) is left as a future direction of research. \\
 
\noindent $\bullet$ \textbf{Understanding the clustering hypothesis through Lipchitzianity and connections with Deep Learning Generalization Theory.} \\
 Another observation that we report in light of our experiments is that clustering-based algorithms enjoy two empirical properties: Generalize well beyond their covered support during the run of the algorithm. This is witnessed in all clustering algorithms used in our conducted experiments, which seems to confirm that the resulting network is stable, and that the clustering hypothesis is valid. Authors in \cite{dhingra17} hinted that this fact is due to the smoothness property that trained network enjoy in practice. Interestingly, this smoothness property is intimately related to the spectral norm of weight matrices \cite{barlett17}, \cite{Bietti18}, a notion suspected by Deep Learning theorists to account for the generalization capabilities of Deep Learning architectures. Recently, authors in [] gave a generalization bound of RNN-Relus which depend on the $l_{1}$-norm of its transition matrix. Informally, if It's proven that It's possible to extract \emph{fairly small} finite  state automata can compress with a \emph{fairly good} precision RNNs with small norm complexity, as experiments seem to suggest, then this would mean that the real effective capacity of architectures with low spectral normal complexity can be quantified by the \emph{fairly small} finite state automata by which It is well-approximated.  The idea of proving generalization bounds on Deep Learning Architectures by means of compression was formalized by Arora et al. in \cite{arora18}. In light of our experimental results, we conjecture the following: 
 \begin{conjecture}
  Finite state automata tend to approximate well RNNs with low spectral norm complexity.
 \end{conjecture}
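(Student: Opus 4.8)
The plan is first to turn the informal conjecture into a precise statement, since both \emph{approximate well} and \emph{low spectral norm complexity} are currently undefined. I would quantify complexity by the per-step Lipschitz constant $L := L_{\phi}\,\|W\|_{2}$ of the recurrent cell, where $L_{\phi}$ is the Lipschitz constant of the activation $\phi$ (so $L_{\phi}=1$ for $\tanh$ and $ReLu$, $L_{\phi}=\tfrac14$ for the sigmoid), and interpret ``low complexity'' as the contractive regime $L<1$. For the approximation target I would use a distance already native to this thesis --- either the $0/1$ disagreement rate over $\Sigma^{\leq N}$ under the data distribution, or the Tchebychev-type distance $d_{\infty}$ from Section 2 --- and ask for a DFA whose disagreement with the language recognized by the RNN is at most $\epsilon$.

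The technical core is a fading-memory lemma. Under $L<1$ the per-symbol update $h\mapsto\phi(Wh+b_{\sigma}+c)$ is a contraction, so for any prefixes $u,v$ and any common suffix $s$ of length $k$ one gets $\|h(us)-h(vs)\|_{2}\leq L^{k}\,\mathrm{diam}(\mathcal{H})$, where $\mathcal{H}$ is the bounded reachable set of hidden states. Hence the state reached after reading a string is determined, up to error $\epsilon$, by its last $k=O\!\big(\log(1/\epsilon)/\log(1/L)\big)$ symbols. I would then construct the approximating automaton as the $k$-definite machine whose states are windows in $\Sigma^{\leq k}$, whose transitions shift the window, and whose output is obtained by applying the (Lipschitz) readout $g$ to a representative hidden vector for each window. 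This automaton has at most $|\Sigma|^{k}=(1/\epsilon)^{O(\log|\Sigma|/\log(1/L))}$ states before minimization, so its size is polynomial in $1/\epsilon$ and degrades gracefully as $\|W\|_{2}\uparrow$ --- precisely the qualitative content of the conjecture. Minimizing this DFA (polynomial time, Section 1) then yields the claimed \emph{fairly small} machine. The remaining step is the error bound, and this is where a margin enters: the $k$-definite machine can only disagree with the RNN on strings whose hidden state lands within an $O(\epsilon)$ band of the decision boundary induced by $g$. I would control the measure of this band either by a margin assumption on the trained network or by bounding its probability under the generating distribution, and then feed the resulting compression --- a small DFA matching the RNN up to $\epsilon$ --- into the compression-based generalization framework of Arora et al.\ \cite{arora18}, closing the loop with the spectral-norm bounds of \cite{barlett17}.

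I expect the main obstacle to be the regime $L\geq 1$, which is exactly the empirically interesting one: the experiments and the clustering hypothesis suggest that networks whose global Lipschitz constant exceeds $1$ still cluster and remain approximable by small automata, yet the fading-memory argument above collapses there --- quantization errors can accumulate exponentially in the string length and the reachable set need not admit a small cover. Proving the conjecture in that regime would require replacing the global contraction by an \emph{effective} contraction onto a finite attractor: one must show that trained networks contract along the directions that carry the classification signal, so that hidden states are drawn toward finitely many clusters, even when they expand in irrelevant directions. Formalizing this attractor structure, rather than a uniform bound on $\|W\|_{2}$, is the crux and is precisely why the statement is posed as a conjecture.

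A secondary difficulty is that any fully distribution-free version is almost certainly false: by the cryptographic hardness of PAC-learning DFAs invoked earlier, the margin-band measure cannot be controlled uniformly over all distributions, so the theorem can hold only in the ``tends to'' sense, under distributional or margin hypotheses that must be made explicit. I would therefore aim for a conditional statement --- contraction (or effective contraction) plus a margin/distribution assumption imply approximability by a DFA of size polynomial in $1/\epsilon$ and controlled by $L$ --- and treat the unconditional, spectral-norm-only form as the genuinely open residue of the conjecture.
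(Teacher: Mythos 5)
You should first be clear about what the paper does here: the statement is posed as a \emph{conjecture}, and the thesis offers no proof of it --- only two pieces of motivation, namely the proposition that $x \mapsto ReLu(Wx+b)$ is $\|W\|_{F}$-Lipschitz (note the paper works with the Frobenius norm, which upper-bounds the spectral norm your formalization uses) and the informal remark that a contractive transition ($L<1$) forces all prefixes to become Nerode-equivalent within a finite horizon. So there is no paper proof for your attempt to diverge from, and your proposal cannot be faulted for failing to close a claim the author also leaves open. Judged on its own terms, your programme is sound and is a genuine sharpening of the paper's discussion: the fading-memory lemma in the contractive regime is correct (it is the standard argument that a globally contractive recurrent map computes a $k$-definite language up to resolution $\epsilon$ with $k=O(\log(1/\epsilon)/\log(1/L))$), your window automaton with $|\Sigma|^{k}$ states makes precise what the paper gestures at, and in fact your construction corrects the paper's loose heading about ``convergence to the single state automaton'': contraction merges prefixes sharing a length-$k$ suffix, which yields a $k$-definite machine, not literally one state. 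Your insistence on a margin or distributional hypothesis to control the measure of the $O(\epsilon)$ band around the decision boundary is also necessary --- without it, two hidden states $\epsilon$-close can still be classified differently, and the disagreement rate of the window automaton is uncontrolled --- and your appeal to the cryptographic hardness of PAC-learning DFAs to rule out a fully distribution-free version is consistent with the paper's own Section 3 discussion.

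The one point to emphasize is the one you already flag: your proof lives entirely in the regime $L<1$, while the conjecture's empirical content (the clustering observed in trained sigmoid/tanh/LSTM networks with large hidden dimension) concerns networks that are almost certainly \emph{not} globally contractive, and in the contractive regime the approximating automaton is nearly trivial, so the theorem you can actually prove skirts the interesting case. Your proposed remedy --- an ``effective contraction'' onto a finite attractor along task-relevant directions --- is exactly the missing idea, and it is worth noting that nothing in the paper's Lipschitz proposition supplies it: a bound via $\|W\|_{F}$ or $\|W\|_{2}$ is a worst-case, direction-uniform statement, whereas the conjecture needs an anisotropic or data-dependent notion of smoothness. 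So the honest summary is: you have given a correct conditional theorem (contraction plus margin implies a small approximating DFA, with size degrading as $L\uparrow 1$) together with an accurate diagnosis of why the unconditional spectral-norm-only statement remains open, which is strictly more than the paper itself establishes.
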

 We give here some theoretical insights in the perspective of making this intuition more formal: \\
   $~~~$ The smoothness of a function is formalized by the notion of Lipchitzness. Let $L \in \mathbb{R}^{+}$. A function $f: \mathbb{R}^{d} \leftarrow \mathbb{R}^{d'}$ is said to be $L$-Lipchitz with respect to norms $<||.||_{d},~||.||_{d'}>$, where $||.||_{d}$ (resp. $||.||_{d'}$) are arbitrary norms in $\mathbb{R}^{d}$ (resp. $\mathbb{R}^{d'}$) if It satisfies the property: $||f(x) - f(y)||_{d'} \leq L ||x - y||_{d}$. An interpretation of this property is that if two vectors $x,~y$ are close to each other in the vector space, then the difference of their image after application of the function $f$ will not go further from each other as their gap is controlled by the distance between $x,~y$. This is exactly the desired property in the context of learning regular languages and which might explain the ease by which the hidden state space of the network can be clustered: In other words, two hidden vectors representing two prefix and belonging to the same cluster (the same state) need to stay close to each other to be grouped in the same cluster when a new symbol is run from these two vectors. Let $f_{W}$ be a transition function of some RNN architecture. Its Lipchitzianity will naturally depend on the transition matrix. For illustration, we give the following proposition for the case of ReLu functions.
   \begin{proposition}
     For any $d>0$, any matrix $W \in \mathbb{R}^{d \times d}$, the function $f(x) = ReLu(Wx+b)$ is $||W||_{F}$-Lipchitzian, \\
     where $||W||_{F} = \sqrt{\sum_{ij}W_{ij}^{2}}$ is the Frobenius norm, and $||.||_{2}$ is the standard euclidian norm,
   \end{proposition}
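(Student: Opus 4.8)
The plan is to decompose $f$ as the composition of the affine map $A(x) = Wx + b$ and the coordinatewise $\mathrm{ReLu}$ nonlinearity, and to bound the Lipschitz constant of each factor separately before chaining them. First I would treat the nonlinearity. The scalar $\mathrm{ReLu}$, $r(t) = \max(0,t)$, satisfies $|r(s) - r(t)| \leq |s - t|$ for all $s,t \in \mathbb{R}$, which follows from a short case analysis on the signs of $s$ and $t$ (when both are nonnegative the difference is exactly $|s-t|$, when both are nonpositive it is $0$, and the mixed case is dominated by $|s-t|$). Applying $r$ coordinatewise, squaring, and summing over coordinates then shows that the vector map $\mathrm{ReLu}: \mathbb{R}^{d} \to \mathbb{R}^{d}$ is $1$-Lipschitz with respect to the Euclidean norm, since $\sum_{i} |r(u_{i}) - r(v_{i})|^{2} \leq \sum_{i} |u_{i} - v_{i}|^{2}$.

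Second, I would handle the affine part. The bias $b$ cancels in any difference, so $||A(x) - A(y)||_{2} = ||W(x-y)||_{2}$, and the task reduces to bounding $||Wz||_{2}$ by $||W||_{F}\,||z||_{2}$ for arbitrary $z \in \mathbb{R}^{d}$. The key step here is the inequality $||Wz||_{2} \leq ||W||_{F}\,||z||_{2}$: writing the $i$-th coordinate of $Wz$ as the inner product of the $i$-th row of $W$ with $z$ and applying Cauchy--Schwarz coordinatewise, then summing over $i$, yields $||Wz||_{2}^{2} \leq \big(\sum_{ij} W_{ij}^{2}\big)\,||z||_{2}^{2} = ||W||_{F}^{2}\,||z||_{2}^{2}$. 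This is just the standard fact that the operator (spectral) norm is dominated by the Frobenius norm.

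Finally I would chain the two bounds, using that $A(x) - A(y) = W(x-y)$ and that $\mathrm{ReLu}$ is $1$-Lipschitz, to obtain
$$ ||f(x) - f(y)||_{2} = ||\mathrm{ReLu}(A(x)) - \mathrm{ReLu}(A(y))||_{2} \leq ||A(x) - A(y)||_{2} = ||W(x-y)||_{2} \leq ||W||_{F}\,||x-y||_{2}, $$
which is exactly the claimed $||W||_{F}$-Lipschitzness.

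I do not expect a genuine obstacle here, since every step is elementary; the only subtlety worth flagging is that the Frobenius norm is not the sharpest constant. The tightest Lipschitz constant of $f$ with respect to the Euclidean norm is the spectral norm $||W||_{2}$, and $||W||_{F}$ is merely a convenient, easily computable upper bound ($||W||_{2} \leq ||W||_{F}$), which suffices for the qualitative point that smaller weight-norm complexity forces the transition map to be smoother.
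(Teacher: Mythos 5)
Your proposal is correct and follows essentially the same route as the paper's proof: the $1$-Lipschitzness of the coordinatewise $\mathrm{ReLu}$ combined with the bound $\lVert Wz\rVert_{2} \leq \lVert W\rVert_{F}\,\lVert z\rVert_{2}$, chained together after the bias cancels. You merely supply the elementary details (the case analysis for scalar $\mathrm{ReLu}$ and the Cauchy--Schwarz argument for the Frobenius bound) that the paper cites as known facts, and your closing remark that the spectral norm $\lVert W\rVert_{2}$ is the sharp constant is a correct and worthwhile observation.
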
 
  
   \begin{proof}
     Observing that $ReLu$ is a 1-Lipchitzian function, we have: 
     \begin{equation*}
         \begin{aligned}
           ||f(x) - f(y)|| &= ||ReLu(Wx+b) - ReLu(Wy+b)|| \\ 
           & \leq || Wx-Wy||  \\
           & \leq ||W||_{F} ||x-y|| \\
         \end{aligned}
     \end{equation*}
    The first inequality is due to the Lipchitzianity property of the $ReLu$ function, and the second is due to the inequality: $||Ax||_{2} \leq ||A||_{F} ||x||_{2}$.
   \end{proof}
  It's worth mentioning that the same proposition can be stated for $tanh$ and sigmoid functions. \\
  
\noindent  $\bullet$ \textbf{The contractive network and the convergence to the single state automaton:} An insightful remark is that if the transition $f$ is contractive (i.e. $L < 1$), then any two vectors in the vector space will get closer to each other at an exponential rate. From the perspective of automata extraction, this means that all prefixes will be Nerode-equivalent under a finite horizon in the network. This additional insight shows the importance of the analysis of the spectral norm of the networks parameters to develop better strategies of clustering.  \\ 
  This smoothness property might also explain why adding intermediate layers in the architecture, as suggested by different authors, can help the learning phase. Those intermediate layers would enhance the smoothness properties of the transition function.
  
  \subsection{Conclusion}
  In this section, we addressed the problem of extracting Finite State Machines from RNNs. We discussed issues related to the experimental setup. In light of our experiments and new findings of generalization deep learning theory, we provided some insights into the longstanding \emph{"Clustering theory"}. \\ 
  The major limitation of all algorithms presented in this section is that they rely on a finite support traversed by a BFS strategy to recover a final state automaton. In the next section, we'll propose another paradigm of extraction. This novel paradigm will take the form of an extension of an active learning framework where RNNs and FSMs are given a language model of reference which provides a ground truth of the approximation quality. 
\section{Learnability of RNNs by Finite State Machines with a reference language model oracle}
We discussed in earlier sections how objectives differ when one considers the problem of extracting finite state machines from RNNs: Either the theoretical motivation of understanding how RNNs represent languages, or a practical necessity of \emph{compiling} the neural network into a transparent, computationally efficient machine deployable in real-word applications where requirements in terms of interpretability and the computational performance in terms of string processing are of utmost importance. In this latter case, RNNs are trained to perform a specific task, and are required to give accurate predictions when presented with unseen examples. The performance of RNNs can't be decoupled from the task for which it was designed and trained. Transitively, the quality of how well a machine approximates well a recurrent neural network is necessarily bound to its predictive power on the task for which the original RNN was designed. In simpler terms, a DFA that claims to approximate well the behavior of an RNN is required to provide evidence of its capacity to generalize well to new unseen examples with performances fairly close to the original RNN. Besides their theoretical interest, methods presented in previous sections dismissed this practical aspect of approximating RNNs, and extracted automata are totally agnostic to the target task, giving equal importance to all strings in the covered finite support during their execution, to not mention its scalability issues and a lack of theoretical guarantees. The fact is that generally, in practice, not all strings are of equal importance: some strings are more likely to occur than others. To illustrate, let's take the example of sentiment analysis: suppose we have an RNN trained as a classifier to predict whether a given sentence communicates a positive or a negative sentiment. And, our aim is at extracting a finite state machine, say DFA, that approximates well the behavior of the RNN. A typical behavior of algorithms presented so far would lead to a resulting automaton that assigns equal importance to a sentence like \emph{"I am happy"} and \emph{"Be to tree"}. Trivially, this behavior is undesirable in this practical scenario. First, due to their scalability limitations, those algorithms would fail to give a significant automaton in a reasonable time. And, even if one of these algorithms was given enough computational and memory resources over a sufficiently large support, It would be too large to be exploited in real-world applications. Another important aspect that needs to be questioned is even the notion of approximating a non-regular language by a regular automaton. Indeed, RNNs, even when trained to recognize regular languages, are not guaranteed to be regular, and attempting to measure the quality of an extracted automaton with respect to its capacity to recognize a non-regular target language is questionable. In light of this previous discussion, we argue that any successful strategy of finite state machine extraction requires the assistance of an \emph{oracle} in the form of a language model that guides the extraction algorithm during its run. This language model also will serve as a \emph{referee} between the teacher \emph{(the RNN)} and the learner \emph{(the automaton)}, and decides whether this latter is a good approximation of the target RNN. This reference language would take the form of any type of probabilistic machine: PDFA/PFA/RNN-LMs etc. Under this setting, the problem of extracting finite state machines is framed as a learning problem.  \\
 In this section, we'll propose an extension of the active learning framework that best suits the practical objective of approximating RNNs with FSMs. The framework is designed to be realistic in terms of the feasibility, the oracle query complexity and doesn't enforce the regularity requirement of the RNN.  
 
 \subsection{The learning framework with a reference language model oracle}
  In the classical active learning framework, the learner is given access to an oracle teacher which provides a privileged information about the target function (in our case, it's the target RNN). For the rest of this section, it will be denoted by $R$). In our learning scenario, an additional actor is involved in the learning setup: A language model of reference $\mathbb{P}$ over the set of strings. This language model would be accessed by three types of oracles: 
  \begin{itemize}[leftmargin=0cm]
      \item \textbf{The sampling oracle:} an oracle that samples randomly a string $x$ according to $\mathbb{P}$  and returns the sampled string to the learner. This is the default oracle that is used in the classical active learning framework, 
      \item \textbf{The probability weight oracle:} an oracle that samples randomly a string $x$ according to $\mathbb{P}$ and returns the pair $(x, \mathbb{P}(x))$ to the learner, 
      \item \textbf{The highest probable string oracle:} An oracle that takes the history set of already sampled words by the learner denoted $H$, and returns a string $x \in argmax_{w \in \Sigma^{*} \bigcap \bar{H}} ~~\mathbb{P}(w)$. Informally, this oracle delivers the most probable string that the learner hasn't seen yet.   
      \item \textbf{The test oracle:} an oracle that receives an automaton $\mathcal{A}_{t}$ from the learner and a parameter $\epsilon \in (0,1)$, and returns yes if $\mathbb{P}(R(x) \neq \mathcal{A}_{t}(x)) \leq \inf\limits_{{\mathcal{A} \in \mathcal{DFA}}(\Sigma)} \mathbb{P}(R(x) \neq \mathcal{A}(x))  + \epsilon$, no otherwise.  
 \end{itemize}
 To ease notation, we'll denote in the rest of this section $\mathcal{L}_{\mathbb{P}}(\mathcal{A}) = \mathbb{P}(R(x) \neq \mathcal{A}_{t}(x))$, and $\mathcal{L}_{S}(\mathcal{A}$ to be the ratio of misclassified examples in $S$ by the automaton $\mathcal{A}$.   
 Notice that the test made by the test oracle assumes implicitly that $R$ doesn't necessarily have to belong to the class of regular languages, If It's the case, we'll have $\min\limits_{\mathcal{A} \in \mathcal{DFA}(\Sigma)} \mathbb{P}(R(x) \neq \mathcal{A}(x)) = 0$.  
 Moreover, the learner has access to the membership query oracle which returns the label of a string query $x$. \\ 
 Under the active learning framework, there are many possible learning settings: 
 \begin{itemize}
     \item \textbf{Batch Learning:} Under this setting, the learner queries the oracle only once for training, where It is given data in batch. Then, the learner exploits this batch data to infer the automaton, 
     \item \textbf{Incremental Learning:} The learning configuration runs in rounds. At round $t$, the learner queries oracles accessible to it, then based on the given answers, updates its hypothesis automaton selected in the previous round, 
 \end{itemize}
The efficiency of an algorithm will be measured in terms of the number of oracle calls it has to make to reach PAC-style convergence guarantees. \\ 
The learning framework is rich with many alternative learning scenarios. In the next section, we'll provide theoretical analysis of two scenarios: the case of batch learning with Membership, and the sampling Oracle, and the case of batch learning with the highest probable string oracle and the membership oracle. We'll leave analysis of other scenarios for future research. 

\subsection{Batch learning with the membership and sampling oracle}
 In this section, we'll propose a learning scenario where the learner has only access to the membership and the sampling oracle. It is important to mention at this point that there are two ways the membership oracle could be accessed: either passively, i.e. the learner queries the sampling oracle, then immediately queries its label from the membership oracle, or actively by decoupling the sampling oracle from a systematic call to the membership oracle, and the learner can query the membership oracle independently from the sampling oracle. Our framework encompasses both scenarios. The former case resembles more to the classical supervised learning framework, and It's known to be the hardest one in terms of PAC learnability guarantees \cite{kearns94}. The latter gives more favorable learning conditions to the learner and it is known that the class of deterministic finite state automata are PAC learnable with membership queries. The proposed work in this section addresses the more challenging case: The passive learning case. It is known that, under cryptographic assumptions, the class of DFAs are known to be not PAC-Learnable with a polynomial sample complexity \cite{kearns94}. Under our agnostic setting, the problem is even more challenging as the target concept doesn't necessarily have to belong to the concept class of finite state automata. Hence, deriving distribution-free generalization bounds in this learning scenario is theoretically impossible. When faced with PAC Hardness results, there are generally four types types of strategies to tackle this issue: (1) Constraining the space of probability distributions under which the learning becomes successful (which translates from a theoretical point of view to distribution-specific generalization bound, thus scarifying the powerful distribution-free property), (2) or by constraining the expressiveness power of your concept class, (3) or by assuming a sort of \emph{topology} of your concept class that makes learning easier (Nonuniform learnability paradigm is a successful example of such strategy \cite{benedeck94}), (4) Assisting the learner with oracles during training that provide it with privileged information (Exact learning framework is another successful example of such strategy. \\ 
 We present in the rest of this section the most basic learning setup, yet the most challenging, where the learner has access to the sampling oracle, obtains an unlabelled batch data drawn independently at random from a distribution $\mathbb{P}$, then queries the membership oracle to label them. We'll call such learner a \emph{passive} learner.
 In the rest of this section, we'll first present the learning protocol. Then, we'll describe assumptions made on the probability distribution and the target function which give a sufficient condition to recover uniform PAC bounds from a non-uniform PAC one. Finally, based on those assumptions, we shall give bounds on the number of queries required to recover, \textit{with high probability}, an approximate automaton of the RNN. We recall the meaning of big $O$ notation: $f(x) \in O(g(x))$ if and only if there exists a positive number $C$, and a number $x_{0}$, such that for all $x > x_{0}:~f(x) \leq Cg(x)$.
 \subsubsection{The learning protocol}
 The learning protocol of the passive learner is rather simple and runs as follows:
 \begin{enumerate}
     \item The learner makes $m$ queries to the sampler oracle,
     \item After getting $m$ samples, the learner makes $m$ queries to the membership oracle to obtain labels, 
     \item  After obtaining a labeled sample, denoted by $S$, the learner minimizes the following optimization problem, and outputs the resulting DFA:
     $$L(\mathcal{A}) = \mathcal{L}_{S}(\mathcal{A}) + C\sqrt{|\Sigma||\mathcal{A}|(\log(|\mathcal{A}|) + 1) }$$
     where $C > 0$ is a universal constant, 
 \end{enumerate}
  $~~$ \textbf{Note.} \emph{In this work, our main attention is focused on the feasability of learnability as defined by the sample size required by a learner to learn the target concept. If we analyze the learning setup from a strictly computational viewpoint, then the legitimate question related to the complexity of the optimization problem will be raised. } \\ 
  We give here the main assumption of this work: 
 \begin{assumption}
  Let $\mathbb{P}$ be a distribution probability over strings, and $R$ be a target concept. Define the function (that depends implicitly on $\mathbb{P}$, and $R$) as follows :
  \begin{equation*}
      \begin{aligned}
        \zeta_{\mathbb{P},R}: (0,1)  &\rightarrow \mathbb{N} \\
             \epsilon & \rightarrow \min\{|\mathcal{A}|:~~\inf\limits_{\mathcal{A} \in \mathcal{DFA}(\Sigma)}{\mathcal{L}_{\mathbb{P}}(\mathcal{A}) \leq \mathcal{L}_{\mathbb{P}}(\mathcal{A}) <  \inf\limits_{\mathcal{A} \in \mathcal{DFA}(\Sigma)}{\mathcal{L}_{\mathbb{P}}(\mathcal{A}) + \epsilon}}\}
      \end{aligned}
  \end{equation*}
   We have: 
   \begin{equation}
       \zeta_{\mathbb{P},R}(\epsilon)~~ \underset{\epsilon \to 0^{+}}{=}  ~~ O(\frac{1}{\epsilon})
   \end{equation}
 \end{assumption}
 
 To ease notation, we will write only $\zeta$, when $\mathbb{P}$ and $R$ are clear from context. Before proving that this function is well-defined, we propose first to give an interpretation of this assumption. The function $\zeta$ controls the growth rate of automata as we try to decrease the approximation error, i.e. the size of automata that approximates well the target function will have a size that scales linearly with the desired approximation error.  We note that this assumption is a property that is shared between a target function and a probability distribution, and not intrinsic to one of them. We also need to mention that PAC learnability results with a polynomial time of oracle calls that we will show in the rest of this section holds even if we the growth rate of $\zeta$ is polynomially agressive (i.e. $\zeta(\epsilon) \underset{\epsilon \to 0+}{=} O(poly(\frac{1}{\epsilon})$) where $poly(.)$ is a polynomial). Only the convergence rate of the number of oracle calls will be affected in a negative sense, but still remains polynomial. We first show that the function $\zeta$ is well-defined for any target function $R$, and any probability distribution $\mathbb{P}$. 
 
 \begin{proposition} 
  Fix an alphabet $\Sigma$. For any $R: \Sigma^{*} \rightarrow \{0,1\}$, for any probability distribution over $\mathbb{P}$, the function $\zeta$ is well-defined
 \end{proposition}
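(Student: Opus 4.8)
The plan is to reduce well-definedness to two elementary facts: the existence of the infimum of a bounded-below set of reals, and the well-ordering of $\mathbb{N}$. Write $L^{*} = \inf_{\mathcal{A} \in \mathcal{DFA}(\Sigma)} \mathcal{L}_{\mathbb{P}}(\mathcal{A})$ for the optimal achievable loss. First I would check that each individual loss $\mathcal{L}_{\mathbb{P}}(\mathcal{A}) = \mathbb{P}(R(x) \neq \mathcal{A}(x))$ is itself well-defined. Since $\Sigma^{*}$ is countable, we may equip it with the discrete $\sigma$-algebra, so the disagreement set $\{x \in \Sigma^{*} : R(x) \neq \mathcal{A}(x)\}$ is automatically measurable and $\mathcal{L}_{\mathbb{P}}(\mathcal{A}) = \sum_{x:\, R(x) \neq \mathcal{A}(x)} \mathbb{P}(x) \in [0,1]$ is a convergent sum. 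Thus the set of achievable losses $\{\mathcal{L}_{\mathbb{P}}(\mathcal{A}) : \mathcal{A} \in \mathcal{DFA}(\Sigma)\}$ is a non-empty (there is at least a one-state DFA) subset of $[0,1]$, bounded below by $0$; hence $L^{*}$ exists as a real number in $[0,1]$.

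Next, fix $\epsilon \in (0,1)$ and consider the feasible set of sizes
$$S_{\epsilon} = \{\, |\mathcal{A}| : \mathcal{A} \in \mathcal{DFA}(\Sigma),\ L^{*} \leq \mathcal{L}_{\mathbb{P}}(\mathcal{A}) < L^{*} + \epsilon \,\} \subseteq \mathbb{N}.$$
The defining predicate of $\zeta$ is exactly the statement that $|\mathcal{A}| \in S_{\epsilon}$, so the crucial step is to show $S_{\epsilon} \neq \emptyset$. This follows from the approximation property of the infimum: by definition of $L^{*}$, for any $\epsilon > 0$ there exists a DFA $\mathcal{A}_{\epsilon}$ with $\mathcal{L}_{\mathbb{P}}(\mathcal{A}_{\epsilon}) < L^{*} + \epsilon$, while the lower bound $\mathcal{L}_{\mathbb{P}}(\mathcal{A}_{\epsilon}) \geq L^{*}$ holds for every DFA by definition of the infimum. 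Hence $\mathcal{A}_{\epsilon}$ lies in the feasible region $[L^{*}, L^{*} + \epsilon)$, and its size $|\mathcal{A}_{\epsilon}|$ belongs to $S_{\epsilon}$.

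Finally I would invoke the well-ordering principle: $S_{\epsilon}$ is a non-empty subset of $\mathbb{N}$, so it admits a least element, and by construction $\zeta_{\mathbb{P},R}(\epsilon) = \min S_{\epsilon}$ is that element. Since $\epsilon \in (0,1)$ was arbitrary, $\zeta$ assigns to every input a well-defined natural number, which is exactly the claim. There is no genuine obstacle here; the only points requiring care are (i) recording that $\mathcal{L}_{\mathbb{P}}$ is well-defined because countability of $\Sigma^{*}$ makes every event measurable, and (ii) noticing that well-definedness of the $\min$ hinges solely on non-emptiness of $S_{\epsilon}$, which the $\epsilon$-approximation characterization of the infimum supplies for free. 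I would also stress that this argument is purely existential: it produces no quantitative bound on $\zeta(\epsilon)$, which is precisely why the $O(1/\epsilon)$ growth rate must be imposed separately in Assumption 4.8 rather than derived here.
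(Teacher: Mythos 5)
Your proof is correct and takes essentially the same route as the paper: both reduce well-definedness to non-emptiness of the feasible set of sizes, which you establish directly via the $\epsilon$-approximation property of the infimum, while the paper proves the identical fact by contradiction (assuming every DFA has loss at least $L^{*}+\epsilon$ and contradicting that $L^{*}$ is the greatest lower bound), followed in both cases by the implicit appeal to well-ordering of $\mathbb{N}$. Your additional routine checks --- measurability of the disagreement event over the countable set $\Sigma^{*}$, existence of $L^{*}$ in $[0,1]$ --- and your closing observation that the argument is purely existential, so the $O(1/\epsilon)$ growth rate must be assumed separately, are consistent with (and slightly more careful than) the paper's treatment.
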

 \begin{proof}
    Fix any $R: \Sigma^{*} \rightarrow \{0,1\}$, fix any probability distribution over $\mathbb{P}$. We shall prove that $\zeta(\epsilon)$ is well-defined $\forall \epsilon \in (0,1)$, i.e. that the set present in the formula of the function is not empty. We proceed by analyzing two cases: Case 1: There exists an automaton $\mathcal{A}$ that satisfies $\inf\limits_{\mathcal{DFA}(\Sigma)} \mathcal{L}_{\mathbb{P}}(\mathcal{A})= 0$. Then, the set is not empty since this automaton always exists in this set. Case 2: There is no automaton that satisfies the property. We prove by contradiction: Suppose no automaton exists within the specified interval. Then, for any automaton $\mathcal{A}$, $\inf\limits_{\mathcal{A} \in \mathcal{DFA}(\Sigma)} \mathcal{L}_{\mathbb{P}}(\mathcal{A}) + \epsilon \leq \mathcal{L}_{P}(\mathcal{A})$, thus $\inf\limits_{\mathcal{DFA}(\Sigma)} \mathcal{L}_{\mathbb{P}}(\mathcal{A})$ is not an infinimum since there exists a greater quantity that minimizes $\mathcal{L}(\mathcal{A})$ for all $\mathcal{A} \in \mathcal{DFA}(\Sigma)$, which leads to a contradiction. 
 \end{proof}
In order to ease notation for the rest of the proof, we will define the following set of automata: 
 $$ \mathcal{A}_{min}(\epsilon) = \{|\mathcal{A}|:~~\inf\limits_{\mathcal{A} \in \mathcal{DFA}(\Sigma)}{\mathcal{L}_{\mathbb{P}}(\mathcal{A}) \leq \mathcal{L}_{\mathbb{P}}(\mathcal{A}) <  \inf\limits_{\mathcal{A} \in \mathcal{DFA}(\Sigma)}{\mathcal{L}_{\mathbb{P}}(\mathcal{A}) + \epsilon}}\}$$
 
 The proof of the theorem is similar to how SRM generalization bounds are generally derived. The first step consists at deriving a generalization bound on the family of finite state automata of size $n$. The basic tool to prove this is based on the VC-dimension of $\mathcal{DFA}_{n}^{\Sigma}$ (See \cite{vapnik95} for a full treatment of the VC-dimension of a concept class). This is a finite set, and its VC-dimension is upper bounded logarithmically with the size of this set. Then, we use a union bound over all these spaces, each of which is assigned a weight that represents its importance under a model selection setting. Then, based on our main assumption on the target function $R$, and the probability distribution $\mathbb{P}$, we shall be able to recover a uniform bound on the sample size required to obtain PAC-like guarantees of the learning setup. Let's first recall few technical lemma: 
\begin{lemma}{\cite{shalev14}} 
 Let $\mathcal{X}$ be an instance space, and $\mathcal{H}$ is a set of concepts in $\mathcal{X}$ with $d_{vc}(\mathcal{H}) < \infty$\footnote{$d_{vc}$ refers to the notion of Vapnik-Chervonenkis dimension, a full description and definition of this notion can be found in \cite{vapnik95}}. Then, for any target concept $h \in \mathcal{H}$, for any probability distribution $\mathbb{P}$ on $\mathcal{X}$ , for any $\epsilon, \delta$ belonging to  $(0,1)$, for any $m \in \mathbb{N}^{+}$. we have, with a probability at least $1-\delta$ on a dataset $S$ of size $m$ generated uniformly at random from $\mathbb{P}$, the following event: 
 \begin{equation*}
     \sup\limits_{h \in \mathcal{H}} | \mathcal{L}_{P}(h) - \mathcal{L}_{S}(h)| \leq O(\sqrt{\frac{d_{vc} + \log(\frac{1}{\delta})}{m})}
 \end{equation*}
\end{lemma}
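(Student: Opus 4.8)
The statement is the classical uniform convergence bound of Vapnik--Chervonenkis theory, so the plan is to reproduce its standard proof in three movements: symmetrization, combinatorial control via the growth function, and concentration. First I would fix $\epsilon,\delta$ and study the deviation probability $\Pr_S[\sup_{h\in\mathcal{H}}|\mathcal{L}_{\mathbb{P}}(h)-\mathcal{L}_S(h)|>\epsilon]$. Introducing a \emph{ghost sample} $S'$ of the same size $m$ drawn independently from $\mathbb{P}$, a standard symmetrization argument shows that for $m\epsilon^2\ge 2$ this probability is at most $2\Pr_{S,S'}[\sup_{h}|\mathcal{L}_{S'}(h)-\mathcal{L}_S(h)|>\epsilon/2]$. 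The point of this reduction is that the right-hand event depends only on the behaviour of $\mathcal{H}$ on the finite pooled set $S\cup S'$ of $2m$ points.

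Second, I would replace the infinite class $\mathcal{H}$ by its trace on these $2m$ points. The number of distinct labelings that $\mathcal{H}$ can realize on any fixed set of $2m$ points is the growth function $\Pi_{\mathcal{H}}(2m)$, and the Sauer--Shelah lemma bounds it by $(2em/d_{vc})^{d_{vc}}=O((2m)^{d_{vc}})$ whenever $d_{vc}<\infty$. Conditioning on $S\cup S'$ and using a random permutation (Rademacher) argument together with Hoeffding's inequality for each of the at most $\Pi_{\mathcal{H}}(2m)$ effective hypotheses, a union bound gives
\[
\Pr_{S}\!\Big[\sup_{h\in\mathcal{H}}|\mathcal{L}_{\mathbb{P}}(h)-\mathcal{L}_S(h)|>\epsilon\Big]\;\le\;4\,\Pi_{\mathcal{H}}(2m)\,\exp\!\big(-m\epsilon^2/8\big).
\]
Setting the right-hand side equal to $\delta$ and solving for $\epsilon$ yields, after absorbing constants, $\epsilon=O\!\big(\sqrt{(d_{vc}\log m+\log(1/\delta))/m}\big)$, which is the claimed bound up to a stray $\log m$ factor.

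The main obstacle is exactly this spurious $\log m$: the clean form stated in the lemma has $d_{vc}$, not $d_{vc}\log m$, in the numerator. Removing it requires a sharper estimate of the Rademacher complexity $\mathcal{R}_m(\mathcal{H})$ than the crude union bound provides. The standard remedy is Dudley's chaining (entropy-integral) argument: one bounds the empirical $L_2$ covering numbers of $\mathcal{H}$ at scale $\tau$ by Haussler's bound $O((1/\tau)^{d_{vc}})$, integrates $\sqrt{\log N(\tau)}$ over the scales, and obtains $\mathcal{R}_m(\mathcal{H})=O(\sqrt{d_{vc}/m})$ with no logarithmic factor. I would then finish with McDiarmid's bounded-differences inequality: since $S\mapsto\sup_h|\mathcal{L}_{\mathbb{P}}(h)-\mathcal{L}_S(h)|$ changes by at most $1/m$ when a single sample point is altered, this supremum concentrates within $O(\sqrt{\log(1/\delta)/m})$ of its expectation, and its expectation is controlled by $2\mathcal{R}_m(\mathcal{H})=O(\sqrt{d_{vc}/m})$ via symmetrization. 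Combining the two terms gives precisely $O\!\big(\sqrt{(d_{vc}+\log(1/\delta))/m}\big)$. Since the lemma is quoted from \cite{shalev14}, an acceptable alternative is simply to cite the fundamental theorem of statistical learning rather than reproduce the chaining step, which is the only genuinely delicate part of the argument.
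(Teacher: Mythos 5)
Your proposal is correct, and it matches the paper's treatment in the only sense available: the paper does not prove this lemma at all but quotes it from \cite{shalev14}, which you yourself note is an acceptable route. Your reconstruction of the standard argument is sound --- symmetrization, Sauer--Shelah with a Hoeffding union bound, and in particular you correctly identify that this crude route leaves a spurious $\log m$ factor that only Dudley chaining with Haussler's covering bound (plus McDiarmid for the high-probability statement) removes, which is exactly the delicate point most sketches of this lemma get wrong.
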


\begin{lemma}{(VC-dimension of $\mathcal{DFA}_{n}(\Sigma)$) \cite{ishigami97}}
 For any $n \in \mathbb{N}$,  for any alphabet $\Sigma$ with $|\Sigma| \geq 2$, we have:
 \begin{equation}
     d_{VC}(\mathcal{DFA}_{n}(\Sigma)) \leq O(|\Sigma|nlog(n))
 \end{equation}
\end{lemma}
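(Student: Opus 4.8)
The plan is to exploit the fact that $\mathcal{DFA}_n(\Sigma)$ is, as a concept class, \emph{finite}: an automaton with $n$ states has only finitely many transition tables and acceptance labelings at its disposal, so it can recognize only finitely many distinct languages. The central tool is the elementary inequality $d_{VC}(\mathcal{H}) \leq \log_2 |\mathcal{H}|$, valid for any finite concept class $\mathcal{H}$. First I would recall its justification: if some set of $d$ points is shattered, then each of the $2^d$ dichotomies of those points is realized by at least one concept, and distinct dichotomies force distinct concepts, so $2^d \leq |\mathcal{H}|$, whence $d \leq \log_2 |\mathcal{H}|$. This argument is insensitive to the domain being infinite (here $\Sigma^{*}$), since shattering is always tested on a finite subset of the domain.

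It then remains only to over-count the members of $\mathcal{DFA}_n(\Sigma)$. Following Definition 1.2.1, a DFA with a state set $Q$ of size $n$ is determined by its transition function $f : Q \times \Sigma \rightarrow Q$ together with its output function $g : Q \rightarrow \{0,1\}$ (the initial state being fixed). The map $f$ assigns to each of the $n|\Sigma|$ pairs $(q,\sigma)$ one of $n$ target states, giving at most $n^{\,n|\Sigma|}$ transition functions, while $g$ contributes at most $2^{\,n}$ acceptance patterns. Hence
\[
|\mathcal{DFA}_n(\Sigma)| \;\leq\; n^{\,n|\Sigma|} \cdot 2^{\,n}.
\]
If one prefers to leave the initial state free, an extra factor of $n$ only adds a lower-order term. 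Since each recognizable language arises from at least one such description, this count simultaneously bounds the number of distinct concepts.

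Combining the two ingredients, taking base-$2$ logarithms gives
\[
d_{VC}(\mathcal{DFA}_n(\Sigma)) \;\leq\; \log_2 |\mathcal{DFA}_n(\Sigma)| \;\leq\; n|\Sigma|\log_2 n + n,
\]
and since $|\Sigma| \geq 2$ forces the first summand to dominate, the right-hand side is $O(|\Sigma|\, n \log n)$, which is exactly the claimed bound.

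I do not anticipate a genuine obstacle: the argument is essentially a single counting step, and the crude count already matches the target order. The only points requiring care are (i) observing that over-counting \emph{descriptions} is legitimate precisely because it still upper-bounds the number of distinct \emph{concepts}, and (ii) confirming that the finite-cardinality VC bound applies verbatim despite the infinite domain $\Sigma^{*}$. A more delicate analysis, as in \cite{ishigami97}, would be needed only to establish matching lower bounds or to sharpen the constants, neither of which is required for the upper bound stated here.
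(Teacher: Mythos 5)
Your proof is correct and takes essentially the same route the paper itself indicates: the text preceding the lemma justifies it by observing that $\mathcal{DFA}_{n}(\Sigma)$ is a finite class whose VC-dimension is bounded logarithmically in its cardinality, which is exactly your argument via $d_{VC}(\mathcal{H}) \leq \log_{2}|\mathcal{H}|$ and the count $|\mathcal{DFA}_{n}(\Sigma)| \leq n^{\,n|\Sigma|}\cdot 2^{\,n}\cdot n$, giving $O(|\Sigma|\,n\log n)$. The citation to \cite{ishigami97} is only needed for the sharper analysis (matching lower bounds and constants), which neither you nor the paper's use of the lemma requires.
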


\begin{theorem}
There exists an universal constant $C > 0$, such for for any finite alphabet $\Sigma$ with size greater or equal to $2$, for any distribution $\mathbb{P}$ on $\Sigma^{*}$, for any $\delta \in (0,1)$, we have with probability at least $1-\delta$ on a sample of size $m$ drawn independently from $\mathbb{P}$ the following 
\begin{equation*}
    \forall \mathcal{A} \in \mathcal{DFA}(\Sigma):~~ |\mathcal{L}_{\mathbb{P}}(\mathcal{A}) - \mathcal{L}_{S}(\mathcal{A})| \leq C\sqrt{\frac{|\Sigma||\mathcal{A}|(\log(|\mathcal{A}|) + 1)  +  \log(\frac{1}{\delta})}{m}}
\end{equation*}
\end{theorem}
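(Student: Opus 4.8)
The plan is to run a structural risk minimization (SRM) argument that lifts the finite-VC uniform convergence bound of the first lemma above from each fixed-size family $\mathcal{DFA}_n(\Sigma)$ to the whole class $\mathcal{DFA}(\Sigma) = \bigcup_{n \geq 1} \mathcal{DFA}_n(\Sigma)$ by means of a weighted union bound. First I would decompose the hypothesis class by automaton size and attach to each family $\mathcal{DFA}_n(\Sigma)$ a prior weight $w_n > 0$ chosen so that $\sum_{n \geq 1} w_n \leq 1$; the concrete choice $w_n = \frac{6}{\pi^2 n^2}$ works, since $\sum_{n \geq 1} n^{-2} = \frac{\pi^2}{6}$. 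These weights serve to distribute the allowed failure probability $\delta$ across the countably many families.

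Next, for each $n$ I would apply the uniform convergence lemma to the finite-VC class $\mathcal{DFA}_n(\Sigma)$ with confidence parameter $\delta_n := w_n \delta$, obtaining that with probability at least $1 - \delta_n$,
$$\sup_{\mathcal{A} \in \mathcal{DFA}_n(\Sigma)} |\mathcal{L}_{\mathbb{P}}(\mathcal{A}) - \mathcal{L}_S(\mathcal{A})| \leq O\left(\sqrt{\frac{d_{VC}(\mathcal{DFA}_n(\Sigma)) + \log(1/\delta_n)}{m}}\right).$$
A union bound over all $n \geq 1$ shows that the bad events fail with total probability at most $\sum_{n \geq 1} \delta_n = \delta \sum_{n \geq 1} w_n \leq \delta$. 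Hence, with probability at least $1-\delta$, the displayed inequality holds simultaneously for every $n$, and therefore for every $\mathcal{A} \in \mathcal{DFA}(\Sigma)$ indexed by its size $n = |\mathcal{A}|$.

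It then remains to simplify the right-hand side. Substituting the VC bound $d_{VC}(\mathcal{DFA}_n(\Sigma)) \leq O(|\Sigma| n \log n)$ and expanding $\log(1/\delta_n) = \log(1/\delta) + \log(1/w_n)$ with $\log(1/w_n) = 2\log n + \log(\pi^2/6) = O(\log n)$, the numerator inside the square root becomes $O(|\Sigma| n \log n) + O(\log n) + \log(1/\delta)$. Since $|\Sigma| \geq 2$ and $n \geq 1$, both the leading $|\Sigma| n \log n$ term and the stray $O(\log n)$ model-selection penalty are bounded by a constant multiple of $|\Sigma| n (\log n + 1)$, so the entire numerator is at most a constant times $|\Sigma| n (\log n + 1) + \log(1/\delta)$. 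Rewriting $n = |\mathcal{A}|$ produces the claimed estimate with a single universal constant $C$ that does not depend on $\Sigma$, $\mathbb{P}$, $n$, or $\delta$.

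The delicate point is the small-size regime: the estimate $O(|\Sigma| n \log n)$ degenerates to $0$ at $n = 1$ (where $\log n = 0$), whereas the prior penalty $\log(1/w_n)$ does not vanish there. The role of the $+1$ inside $(\log(|\mathcal{A}|) + 1)$ is precisely to keep the complexity term bounded away from zero for small automata, so that the residual VC contribution and the prior penalty can both be absorbed uniformly over $n$. Verifying that one constant $C$ suffices across all $n$ simultaneously, rather than a constant silently growing with $n$, is the only step that genuinely requires care; everything else is the standard SRM bookkeeping combined with $\sqrt{\,\cdot\,}$ monotonicity.
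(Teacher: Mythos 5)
Your proposal is correct and follows essentially the same route as the paper's proof: a size-stratified decomposition $\mathcal{DFA}(\Sigma) = \bigcup_{n\geq 1}\mathcal{DFA}_n(\Sigma)$, the VC uniform-convergence lemma applied to each stratum with confidence $w_n\delta$, and a weighted union bound. The only difference is the choice of prior weights ($w_n = 6/(\pi^2 n^2)$ versus the paper's $w(n)=2^{-n}$), which is immaterial --- in fact your polynomial weights give a smaller penalty $\log(1/w_n)=O(\log n)$ than the paper's linear $n\log 2$ (itself absorbable since $|\Sigma|\geq 2$), and your explicit handling of the absorption into $|\Sigma|\,n(\log n + 1)$ is more careful than the paper's, which leaves that bookkeeping implicit.
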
 
\begin{proof}
The proof follows the classical machinery for proving Structural Minimization Bounds. The idea is to partition the space of finite state automata as: $\mathcal{DFA}(\Sigma) = \bigcup\limits_{n=1}^{\infty} \mathcal{DFA}_{n}(\Sigma)$ and assign a weight to each partition by a weighting function: $w: \mathbb{N}^{+} \rightarrow (0,1)$ such that $\sum\limits_{n \in \mathbb{N}^{+}} w(n) \leq 1$. We choose $w(n) = 2^{-n}$. \\
Fix an alphabet $\Sigma$ where $|\Sigma| \geq 2$. Define the quantity: $$\epsilon(m,n,\delta) = C\sqrt{\frac{|\Sigma| n \log(n) + \log(\frac{1}{\delta})}{m}} $$ where $C>0$ is a constant that will be defined later. \\
Let's define the following events: \\
$~~~E = \{S:~ \exists A \in \mathcal{DFA}(\Sigma)~~~ |\mathcal{L_{\mathbb{P}}(\mathcal{A})} - \mathcal{L_{S}(\mathcal{A})}| \geq \epsilon(m,|\mathcal{A}|, w(|\mathcal{A}|)\delta \}$ \\
$ ~~~~E_{n} = \{ S: ~~\exists \mathcal{A} \in \mathcal{DFA}_{n}(\Sigma)~~~  |\mathcal{L_{\mathbb{P}}(\mathcal{A})} - \mathcal{L_{S}(\mathcal{A})}| \geq \epsilon(m,n, w(n)\delta) \}$ \\
Note that $E_{n}$ is nothing but the complement event given in Theorem 4.2.6, when $\mathcal{H} = \mathcal{DFA}_{n}(\Sigma)$.  Fix $\delta \in (0,1)$.We have, by means of the union bound: \\ 
\begin{equation*}
         \begin{aligned}
           \mathbb{P}(E) &\leq \sum\limits_{n \geq 1} \mathbb{P}(E_{n}) \\ 
           & \leq \sum\limits_{n \geq 1} 2^{-n} \delta  \\
           & \leq \delta \\
         \end{aligned}
\end{equation*}
 where the second inequality is obtained by Lemma 5.1 by setting $\mathcal{H} = \mathcal{DFA}_{n}(\Sigma)$, and using the VC-dimension in Lemma 4.2.5.
\end{proof} 
We wrap up this section by giving our main theorem, stated earlier informally, with the obtained bound for the case of SRM strategy:
\begin{theorem}
 Under assumption 1, for any $\epsilon,\delta \in (0,1)$, the learning setup will require $O(\frac{|\Sigma|\log(\frac{1}{\epsilon}) + \log(\frac{1}{\delta})}{\epsilon^{3}})$ sampling and membership queries to reach an approximation error smaller than $\epsilon$ with probability at least $1-\delta$.   
\end{theorem}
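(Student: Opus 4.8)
The plan is to run the standard Structural Risk Minimization (SRM) oracle-inequality argument and feed in Assumption~1 to balance approximation against estimation error. Write $\mathcal{L}^{*} = \inf_{\mathcal{A} \in \mathcal{DFA}(\Sigma)} \mathcal{L}_{\mathbb{P}}(\mathcal{A})$ for the smallest error attainable by any DFA, and let $\hat{\mathcal{A}}$ be the DFA the learner outputs, i.e. the minimizer of $L(\mathcal{A}) = \mathcal{L}_{S}(\mathcal{A}) + \mathrm{pen}(|\mathcal{A}|)$, where $\mathrm{pen}(n) = C\sqrt{(|\Sigma| n(\log n + 1) + \log(1/\delta))/m}$ is the data-dependent penalty that makes the preceding uniform deviation bound applicable. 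Because the learner's interaction is exactly $m$ calls to the sampling oracle followed by $m$ calls to the membership oracle, the query complexity is simply $m$; so it suffices to exhibit a value of $m$ of the announced order guaranteeing $\mathcal{L}_{\mathbb{P}}(\hat{\mathcal{A}}) \leq \mathcal{L}^{*} + \epsilon$ with probability at least $1-\delta$.

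First I would condition on the high-probability event of the uniform SRM bound proved just above: with probability at least $1-\delta$ over the draw of $S$, $|\mathcal{L}_{\mathbb{P}}(\mathcal{A}) - \mathcal{L}_{S}(\mathcal{A})| \leq \mathrm{pen}(|\mathcal{A}|)$ holds simultaneously for every $\mathcal{A} \in \mathcal{DFA}(\Sigma)$. On this event the usual three-line chain applies to any competitor $\mathcal{A}$,
\begin{equation*}
 \mathcal{L}_{\mathbb{P}}(\hat{\mathcal{A}}) \leq \mathcal{L}_{S}(\hat{\mathcal{A}}) + \mathrm{pen}(|\hat{\mathcal{A}}|) \leq \mathcal{L}_{S}(\mathcal{A}) + \mathrm{pen}(|\mathcal{A}|) \leq \mathcal{L}_{\mathbb{P}}(\mathcal{A}) + 2\,\mathrm{pen}(|\mathcal{A}|),
\end{equation*}
the middle step being the defining optimality of $\hat{\mathcal{A}}$. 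Thus the excess risk of $\hat{\mathcal{A}}$ over \emph{any} DFA $\mathcal{A}$ is controlled by twice the penalty of $\mathcal{A}$.

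Next I would invoke Assumption~1 to pick the competitor. By definition of $\zeta$, there is a DFA $\mathcal{A}_{\epsilon/2}$ of size $n^{*} = \zeta_{\mathbb{P},R}(\epsilon/2)$ with $\mathcal{L}_{\mathbb{P}}(\mathcal{A}_{\epsilon/2}) < \mathcal{L}^{*} + \epsilon/2$, and the assumption supplies $n^{*} = O(1/\epsilon)$. Substituting this competitor into the chain gives $\mathcal{L}_{\mathbb{P}}(\hat{\mathcal{A}}) \leq \mathcal{L}^{*} + \epsilon/2 + 2\,\mathrm{pen}(n^{*})$, so it remains to force $2\,\mathrm{pen}(n^{*}) \leq \epsilon/2$. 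Inverting the penalty, this holds as soon as
\begin{equation*}
 m \geq \frac{16\,C^{2}\bigl(|\Sigma|\, n^{*}(\log n^{*} + 1) + \log(1/\delta)\bigr)}{\epsilon^{2}},
\end{equation*}
and plugging in $n^{*} = O(1/\epsilon)$ makes $|\Sigma|\,n^{*}(\log n^{*}+1) = O\!\left(|\Sigma|\log(1/\epsilon)/\epsilon\right)$. Dividing by $\epsilon^{2}$ produces the dominant term $O\!\left(|\Sigma|\log(1/\epsilon)/\epsilon^{3}\right)$, while the confidence contribution $O\!\left(\log(1/\delta)/\epsilon^{2}\right) \subseteq O\!\left(\log(1/\delta)/\epsilon^{3}\right)$ for $\epsilon \in (0,1)$, and the two merge into the claimed $O\!\left((|\Sigma|\log(1/\epsilon) + \log(1/\delta))/\epsilon^{3}\right)$.

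The oracle inequality and the algebra of inverting the penalty are routine; the one place that genuinely carries the argument is the approximation/estimation trade-off, where the linear growth $\zeta(\epsilon/2) = O(1/\epsilon)$ is exactly what keeps the competitor small enough that its penalty scales like $1/\epsilon$ rather than worse. The point I would watch most carefully is the $\epsilon$-splitting together with the bookkeeping of constants: one must check that the single DFA $\mathcal{A}_{\epsilon/2}$ simultaneously meets the approximation budget $\epsilon/2$ and the size bound from $\zeta$, and that the factor $\log n^{*} = \Theta(\log(1/\epsilon))$ is tracked correctly through the square root. Finally I would note that under the weaker hypothesis $\zeta(\epsilon) = O(\mathrm{poly}(1/\epsilon))$ the identical argument goes through, degrading only the exponent of $1/\epsilon$ in the final bound while keeping the query complexity polynomial.
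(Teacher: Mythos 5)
Your proof is correct and follows essentially the same route as the paper's: condition on the uniform SRM deviation bound, run the oracle-inequality chain against a competitor of size $\zeta = O(1/\epsilon)$ supplied by Assumption~1, and invert the penalty to solve for $m$. Your version is in fact slightly tighter in bookkeeping --- the explicit $\epsilon/2$-splitting yields excess risk exactly $\epsilon$ where the paper ends at $2\epsilon$ and appeals to ``neglecting constant terms,'' and your penalty correctly carries the $\sqrt{\cdot/m}$ and $\log(1/\delta)$ factors that the paper's stated objective omits --- but these are refinements of the same argument, not a different one.
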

\begin{proof} 
 Let $C$ be the constant in theorem 4.2.6. Fix any $\epsilon \in (0,1)$. Let $\mathcal{A}_{t}$ be the DFA returned by the learning protocol, and $\mathcal{A}_{min}$ be any automaton that belongs to the set $ A_{min}(\epsilon)$, and $|\mathcal{A}_{min}| \leq c \frac{1}{\epsilon}$.  The existence of such automaton is proven in proposition 4.2.3. By theorem 4.2.6, we have  with probability at least $1-\delta$, the following two events occur jointly
  \begin{equation*}
  \mathcal{L}_{\mathbb{P}}(\mathcal{A}_{t}) \leq \mathcal{L}_{S}(\mathcal{A}_{min}) + C\sqrt{\frac{|\Sigma||\mathcal{A}_{min}|(\log(|\mathcal{A}_{min}|) + 1)  +  \log(\frac{1}{\delta})}{m}}
 \end{equation*}
 \begin{equation*}
     \mathcal{L}_{S}(\mathcal{A}_{min}) \leq \mathcal{L}_{\mathbb{P}}(\mathcal{A}_{min}) + C\sqrt{\frac{|\Sigma||\mathcal{A}_{min}|(\log(|\mathcal{A}_{min}|) + 1)  +  \log(\frac{1}{\delta})}{m}}
 \end{equation*}
Combining these two inequalities, we have 
\begin{equation*}
     \begin{aligned}
     \mathcal{L}_{\mathbb{P}}(\mathcal{A}_{t}) &\leq \mathcal{L}_{\mathbb{P}}(\mathcal{A}_{min}) + 2C\sqrt{\frac{|\Sigma||\mathcal{A}_{min}|(\log(|\mathcal{A}_{min}|) + 1)  +  \log(\frac{1}{\delta})}{m}} \\ 
     & \leq \inf\limits_{\mathcal{A} \in \mathcal{DFA}(\Sigma)} \mathcal{L}_{\mathbb{P}}(\mathcal{A}_{t}) + \epsilon + 2C \sqrt{\frac{c|\Sigma|\frac{1}{\epsilon}(\log(\frac{c}{\epsilon}) + 1)  +  \log(\frac{1}{\delta})}{m}}
     \end{aligned}
\end{equation*}
We pick $m$ such that $2C \sqrt{\frac{c|\Sigma|\frac{1}{\epsilon}(\log(\frac{c}{\epsilon}) + 1)  +  \log(\frac{1}{\delta})}{m}} \leq \epsilon$. Neglecting constant terms, we obtain the result of the theorem. 

\end{proof}

\subsection{Extracting FSMs with the most probable string oracle}
 In this section, we analyze the learning strategy where a learner has access to an oracle that provides the highest probable strings, and the membership query oracle. This arises as an intuitive learning strategy and represents the most favorable learning conditions under which a learner can learner. The learning protocol is simple and runs as follow:
 
 \begin{enumerate}
     \item The highest probable string oracle provides a sample S of distinct elements of size $n$ such that $\forall w \in \Sigma^{*} \setminus S,~ \forall s \in S:~ \mathbb{P}(s) \geq \mathbb{P}(w)$. 
     \item The learner makes $|S|$ membership queries to label elements of $S$,
     \item The learner returns a consistent automaton $\mathcal{A}$ with $S$,
 \end{enumerate}
 
 Note that no randomness is involved under this protocol, which makes learning guarantees in this setting more powerful than PAC-like guarantees.  \\ 
 Intuitively, if the probability distribution holds a high entropy, such protocol would require asymptotically an excessively high number of queries to obtain a good approximation of the target function. i.e. convergence properties of such protocol couldn't be distribution-free as for the SRM case examined in the previous case. So, a question arises on the class of distributions for which such protocol would converge. In the following, we shall prove that the class of sub-exponential distributions requires a number of queries that scales linearly with the desired error to converge under the presented protocol, when the size of the alphabet is small. This class of distributions emcompasses all distributions of finite support, for instance. Also, the class of PDFA distributions belongs to this class(\cite{balle13}). From a practical point of view, this makes PDFAs good candidates to serve as an oracle under this learning setting. Let's first define the class of sub-exponential distributions: 
\begin{definition}
  A probability distribution $\mathbb{P}$ on $\Sigma^{*}$ is sub-exponential if there exists $c > 0$ such that: $$\forall t \in \mathbb{N}:~~\mathbb{P}(|x| \geq t) \leq \exp(-ct)$$
\end{definition}
We present in the following the number of oracle queries required for convergence. 
\begin{theorem}
  If the langage model of reference $\mathbb{P}$ is sub-exponential, then learning with the highest probable string oracle (resp. the membership oracle) would require $O(|\Sigma|^{\log{\frac{1}{\epsilon}}})$ to attain: $\mathcal{L}_{\mathbb{P}}(\mathcal{A}) \leq \epsilon$.
\end{theorem}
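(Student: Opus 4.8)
The plan is to exploit the fact that a \emph{consistent} automaton makes no mistakes on the sample $S$, so the only possible source of error is the probability mass of strings the learner never saw. Since step 3 of the protocol returns an automaton $\mathcal{A}$ consistent with the labeled sample, and the labels are those returned by the membership oracle (hence equal to $R$ on $S$), we have $\mathcal{A}(s) = R(s)$ for every $s \in S$. Consequently the disagreement set is contained in the complement of $S$, giving
\[
 \mathcal{L}_{\mathbb{P}}(\mathcal{A}) = \mathbb{P}(R(x) \neq \mathcal{A}(x)) \leq \mathbb{P}(\Sigma^{*}\setminus S) = 1 - \sum_{s \in S} \mathbb{P}(s).
\]
Thus it suffices to choose the number of queries $n = |S|$ large enough that the $n$ most probable strings together carry all but $\epsilon$ of the total mass. (A minor preliminary point to record is that a DFA consistent with any finite labeled set always exists, e.g. the prefix-tree acceptor, so step 3 is well defined.)

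To control this tail mass I would use the sub-exponential hypothesis to convert a bound on \emph{lengths} into a bound on the \emph{number} of strings. Let $B_{T} = \{w \in \Sigma^{*} : |w| < T\}$ denote the ball of strings shorter than $T$. By the sub-exponentiality assumption there is a constant $c > 0$ with $\mathbb{P}(|x| \geq T) \leq \exp(-cT)$, so choosing $T = \lceil \tfrac{1}{c}\log \tfrac{1}{\epsilon}\rceil$ forces $\mathbb{P}(B_{T}) \geq 1 - \epsilon$. The cardinality of this ball is $|B_{T}| = \sum_{i=0}^{T-1}|\Sigma|^{i} = O(|\Sigma|^{T}) = O(|\Sigma|^{\frac{1}{c}\log \frac{1}{\epsilon}})$, which is exactly the announced query count, the parameter $c$ being the distribution-dependent constant sitting in the exponent. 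The strategy is therefore to set $n = |B_{T}|$ and argue that querying the top $n$ strings is at least as informative, in terms of captured mass, as querying $B_{T}$.

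The step that needs the most care, and the real crux of the argument, is showing that the $n$ most probable strings capture at least the mass of the \emph{specific} set $B_{T}$, even though the oracle's output $S$ need not contain $B_{T}$. I would establish this by an exchange argument: since $|S| = n = |B_{T}|$, the two sets satisfy $|S \setminus B_{T}| \geq |B_{T} \setminus S|$, and by the maximality property of the highest-probable-string oracle every $s \in S$ has probability at least that of any $w \notin S$; in particular each $w \in B_{T}\setminus S$ can be matched injectively to a distinct $s \in S \setminus B_{T}$ with $\mathbb{P}(s) \geq \mathbb{P}(w)$. Summing along this injection yields $\mathbb{P}(B_{T}\setminus S) \leq \mathbb{P}(S \setminus B_{T})$, and adding the common mass $\mathbb{P}(S \cap B_{T})$ to both sides gives $\mathbb{P}(S) \geq \mathbb{P}(B_{T}) \geq 1 - \epsilon$. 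Substituting this into the first display bounds $\mathcal{L}_{\mathbb{P}}(\mathcal{A}) \leq \epsilon$ at the cost of $n = O(|\Sigma|^{\log \frac{1}{\epsilon}})$ queries, as claimed. I expect the counting of $|B_{T}|$ and the tail estimate to be routine; the exchange inequality is the only place where the definition of the oracle must be used with precision, since it is tempting but incorrect to assume $S \supseteq B_{T}$.
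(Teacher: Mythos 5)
Your proposal is correct and follows essentially the same route as the paper's proof: truncate at length $t = \frac{1}{c}\log\frac{1}{\epsilon}$ using sub-exponentiality, query $n = |\Sigma^{<t}| = O(|\Sigma|^{t})$ strings, observe that the oracle's output $S$ satisfies $\mathbb{P}(S) \geq \mathbb{P}(|x| < t) \geq 1-\epsilon$, and conclude that any automaton consistent with $S$ has error at most the uncovered mass $\epsilon$. The only difference is that the paper asserts the inequality $\mathbb{P}(S) \geq \mathbb{P}(|x|<t)$ as an immediate consequence of the oracle's maximality property, whereas you prove it carefully via the injective exchange argument --- a worthwhile elaboration of the same step, not a different approach.
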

\begin{proof}
  Let $\mathbb{P}$ be a sub-exponential distribution with parameter $c$. Fix any $\epsilon \in (0,1)$. Define $t = \frac{1}{c} \log\frac{1}{\epsilon}$. We have: 
  \begin{equation*}
       \mathbb{P}(|x| \geq t) \leq \epsilon 
  \end{equation*}
 According to the inequality above, If the learner queries $n = |\Sigma^{< t}| = O(|\Sigma|^{t})$ strings from the highest probable string oracle, and the oracle returns a set $S$, then necessarily, by the property of the oracle, we'll have: $\mathbb{P}(S) \geq \mathbb{P}(x < t)$. Hence, by the inequality above, we'll have: $\mathbb{P}(S) \geq 1 - \epsilon$. Hence, any consistent automaton $\mathcal{A}$ with this set will have: $\mathcal{L}_{\mathbb{P}}(\mathcal{A}) \leq \epsilon$. 
\end{proof}

\textbf{Interpretation of the number of queries bound.} Our obtained bounds suggest that the strategy of highest probable strings would be catastrophic in the context of languages over an alphabet with large size, such as the NLP case. If the alphabet size is small (i.e. $|\Sigma| = O(1)$), then the bound is reduced to be linear $O(\epsilon)$. Bounds obtained by the SRM strategy depends only with $O(\sqrt{\Sigma})$ rate on the alphabet. However, It's necessary to mention that this bound is very conservative, and is derived for the worst-case scenario, where all words for a given length has the same probability of occurence. This scenario is obviously unrealistic in practice. If one assumes smaller entropy on the set of words of a given length, one could derive tighter bounds.   \\ 

\subsection{Conclusion}
In this part of the thesis, we discussed learnability aspects of RNNs by deterministic finite state automata. We proposed an extension of the active learning framework that best fits our target objective: Efficient extraction of automata that holds as good generalization capabilities as the original target RNN. We motivated the necessity of the presence of a language model of reference that mediate between the RNN and its automaton approximation. We provided a theoretical analysis of two learning scenarios derived from our framework, and provable guarantees of their convergence were given. 

\section*{Conclusion and Perspectives}
\label{chap:discussion}
The work presented in this Master's thesis presents a first attempt to formalize notions related to learnability, computability and extractability of finite state machines from recurrent neural networks. The main objective was to explore, from a theoretical viewpoint, perspectives of connecting the symbolic world with its connectionist counterpart. From computability standpoint, we explored the class of complexity problems to which belong the equivalence and distance problem between RNNs trained as language models, and different types of weighted automata. We proved that deciding equivalence, computing distance or even computing an approximation of the distance between different types of weighted automata and RNN-LMs with $ReLu$ as an activation function are computationally hard problems. Afterwards, we discussed FSM extraction algorithms proposed in the literature, and conducted experiments to compare their respective performances. In light of our experiments, we revisited the longstanding \emph{Clustering hypothesis} of RNNs trained to recognize regular languages, and we give insights borrowed from the generalization theory of Deep Learning about this hypothesis. Our proposed conjecture goes to the direction that the \emph{clustering effect} is not a restricted property of RNNs well-trained to recognize regular languages, but rather an immediate effect to the smoothness property of all DL models that boast good generalization capabilities. Finally, we proposed an extension of the active learning framework tailored to fit the problem of FSM extraction from RNNs. We analyzed rigorously two learning scenarios of this framework, and gave sample complexity bounds to prove their PAC convergence. In our theoretical analysis, we gave a mild sufficient condition on the probability distribution and the target function which converts a non-uniformly PAC-learnable concept class into a uniform one. \\ 

$~~~$Our work was an opportunity to open new perspectives. We would like to cite some of them: 
\begin{itemize}[leftmargin=0cm]
    \item \textbf{Exploring connections between the ease of the \emph{"cluster-ability"} of the RNN hidden state space and the generalization capability of the network:} Our experiments point to the direction that there exists a close connection between the performance of algorithms based on clustering the hidden state space to extract automata and the generalization of the network. This observation suggests that DL models that boast good generalization performance are easier to compress into a Finite State Automaton. If this fact holds, It would mean that FSMs are good candidates to explain the capacity control of Deep Learning models during training, a fact that has important implications into the generalization theory of Deep Learning.
    \item \textbf{Analyzing other learning scenarios in the framework of learning RNNs with a reference language model oracle:} More can be done under the proposed framework in this thesis, such as analyzing incremental strategies of learning RNNs by Finite State Automata. An interesting scenario would be to use a probability weight oracle, and see at which extent a learning algorithm could learn both distribution of the reference language model and the target RNN, in the hope to minimize the number of queries to the probability weight oracle, as the learning algorithm would build a good approximation of the target distribution to use it on its own. 
    \item \textbf{Learning RNNs with high-margin outputs:} One of the major points that was dismissed in FSM extraction algorithms proposed in the litterature so far is that they consider the RNN to approximate as a binary classifier. However, RNN outputs usually results after the application of a sigmoid/softmax layer which delivers a real number (usually between $0$ and $1$. Thresholding the output to reduce it to a binary output is losing important information about the confidence the RNN has on predicting the output of a given input. The perspective of integrating this valuable information into FSM extraction algorithms needs to be explored. 
    \item \textbf{Extraction of transducers from Seq2Seq models.} Most of works proposed in the literature about the \emph{"extractability"} of rule-based systems from DL models focused mainly on classifiers. However, an important application in which RNNs excel is on Seq2Seq tasks. An analog of Seq2Seq computational models in the family of automata are transducers. An interesting perspective that is worth to be studied in the future is the extractability and learnability of transducers from RNNs.   
\end{itemize}

\printbibliography[heading=bibintoc]

@article{mohri01,
author = {Mohri, Mehryar and Pereira, Fernando and Riley, Michael},
title = {Weighted Finite-State Transducers in Speech Recognition},
year = {2002},
issue_date = {January 2002},
publisher = {Academic Press Ltd.},
address = {GBR},
volume = {16},
number = {1},
issn = {0885-2308},
url = {https://doi.org/10.1006/csla.2001.0184},
doi = {10.1006/csla.2001.0184},
journal = {Comput. Speech Lang.},
month = jan,
pages = {69–88},
numpages = {20}
}

@inproceedings{liu16,
author = {Liu, Pengfei and Qiu, Xipeng and Huang, Xuanjing},
title = {Recurrent Neural Network for Text Classification with Multi-Task Learning},
year = {2016},
isbn = {9781577357704},
publisher = {AAAI Press},
booktitle = {Proceedings of the Twenty-Fifth International Joint Conference on Artificial Intelligence},
pages = {2873–2879},
numpages = {7},
location = {New York, New York, USA},
series = {IJCAI’16}
}

@inproceedings{barone17,
    title = "Deep architectures for Neural Machine Translation",
    author = "Miceli Barone, Antonio Valerio  and
      Helcl, Jind{\v{r}}ich  and
      Sennrich, Rico  and
      Haddow, Barry  and
      Birch, Alexandra",
    booktitle = "Proceedings of the Second Conference on Machine Translation",
    month = sep,
    year = "2017",
    address = "Copenhagen, Denmark",
    publisher = "Association for Computational Linguistics",
    url = "https://www.aclweb.org/anthology/W17-4710",
    doi = "10.18653/v1/W17-4710",
    pages = "99--107",
}

@INPROCEEDINGS{graves13,
  author={A. {Graves} and A. {Mohamed} and G. {Hinton}},
  booktitle={2013 IEEE International Conference on Acoustics, Speech and Signal Processing}, 
  title={Speech recognition with deep recurrent neural networks}, 
  year={2013},
  volume={},
  number={},
  pages={6645-6649},
}

@INPROCEEDINGS{bengio93,
  author={Y. {Bengio} and P. {Frasconi} and P. {Simard}},
  booktitle={IEEE International Conference on Neural Networks}, 
title={The problem of learning long-term dependencies in recurrent networks}, 
  year={1993},
  volume={},
  number={},
  pages={1183-1188 vol.3},}

@article{hochreiter97,
author = {Hochreiter, Sepp and Schmidhuber, J\"{u}rgen},
title = {Long Short-Term Memory},
year = {1997},
issue_date = {November 15, 1997},
publisher = {MIT Press},
address = {Cambridge, MA, USA},
volume = {9},
number = {8},
issn = {0899-7667},
url = {https://doi.org/10.1162/neco.1997.9.8.1735},
doi = {10.1162/neco.1997.9.8.1735},
journal = {Neural Comput.},
month = nov,
pages = {1735–1780},
numpages = {46}
}

@inproceedings{cho14,
    title = "Learning Phrase Representations using {RNN} Encoder{--}Decoder for Statistical Machine Translation",
    author = {Cho, Kyunghyun  and
      van Merri{\"e}nboer, Bart  and
      Gulcehre, Caglar  and
      Bahdanau, Dzmitry  and
      Bougares, Fethi  and
      Schwenk, Holger  and
      Bengio, Yoshua},
    booktitle = "Proceedings of the 2014 Conference on Empirical Methods in Natural Language Processing ({EMNLP})",
    month = oct,
    year = "2014",
    address = "Doha, Qatar",
    publisher = "Association for Computational Linguistics",
    url = "https://www.aclweb.org/anthology/D14-1179",
    doi = "10.3115/v1/D14-1179",
    pages = "1724--1734",
}

@ARTICLE{tino95,
  author={P. {Tiňo} and J. {Šajda}},
  journal={Neural Computation}, 
  title={Learning and Extracting Initial Mealy Automata with a Modular Neural Network Model}, 
  year={1995},
  volume={7},
  number={4},
  pages={822-844},}

@incollection{barlett17,
title = {Spectrally-normalized margin bounds for neural networks},
author = {Bartlett, Peter L and Foster, Dylan J and Telgarsky, Matus J},
booktitle = {Advances in Neural Information Processing Systems 30},
editor = {I. Guyon and U. V. Luxburg and S. Bengio and H. Wallach and R. Fergus and S. Vishwanathan and R. Garnett},
pages = {6240--6249},
year = {2017},
publisher = {Curran Associates, Inc.},
url = {http://papers.nips.cc/paper/7204-spectrally-normalized-margin-bounds-for-neural-networks.pdf}
}

@article{kearns94,
author = {Kearns, Michael and Valiant, Leslie},
title = {Cryptographic Limitations on Learning Boolean Formulae and Finite Automata},
year = {1994},
issue_date = {Jan. 1994},
publisher = {Association for Computing Machinery},
address = {New York, NY, USA},
volume = {41},
number = {1},
issn = {0004-5411},
url = {https://doi.org/10.1145/174644.174647},
doi = {10.1145/174644.174647},
journal = {J. ACM},
month = jan,
pages = {67–95},
numpages = {29}
}

@article{benedeck94,
author = {Benedek, Gyora M. and Itai, Alon},
title = {Nonuniform Learnability},
year = {1994},
issue_date = {April 1994},
publisher = {Academic Press, Inc.},
address = {USA},
volume = {48},
number = {2},
issn = {0022-0000},
url = {https://doi.org/10.1016/S0022-0000(05)80005-4},
doi = {10.1016/S0022-0000(05)80005-4},
journal = {J. Comput. Syst. Sci.},
month = apr,
pages = {311–323},
numpages = {13}
}

@inproceedings{balle13,
  title={Learning finite-state machines: statistical and algorithmic aspects},
  author={Balle Pigem and Borja de},
  year={2013}
}

@misc{marzouk20,
    title={Distance and Equivalence between Finite State Machines and Recurrent Neural Networks: Computational results},
    author={Reda Marzouk and Colin de la Higuera},
    year={2020},
    eprint={2004.00478},
    archivePrefix={arXiv},
    primaryClass={cs.LG}
}

@book{vapnik95,
author = {Vapnik, Vladimir N.},
title = {The Nature of Statistical Learning Theory},
year = {1995},
isbn = {0387945598},
publisher = {Springer-Verlag},
address = {Berlin, Heidelberg}
}

@misc{soltani16,
    title={Higher Order Recurrent Neural Networks},
    author={Rohollah Soltani and Hui Jiang},
    year={2016},
    eprint={1605.00064},
    archivePrefix={arXiv},
    primaryClass={cs.NE}
}

@misc{koul18,
    title={Learning Finite State Representations of Recurrent Policy Networks},
    author={Anurag Koul and Sam Greydanus and Alan Fern},
    year={2018},
    eprint={1811.12530},
    archivePrefix={arXiv},
    primaryClass={cs.LG}
}

@misc{dong19,
    title={Analyzing Recurrent Neural Network by Probabilistic Abstraction},
    author={Guoliang Dong and Jingyi Wang and Jun Sun and Yang Zhang and Xinyu Wang and Ting Dai and Jin Song Dong},
    year={2019},
    eprint={1909.10023},
    archivePrefix={arXiv},
    primaryClass={cs.LG}
}

@book{goldreich10,
author = {Goldreich, Oded},
title = {P, NP, and NP-Completeness: The Basics of Computational Complexity},
year = {2010},
isbn = {0521122546},
publisher = {Cambridge University Press},
address = {USA},
edition = {1st}
}

@article{ishigami97,
author = {Ishigami, Yoshiyasu and Tani, Sei’ichi},
title = {VC-Dimensions of Finite Automata and Commutative Finite Automata with k Letters and n States},
year = {1997},
issue_date = {April 18, 1997},
publisher = {Elsevier Science Publishers B. V.},
address = {NLD},
volume = {74},
number = {2},
issn = {0166-218X},
url = {https://doi.org/10.1016/S0166-218X(96)00025-X},
doi = {10.1016/S0166-218X(96)00025-X},
journal = {Discrete Appl. Math.},
month = apr,
pages = {123–134},
numpages = {12}
}

@book{shalev14, place={Cambridge}, title={Understanding Machine Learning: From Theory to Algorithms}, DOI={10.1017/CBO9781107298019}, publisher={Cambridge University Press}, author={Shalev-Shwartz, Shai and Ben-David, Shai}, year={2014}}

@misc{arora18,
    title={Stronger generalization bounds for deep nets via a compression approach},
    author={Sanjeev Arora and Rong Ge and Behnam Neyshabur and Yi Zhang},
    year={2018},
    eprint={1802.05296},
    archivePrefix={arXiv},
    primaryClass={cs.LG}
}

@article{Bietti18,
  title={On Regularization and Robustness of Deep Neural Networks},
  author={Alberto Bietti and Gr{\'e}goire Mialon and Julien Mairal},
  journal={ArXiv},
  year={2018},
  volume={abs/1810.00363}
}

@misc{lei18,
    title={Implicit Regularization of Stochastic Gradient Descent in Natural Language Processing: Observations and Implications},
    author={Deren Lei and Zichen Sun and Yijun Xiao and William Yang Wang},
    year={2018},
    eprint={1811.00659},
    archivePrefix={arXiv},
    primaryClass={cs.CL}
}

@misc{ali20,
    title={The Implicit Regularization of Stochastic Gradient Flow for Least Squares},
    author={Alnur Ali and Edgar Dobriban and Ryan J. Tibshirani},
    year={2020},
    eprint={2003.07802},
    archivePrefix={arXiv},
    primaryClass={stat.ML}
}

@inproceedings{Tomita82,
  title={Dynamic construction of finite-automata from examples using hill-climbing},
  author={Masahiro Tomita},
  year={1982}
}

@article{dhingra17,
  title={Linguistic Knowledge as Memory for Recurrent Neural Networks},
  author={Bhuwan Dhingra and Zhilin Yang and William W. Cohen and Ruslan Salakhutdinov},
  journal={ArXiv},
  year={2017},
  volume={abs/1703.02620}
}

@book{trakh73,
  title={Finite Automata; Behavior and Synthesis},
  author={Trakhtenbrot, B.A. and Barzdin, Y.M. and Barzdinʹ, I.A.M. and Barzdiǹ, J.M. and Barzdinʹ, I.A.M. and Louvish, D. and Shamir, E. and Landweber, L.H.},
  isbn={9780444104182},
  lccn={72088504},
  series={Fundamental studies in computer science},
  url={https://books.google.fr/books?id=qNdQAAAAMAAJ},
  year={1973},
  publisher={North-Holland Publishing Company}
}

@misc{neyshabur17,
    title={Implicit Regularization in Deep Learning},
    author={Behnam Neyshabur},
    year={2017},
    eprint={1709.01953},
    archivePrefix={arXiv},
    primaryClass={cs.LG}
}

@article{wang17,
author = {Wang, Qinglong and Zhang, Kaixuan and Ororbia, Alexander G. and Xing, Xinyu and Liu, Xue and Giles, C. Lee},
title = {An Empirical Evaluation of Rule Extraction from Recurrent Neural Networks},
year = {2018},
issue_date = {September 2018},
publisher = {MIT Press},
address = {Cambridge, MA, USA},
volume = {30},
number = {9},
issn = {0899-7667},
url = {https://doi.org/10.1162/neco_a_01111},
doi = {10.1162/neco_a_01111},
journal = {Neural Comput.},
month = sep,
pages = {2568–2591},
numpages = {24}
}

@article{mahata18,
author = {Mahata, Sainik Kumar and Das, Dipankar and Bandyopadhyay, Sivaji},
year = {2018},
month = {05},
pages = {},
title = {MTIL2017: Machine Translation Using Recurrent Neural Network on Statistical Machine Translation},
journal = {Journal of Intelligent Systems},
doi = {10.1515/jisys-2018-0016}
}

@INPROCEEDINGS{bakhta17,
  author={K. {Baktha} and B. K. {Tripathy}},
  booktitle={2017 International Conference on Communication and Signal Processing (ICCSP)}, 
  title={Investigation of recurrent neural networks in the field of sentiment analysis}, 
  year={2017},
  volume={},
  number={},
  pages={2047-2050},}

@InProceedings{Wang19,
  title = 	 {State-Regularized Recurrent Neural Networks},
  author = 	 {Wang, Cheng and Niepert, Mathias},
  booktitle = 	 {Proceedings of the 36th International Conference on Machine Learning},
  pages = 	 {6596--6606},
  year = 	 {2019},
  OPTeditor = 	 {Chaudhuri, Kamalika and Salakhutdinov, Ruslan},
  volume = 	 {97},
  series = 	 {Proceedings of Machine Learning Research},
  publisher = 	 {PMLR},
 OPTpdf = 	 {http://proceedings.mlr.press/v97/wang19j/wang19j.pdf},
 OPTurl = 	 {http://proceedings.mlr.press/v97/wang19j.html}
}

@INPROCEEDINGS{Zhang18, author={Q. {Zhang} and Y. N. {Wu} and S. {Zhu}}, booktitle={2018 IEEE/CVF Conference on Computer Vision and Pattern Recognition}, 
title={Interpretable Convolutional Neural Networks}, 
year={2018}, volume={}, number={}, pages={8827-8836},}

@article{Karpathy16,
  author = {Karpathy, Andrej and Johnson, Justin and Li, Fei-Fei},
  OPTbiburl = {https://www.bibsonomy.org/bibtex/2b92f50d756d7099a0cf63e69d1ec2699/dblp},
  OPTee = {http://arxiv.org/abs/1506.02078},
  journal = {CoRR},
  title = {Visualizing and Understanding Recurrent Networks.},
  url = {http://dblp.uni-trier.de/db/journals/corr/corr1506.html#KarpathyJL15},
  volume = {abs/1506.02078},
  year = 2015
}

@InProceedings{Zeiler14,
author="Zeiler, Matthew D.
and Fergus, Rob",
editor="Fleet, David
and Pajdla, Tomas
and Schiele, Bernt
and Tuytelaars, Tinne",
title="Visualizing and Understanding Convolutional Networks",
booktitle="Computer Vision -- ECCV 2014",
year="2014",
publisher="Springer",
OPTaddress="Cham",
pages="818--833",
OPTisbn="978-3-319-10590-1"
}

@inproceedings{Weiss18a,
    title = "On the Practical Computational Power of Finite Precision {RNN}s for Language Recognition",
    author = "Weiss, Gail  and
      Goldberg, Yoav  and
      Yahav, Eran",
    booktitle = "Proceedings of the 56th ACL",
    year = "2018",
    OPTaddress = "Melbourne, Australia",
    OPTpublisher = "Association for Computational Linguistics",
    publisher = "ACL",
    OPTurl = "https://www.aclweb.org/anthology/P18-2117",
   OPTdoi = "10.18653/v1/P18-2117",
    pages = "740--745"
}

@article{Wang18a,
  title={A Comparison of Rule Extraction for Different Recurrent Neural Network Models and Grammatical Complexity},
  author={Qinglong Wang and Kaixuan Zhang and Alexander Ororbia and Xinyu Xing and Xue Liu and C. Lee Giles},
  journal={ArXiv},
  year={2018},
  volume={abs/1801.05420}
}

@ARTICLE {Thollard05,
author = {F. Thollard and C. de la Higuera and E. Vidal and F. Casacuberta and R. C. Carrasco},
journal = {IEEE Transactions on Pattern Analysis \$ Machine Intelligence},
title = {Probabilistic Finite-State Machines-Part I},
year = {2005},
volume = {27},
number = {07},
issn = {1939-3539},
pages = {1013-1025},
publisher = {IEEE Computer Society},
address = {Los Alamitos, CA, USA},
month = {jul}
}

@ARTICLE{Vidal05, author={E. {Vidal} and F. {Thollard} and C. {de la Higuera} and F. {Casacuberta} and R. C. {Carrasco}}, journal={IEEE Transactions on Pattern Analysis and Machine Intelligence}, title={Probabilistic finite-state machines - part II}, year={2005}, volume={27}, number={7}, pages={1026-1039},}

@article{Siegelmann95,
title = "On the Computational Power of Neural Nets",
journal = "Journal of Computer and System Sciences",
volume = "50",
number = "1",
pages = "132 - 150",
year = "1995",
author = "H.T. Siegelmann and E.D. Sontag"
}

@inproceedings{Chen18,
    title = "Recurrent Neural Networks as Weighted Language Recognizers",
    author = "Chen, Yining  and
      Gilroy, Sorcha  and
      Maletti, Andreas  and
      May, Jonathan  and
      Knight, Kevin",
    booktitle = "Proceedings of the 2018 Conference of the North {A}merican Chapter of the Association for Computational Linguistics: Human Language Technologies, Volume 1 (Long Papers)",
    year = "2018",
    OPTaddress = "New Orleans, Louisiana",
    OPTpublisher = "Association for Computational Linguistics",
    OPTurl = "https://www.aclweb.org/anthology/N18-1205",
   OPTdoi = "10.18653/v1/N18-1205",
    pages = "2261--2271"
}

@book{Hopcroft06,
author = {Hopcroft, John E. and Motwani, Rajeev and Ullman, Jeffrey D.},
title = {Introduction to Automata Theory, Languages, and Computation (3rd Edition)},
year = {2006},
OPTisbn = {0321455363},
publisher = {Addison-Wesley Longman Publishing Co., Inc.},
OPTaddress = {USA}
}

@InProceedings{Casacuberta00,
author="Casacuberta, Francisco
and de la Higuera, Colin",
editor="Oliveira, Arlindo L.",
title="Computational Complexity of Problems on Probabilistic Grammars and Transducers",
booktitle="Grammatical Inference: Algorithms and Applications",
year="2000",
publisher="Springer Berlin Heidelberg",
OPTaddress="Berlin, Heidelberg",
pages="15--24"
}

@book{higu10,
  author = 	 {C.~de~la~Higuera},
  title = 	 {Grammatical inference: learning automata and grammars},
  publisher = 	 {Cambridge University Press},
  year = 	 {2010}
}

@article{Giles92,
author = {Giles, C. L. and Miller, C. B. and Chen, D. and Chen, H. H. and Sun, G. Z. and Lee, Y. C.},
title = {Learning and Extracting Finite State Automata with Second-Order Recurrent Neural Networks},
year = {1992},
OPTissue_date = {May 1992},
OPTpublisher = {MIT Press},
OPTaddress = {Cambridge, MA, USA},
volume = {4},
number = {3},
OPTissn = {0899-7667},
OPTurl = {https://doi.org/10.1162/neco.1992.4.3.393},
OPTdoi = {10.1162/neco.1992.4.3.393},
journal = {Neural Comput.},
OPTmonth = may,
pages = {393–405},
numpages = {13}
}

@InProceedings{Alquezar94,
author="Alquezar, R.
and Sanfeliu, A.",
editor="Carrasco, Rafael C.
and Oncina, Jose",
title="A hybrid connectionist-symbolic approach to regular grammatical inference based on neural learning and hierarchical clustering",
booktitle="Grammatical Inference and Applications",
year="1994",
publisher="Springer",
pages="203--211"
}

@INPROCEEDINGS{Giles91, author={C. L. {Giles} and D. {Chen} and C. B. {Miller} and H. H. {Chen} and G. Z. {Sun} and Y. C. {Lee}}, booktitle={IJCNN-91-Seattle International Joint Conference on Neural Networks}, title={Second-order recurrent neural networks for grammatical inference}, year={1991}, volume={ii}, number={}, pages={273-281 vol.2}}

@article{Zeng93,
author = {Zeng, Zheng and Goodman, Rodney M. and Smyth, Padhraic},
title = {Learning Finite Machines with Self-Clustering Recurrent Networks},
year = {1993},
issue_date = {November 1993},
publisher = {MIT Press},
address = {Cambridge, MA, USA},
volume = {5},
number = {6},
OPTissn = {0899-7667},
OPTurl = {https://doi.org/10.1162/neco.1993.5.6.976},
OPTdoi = {10.1162/neco.1993.5.6.976},
journal = {Neural Comput.},
month = nov,
pages = {976–990},
numpages = {15}
}

@article{Angluin87,
author = {Angluin, Dana},
title = {Learning Regular Sets from Queries and Counterexamples},
year = {1987},
issue_date = {November 1987},
publisher = {Academic Press, Inc.},
address = {USA},
volume = {75},
number = {2},
issn = {0890-5401},
OPTurl = {https://doi.org/10.1016/0890-5401(87)90052-6},
OPTdoi = {10.1016/0890-5401(87)90052-6},
journal = {Inf. Comput.},
OPTmonth = nov,
pages = {87–106},
numpages = {20}
}

@InProceedings{Weiss18b,
  title = 	 {Extracting Automata from Recurrent Neural Networks Using Queries and Counterexamples},
  author = 	 {Weiss, Gail and Goldberg, Yoav and Yahav, Eran},
  booktitle = 	 {Proceedings of the 35th International Conference on Machine Learning},
  pages = 	 {5247--5256},
  year = 	 {2018},
  OPTeditor = 	 {Dy, Jennifer and Krause, Andreas},
  volume = 	 {80},
  series = 	 {Proceedings of Machine Learning Research},
  OPTaddress = 	 {Stockholmsmässan, Stockholm Sweden},
  OPTmonth = 	 {10--15 Jul},
  OPTpublisher = 	 {PMLR},
OPTpdf = 	 {http://proceedings.mlr.press/v80/weiss18a/weiss18a.pdf},
OPTurl = 	 {http://proceedings.mlr.press/v80/weiss18a.html}
  }

\appendix 
\section{Report of experimental results}

\Large{\textbf{1. Running Time}} \\

\begin{table}[ht]
\begin{center}
\begin{tabular}{ccccc}
     \textbf{RNN Class} & \textbf{State size} & \textbf{G1} & \textbf{G4} & \textbf{G7}  \\
    \hline
    \multirow{0}{*}{Sigmoid order 1}& 50 & $[8.4,2.8,2.5]$ & $[13.4,3.6,2.8]$ & $[17.8,3.2,2.8]$  \\
    & 100 & $[16.4,3.6,2.8]$ & $[19.6,5.4,4.3]$ & $[22.1,5.2,4.5]$ \\
    & 150 & $[25.3,7.2,4.6]$ & $[29.5,7.9,6.8]$ & $[32.2,7.7,6.9]$\\
    \hline
    \multirow{0}{*}{\emph{tanh} order 1} & 50 & $[7.9,2.5,1.9]$ & $[8.2,3.4,2.5]$ & $[12.5,3.6,2.8]$\\
    & 100 & $[14.3,3.8,2.6]$ & $[16.2,4.9,4.3]$ & $[19.4,4.6,4.1]$\\
    & 150 & $[24.4,5.8,4.7]$ & $[28.4,7.2,6.4]$ & $[34.7,7.4,6.5]$\\
    \hline 
    \multirow{0}{*}{Sigmoid order 2} & 50 & $[6.9,1.4,1.2]$ & $[11.5,1.6,1.4]$ & $[15.4,1.3,1.3]$ \\
    & 100 & $[10.1,2.5,2.4]$& $[16.4,2.7,2.3]$ & $[22.4,2.6,2.2]$ \\ & 150 & $[14.6,4.4,3.7]$ & $[19.4,5.3,4.7]$ &  $[26.0,5.1,4.4]$\\
    \hline 
    \multirow{0}{*}{\emph{tanh} order 2} & 50 & $[7.2,1.5,1.1]$ & $[9.4,2.9,1.8]$ & $[6.4,2.8,2.0]$ \\
    & 100 & $[13.4,3.6,2.8]$ & $[22.2,4.2,3.1]$ & $[6.4,4.2,3.3]$ \\ 
    & 150 & $[23.8,5.6,4.3]$ & $[29.0.4,6.8,5.3]$ & $[35.7,6.5,5.1]$\\
    \hline 
    \multirow{0}{*}{LSTM} & 50 & $[24.2,2.2,1.8]$ & $[32.4,4.6,3.1]$ & $[44.4,3.2,2.8]$ \\
    & 100 & $[-,4.2,2.8]$ & $[-,7.2,5.9]$ & $[-,7.5,5.7]$\\
    & 150 & $[-,7.1,4.8]$ & $[-,7.6,5.2]$ & $[-,7.4,5.1]$ \\
    \hline 
    \multirow{0}{*}{GRU} & 50 & $[26.1,3.6,2.8]$ & $[31.8,3.6,2.8]$ & $[42.1,3.6,2.8]$ \\
    & 100 & $[-,4.6,3.2]$ & $[-,6.9,6.2]$ & $[-,7.2,6.4]$\\
    & 150 & $[-,7.9,5.2]$ & $[-,8.8,5.6]$ & $[-,7.3,5.4]$\\
\end{tabular}
\end{center}
\caption{Average running time in seconds of different algorithms for different class of RNN architectures: In each list $[.,.,.]$, the first element represents performances of the quantization algorithm, the second represents the clustering algorithm with K-means($K=15$), and the last is Weiss et al.\cite{Weiss18b} adaptation of the $L*$. Cells with $-$ makes reference to experiments that fail to achieve termination after a crash memory. $Gx$ in the table is abbreviation of Tomita Grammar number x}
\end{table}

\newpage

\Large{\textbf{2. Average Success Rate}}\\

\begin{table}[ht]
\begin{center}
\begin{tabular}{ccccc}
     \textbf{RNN Class} & \textbf{State size} & \textbf{G1} & \textbf{G4} & \textbf{G7}  \\
    \hline
    \multirow{0}{*}{Sigmoid order 1}& 50 & $[0,100,100]$ & $[0,80.0,100]$ & $[0,100,100]$  \\
    & 100 & $[0,80,100]$ & $[0,80,80]$ & $[0,60,100]$ \\
    & 150 & $[0,60,80]$ & $[0,40,80]$ & $[0,60,100]$\\
    \hline
    \multirow{0}{*}{\emph{tanh} order 1} & 50 & $[20,100,100]$ & $[0,80,80]$ & $[0,80,100]$\\
    & 100 & $[0,60,100]$ & $[0,80,80]$ & $[0,80,100]$\\
    & 150 & $[0,20,80]$ & $[0,20,60]$ & $[0,60,100]$\\
    \hline 
    \multirow{0}{*}{Sigmoid order 2} & 50 & $[0,100,100]$ & $[0,100,100.0]$ & $[0,100.0,100.0]$ \\
    & 100 & $[0,60,80]$& $[0,80.0,100.0]$ & $[0,100,100]$ \\ 
    & 150 & $[0,40,80]$ & $[0,20,60]$ &  $[0,60,80]$\\
    \hline 
    \multirow{0}{*}{\emph{tanh} order 2} & 50 & $[20,100,100]$ & $[0,80,100]$ & $[0,80,100]$ \\
    & 100 & $[0,60,100]$ & $[0,40,80]$ & $[0,80,100]$ \\ 
    & 150 & $[0,60,80]$ & $[0,20,80]$ & $[0,80,100]$\\
    \hline 
    \multirow{0}{*}{LSTM} & 50 & $[0,80,100]$ & $[0,80,100]$ & $[0,80,100]$ \\
    & 100 & $[-,40,100]$ & $[-,40,80]$ & $[-,80,100]$\\
    & 150 & $[-,20,80]$ & $[-,20,80]$ & $[-,80,100]$ \\
    \hline 
    \multirow{0}{*}{GRU} & 50 & $[0,80,80]$ & $[0,80,100]$ & $[0,100.0,100.0]$ \\
    & 100 & $[-,60,80]$ & $[-,40,80]$ & $[-,80,100]$\\
    & 150 & $[-,20,80]$ & $[-,60,80]$ & $[-,60,100]$\\
\end{tabular}
\end{center}
\caption{Average success rate in $\%$ of different algorithms for different class of RNN architectures for 5 different RNN training runs. In each list $[.,.,.]$, the first element represents performances of the quantization algorithm, the second represents the clustering algorithm with K-means($K=15$), and the last is Weis et al.\cite{Weiss18a} adaptation of the $L*$. Cells with $-$ makes reference to experiments that fail to achieve termination after a crash memory. $Gx$ in the table is abbreviation of Tomita Grammar number x}
\end{table}

\end{document}